\definecolor{navyblue}{rgb}{0, 0, 0.50196} 
\definecolor{lightgray}{gray}{0.9}
\definecolor{C0}{HTML}{1F77B4}
\definecolor{C1}{HTML}{FF7F0E}
\definecolor{C2}{HTML}{2CA02C}
\definecolor{C3}{HTML}{D62728}
\newcommand{\Note}[1]{}
\renewcommand{\Note}[1]{#1}  % comment out this definition to suppress all Notes
\def\1{\bm{1}}
\def\d{\mathrm{d}}
\def\eps{{\epsilon}}
\def\fX{{\bm{X}}}
\def\bX{{\bm{X}}}
\def\Pm{\mathcal{A}}
\def\m{{\vy}} 
\def\A{{\mA}}
\def\fH{{\bm{H}}}
\def\bH{{\bm{H}}}
\def\rv{{\textnormal{v}}}
\def\dd{{\mathrm{d}}}
\def\vh{{\bm{h}}}
\def\vx{{\bm{x}}}
\def\vy{{\bm{y}}}
\def\mA{{\bm{A}}}
\def\mH{{\bm{H}}}
\def\mI{{\bm{I}}}
\def\mX{{\bm{X}}}
\def\mY{{\bm{Y}}}
\DeclareMathAlphabet{\mathsfit}{\encodingdefault}{\sfdefault}{m}{sl}
\SetMathAlphabet{\mathsfit}{bold}{\encodingdefault}{\sfdefault}{bx}{n}
\def\gH{{\mathcal{H}}}
\def\gN{{\mathcal{N}}}
\def\gZ{{\mathcal{Z}}}
\def\sP{{\mathbb{P}}}
\def\sR{{\mathbb{R}}}
\def\erf{{\text{erf}}}
\newcommand{\E}{\mathbb{E}}
\newcommand{\R}{\mathbb{R}}
\let\log\relax
\DeclareMathOperator{\log}{ln}
\def\P{{\mathbb{P}}}
\def\Q{{\mathbb{Q}}}
\def\R{{\mathbb{R}}}
\newcommand{\KL}{D_{\mathrm{KL}}}
\DeclareMathOperator*{\argmin}{arg\,min}
\definecolor{pearDark}{HTML}{2980B9}
\newcommand\cbox[1]{\colorbox{pearDark!20}{$#1$}}
\newcommand\cboxnew[1]{#1}
\renewcommand{\l}[1]{\ensuremath{{#1}_\theta}} % is learnable
\renewcommand{\c}[1]{\ensuremath{\mathbf{#1}}}    % has channels
\newcommand{\e}[1]{\ensuremath{\hat{#1}}}  % is estimator
\newcommand{\nn}{\ensuremath{\l{\c{f}}}}  % neural network
\newcommand{\noise}{\ensuremath{\bm{\varepsilon}}}  % sampled noise
\newcommand{\dist}{\ensuremath{\mathcal{P}}}  % distribution law
\newcommand{\distO}{\ensuremath{\dist_0}}  % noise distribution
\newcommand{\distT}{\ensuremath{\dist_T}}  % noise distribution
\newcommand{\distnoise}{\dist_\text{noise}}  % noise distribution
\newcommand{\distsampling}{\dist_\text{sampling}}  % sampling distribution
\newcommand{\distdata}{\dist_\text{data}}  % data distribution
\renewcommand{\rv}[1]{\ensuremath{\MakeUppercase{\c{#1}}}}  % random variable
\def\mx{{\bm{x}}}
\def\my{{\bm{y}}}
\newcommand{\fwd}[1]{%
  \tikz[baseline=(char.base)]{
    \node[inner sep=0pt, outer sep=0pt] (char) {$#1$};
    \draw[line width=0.2pt] ($(char.north west)+(0.05em,0.25em)$) -- ($(char.north east)+(0em,0.25em)$);
    \draw[line width=0.2pt] ($(char.north east)+(0em,0.25em)$) -- ($(char.north east)+(-0.15em,0.15em)$);
  }%
} 
\newcommand{\bwd}[1]{%
  \tikz[baseline=(char.base)]{
    \node[inner sep=0pt, outer sep=0pt] (char) {$#1$};
    \draw[line width=0.2pt] ($(char.north west)+(0em,0.25em)$) -- ($(char.north east)+(-0.05em,0.25em)$);
    \draw[line width=0.2pt] ($(char.north west)+(0em,0.25em)$) -- ($(char.north west)+(0.15em,0.15em)$);
  }%
} 
\newcommand{\law}[1]{\mathrm{Law}\left(#1\right)}
\colorlet{shadecolor}{gray!20}
\newcommand{\new}{\textcolor{blue}{(new)}}
\Crefname{algorithm}{Alg.}{Algs.}
\Crefname{equation}{Eq.}{Eqs.}
\Crefname{figure}{Fig.}{Figs.}
\Crefname{tabular}{Tab.}{Tabs.}
\Crefname{section}{Sec.}{Secs.}
\Crefname{proposition}{Prop.}{Props.}
\Crefname{appendix}{App.}{Apps.}
\Crefname{corollary}{Cor.}{Cors.}
\newtheorem{theorem}{Theorem}[section]
\newtheorem{proposition}[theorem]{Proposition}
\newtheorem{lemma}[theorem]{Lemma}
\newtheorem{corollary}[theorem]{Corollary}
 \newtheorem{remark}{Remark}
\newtheorem*{rep@theorem}{\rep@title}
\newcommand{\newreptheorem}[2]{%
\newenvironment{rep#1}[1]{%
 \def\rep@title{#2 \ref{##1}}%
 \begin{rep@theorem}}%
 {\end{rep@theorem}}}
\newcommand{\Alex}[1]{\leavevmode \textcolor{blue}{Alex: #1}}
\newif\ifcontrol
\newif\iffine
\title{DEFT: Efficient Fine-Tuning of Diffusion Models by Learning the Generalised $h$-transform}
\author{%
    Alexander Denker\thanks{equal contributions}\\
    University College London\\
    \texttt{a.denker@ucl.ac.uk}  \\
    \And
  Francisco Vargas*\\
  University of Cambridge \\
  \texttt{fav25@cam.ac.uk} \\
  \And
  Shreyas Padhy*\\
  University of Cambridge \\
  \texttt{sp2058@cam.ac.uk} \\
  \AND
  Kieran Didi*\\
  University of Cambridge \\
  \texttt{ked48@cam.ac.uk}\\
  \And
  Simon Mathis*\\
  University of Cambridge \\
  \texttt{svm34@cam.ac.uk} \\
  \And
  Vincent Dutordoir\\
  University of Cambridge \\
  \texttt{vd309@cam.ac.uk} \\
  \AND
  Riccardo Barbano\\
  Atinary Technologies\\
  \texttt{rbarbano@atinary.com}\\
  \And
  Emile Mathieu\\
  University of Cambridge \\
  \texttt{ebm32@cam.ac.uk} \\
    \And
  Urszula Julia Komorowska \\
 University of Cambridge \\
  \texttt{ujk21@cam.ac.uk} \\
    \And
  Pietro Lio \\
  University of Cambridge \\
  \texttt{pl219@cam.ac.uk} \\
}
\begin{document}

\maketitle

\begin{abstract}
Generative modelling paradigms based on denoising diffusion processes have emerged as a leading candidate for \textit{conditional} sampling in inverse problems. 
In many real-world applications, we often have access to large, expensively trained unconditional diffusion models, which we aim to exploit for improving conditional sampling.
Most recent approaches are motivated heuristically and lack a unifying framework, obscuring connections between them. Further, they often suffer from issues such as being very sensitive to hyperparameters, being expensive to train or needing access to weights hidden behind a closed API. In this work, we unify conditional training and sampling using the mathematically well-understood \emph{Doob's h-transform}. This new perspective allows us to unify many existing methods under a common umbrella. Under this framework, we propose DEFT \textit{(Doob's h-transform Efficient FineTuning)}, a new approach for conditional generation that simply fine-tunes a very small network to quickly learn the conditional $h$-transform, while keeping the larger unconditional network unchanged. DEFT is much faster than existing baselines while achieving state-of-the-art performance across a variety of linear and non-linear benchmarks. On \emph{image reconstruction} tasks, we achieve speedups of up to 1.6$\times$, while having the best perceptual quality on natural images and reconstruction performance on medical images. Further, we also provide initial experiments on protein motif scaffolding and outperform reconstruction guidance methods.%On \emph{motif scaffolding}, we achieve \textcolor{red}{comparable performance} to RFDiffusion with a model that is \textcolor{red}{$0.036\%$ of the size}.
\end{abstract}

\section{Introduction}
\label{sec:introduction}
Denoising diffusion models are a powerful class of generative models where noise is gradually added to data samples until they converge to pure noise.
The time reversal of this noising process then allows noise to be transformed into samples.  This process has been widely successful in generating high-quality images \citep{ho2020denoising} and has more recently shown promise in designing protein backbones that have been validated in experimental protein design workflows \citep{watson2023novo}. Recently, there has been much interest in \textit{conditioning} the time reversal process, in order to generate samples that are subject to an observed condition. Conditional sampling requires the posterior score $\nabla_{\vx} \ln p_t(\vx | \mY = \m)$, given some observation $\m$. As diffusion models typically approximate the score of the underlying distribution, i.e., $s_t^{\theta^*}\!(\vx) \approx \nabla_\vx \log p_t(\vx)$, a pre-trained diffusion model can be leveraged using Bayes' theorem
\begin{align}
    \label{eq:bayes_theorem}
            \nabla_{\vx} \ln p_t(\vx | \mY = \m) \approx s^{\theta^*}_t\!(\vx) + \nabla_{\vx} \ln p_t(\mY = \m | \vx),
\end{align}
to approximate the posterior score. The time-dependent likelihood $\nabla_{\vx} \ln p_t(\mY = \m | \vx)$ is often termed \textit{guidance} due to its interpretation to guide the reverse process to the conditioned inputs, and is unfortunately analytically intractable. To tackle this problem, several approximations for the \textit{guidance} have been proposed; see, for example, \cite{chung2022diffusion,finzi2023user,jalal2021robust,rout2024solving,song2022pseudoinverse,song2021solving} and further discussion in Appendix~\ref{sec:related_work}. Instead of relying on the decomposition \eqref{eq:bayes_theorem}, another line of work aims to learn the posterior score directly \citep{batzolis2021conditional,ho2022classifier}, which requires expensive training for new conditional sampling tasks, and access to large amounts of paired data points.

In the setting of conditional generation with diffusion models, our primary goal is to leverage large pre-trained foundation models which are prevalent in applications, but which typical front-end users are not able to backpropagate through, making approaches like \cite{chung2022diffusion,finzi2023user} infeasible. This might be due to their prohibitive computation times or because they lie behind an API preventing the usage of autodiff frameworks.

\begin{figure}[t]
    \centering
    \includegraphics[width=0.95\textwidth]{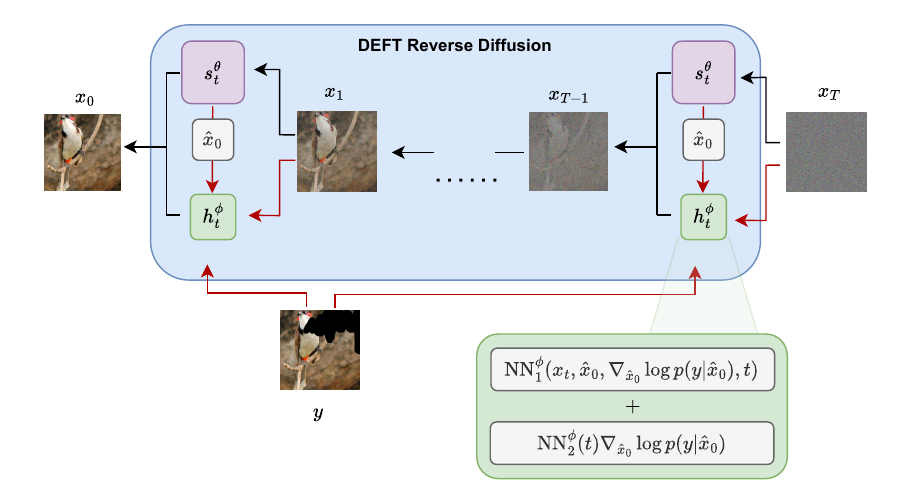}
    \vspace{-0.3cm}
    \caption{DEFT reverse diffusion setup. The pre-trained unconditional diffusion model $s_t^\theta$ and the fine-tuned $h$-transform $h_t^\phi$ are combined at every sampling step. We propose a special network to parametrise the $h$-transform including the guidance term $\nabla_{\hat{\vx}_0} \log p(\m|\hat{\vx}_0)$ as part of the architecture. Here $\hat{\vx}_0$ denotes the unconditional denoised estimate given $s_t^\theta(\vx_t)$. During training, we only need to fine-tune $h_t^{\phi}$ (usually $4$-$9\%$ the size of $s_t^{\theta}$) using a small dataset of paired measurements, keeping $s_\theta^t$ fixed. During sampling, we do not need to backpropagate through either model, resulting in speed-ups during evaluation.}
    \label{fig:condov}
    \vspace{-0.3cm}
\end{figure}

In this work, we propose a unified framework for conditional generation using Doob's $h$-transform, a well-known result in the stochastic differential equations (SDE) literature \citep{de2021simulating,rogers2000diffusions,sarkka2019applied,ye2022first}. %and introduce a new perspective that provides a new theoretical interpretation of existing approaches.
Under this framework, we propose DEFT \textit{(Doob's h-transform Efficient FineTuning)}, an algorithm that estimates the time-dependent likelihood directly from data, i.e., $h^* = \nabla_{\vx} \ln p_t(\mY = \m | \vx)$, while being able to leverage an existing pre-trained unconditional model. We learn the guidance term ($h$-transform) efficiently using 1) smaller networks, and 2) a small training dataset of paired data points and corresponding observations. Furthermore, through connections to stochastic control, we propose a novel network architecture for general-purpose fine-tuning, which, in conjunction with our proposed loss, achieves competitive results across a series of inverse problems in imaging and protein design, while having a much lower computational cost.

\section{Conditioning diffusions via the \texorpdfstring{$h$}{h}-transform}  \label{htrans}

In this section, we explore the formal mechanism to condition the boundary points of an SDE mathematically, and connect it to existing methodologies for conditioning diffusions in generative modelling. For a more rigorous background to denoising diffusion models, see Appendix~\ref{app:background_diffmodels}. Let us first recap the score-based generative modelling framework of \cite{song2021Scorebased}; we start with a forward SDE, which progressively transforms the target distribution $\distO$ (e.g. $\distO=p_{\mathrm{data}}$)
\begin{align}
    \dd \bX_t &= f_t(\bX_t) \,\dd t + \sigma_t \fwd{ \dd \rv{W}}_t, \quad \bX_0 \sim  \distO, \label{eq:sdedatafor}
\end{align}
with drift $f_t$ and diffusion $\sigma_t$. Under some regular assumptions, there exists a corresponding reverse SDE with corresponding drift $\bar{b}_t$ \citep{anderson1982reverse}, that allows us to take samples from $\distT$ (typically $\gN(0,\mathbf{I})$) and denoise them to generate samples from $\distO$,
\begin{align}
    \label{eq:back_sde}
    \dd \bX_t &= \left( f_t(\bX_t) - \sigma_t^2 \nabla_{\bX_t} \ln p_t(\bX_t) \right) \,\dd t + \sigma_t \bwd{ \dd \rv{W}}_t, \quad \bX_T \sim  \distT, 
\end{align}
where the time flows backwards, and $\bar{b}_t = f_t(\bX_t) - \sigma_t^2 \nabla_{\bX_t} \ln p_t(\bX_t)$.
% and with transition densities $\bwd{p}_{t|s}$.
The goal of conditional sampling is to condition the reverse SDE on a particular observation, i.e., to produce samples that satisfy constraints. For example, we might want to use \eqref{eq:back_sde} to generate samples where we already know some dimensions of the sample (e.g. knowing some pixels of the image a-priori in image inpainting). Doob's $h$-transform \citep{rogers2000diffusions,de2021simulating} provides a formal mechanism for conditioning an SDE to hit an event at a given time. We will show that existing methods for conditional generative modelling arise as approximate instances of this proposed framework.
% Doob's $h$-transform enables diffusion models to both satisfy \emph{hard} constraints, as for example described by $\Pm(\bX_0) = \my$, and \emph{soft} constraint in the context of noisy observations.
%\paragraph{Conditioning an SDE}
%Doob's transform provides a formal mechanism for conditioning a stochastic differential equation (SDE) to hit an event at a given time.
Formally, we have:

%\begin{mdframed}[style=highlightedBox]
\begin{proposition}\label{prop:htrans}(Doob's $h$-transform \citep{rogers2000diffusions})
Consider the reverse SDE in Eqn.~\eqref{eq:back_sde}.
The conditioned process $\mX_t | \mX_0 \in B$ is a solution of
\begin{align}
    % \dd \fZ_t &= (a_t(\fZ_t) +\sigma_t^2\nabla_{\mZ_t}\ln \fwd{P}_{T|t}^a(\mY_T \in A | \mZ_t))\,\dd t + \sigma_t  \fwd{ \dd \rv{W}}_t, \quad \fY_0 \sim  \mu  \label{eq:fwd_sde_h}, \\
\dd \bH_t &= \left(\bwd{b}_t(\bH_t) - \sigma_t^2 \cbox{\nabla_{\mH_t}\ln \bwd{p}_{0|t}(\mX_0 \in B | \mH_t) }\right) \,\dd t + \sigma_t \bwd{ \dd \rv{W}}_t, \quad \bH_T \sim  \distT \label{eq:back_sde_h},
\end{align}
with a backward drift $\bwd{b}_t(\bH_t) = f_t(\bH_t) - \sigma_t^2 \nabla_{\bH_t} \ln p_t(\bH_t)$, such that $\law{\bH_s | \bH_t} = \fwd{p}_{s|t, 0}(\vx_s| \vx_t, \vx_0 \in B)$ and $\mathbb{P}(\bX_0 \in B) = 1$. % (similarly for a forward SDE).
\end{proposition}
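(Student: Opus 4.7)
The plan is to reduce the claim to the classical forward-time Doob $h$-transform by time-reversing the process, and then translate the result back. Concretely, set $\tilde{X}_s := \bX_{T-s}$ for $s \in [0,T]$, so that $\tilde{X}$ is a forward-time It\^o diffusion with drift $\tilde{b}_s$ and diffusion $\tilde{\sigma}_s := \sigma_{T-s}$. Conditioning $\bX$ on $\bX_0 \in B$ is then exactly conditioning $\tilde{X}$ on the terminal event $\tilde{X}_T \in B$, which is the classical bridge set-up for which Doob's $h$-transform is standard.

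For the forward problem I would define
\begin{equation*}
    h(s, \vx) := \mathbb{E}\bigl[\mathbf{1}_{\tilde{X}_T \in B} \mid \tilde{X}_s = \vx\bigr] = \bwd{p}_{0 \mid T-s}(\bX_0 \in B \mid \bX_{T-s} = \vx),
\end{equation*}
and note via the tower property that $h(s,\tilde{X}_s)$ is a non-negative martingale, so $h$ is space-time harmonic for the forward generator of $\tilde{X}$. Bayes' rule then gives, for any $A \in \mathcal{F}_s$,
\begin{equation*}
    \mathbb{P}\bigl(\tilde{X} \in A \mid \tilde{X}_T \in B\bigr) = \frac{1}{\mathbb{P}(\tilde{X}_T \in B)} \, \mathbb{E}\bigl[\mathbf{1}_A \, h(s, \tilde{X}_s)\bigr],
\end{equation*}
identifying the Radon--Nikodym density of the bridge measure on $\mathcal{F}_s$ as proportional to $h(s,\tilde{X}_s)$. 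Applying It\^o's formula to $\log h(s,\tilde{X}_s)$ and using the harmonicity of $h$ collapses the drift and second-order terms to a pure stochastic integral, so Girsanov's theorem yields the classical forward drift shift $+\tilde{\sigma}_s^2 \nabla \log h(s,\tilde{X}_s)$ for the conditioned process.

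Finally I would undo the time-reversal via $\bH_t := \tilde{X}^{\mathrm{bridge}}_{T-t}$. Substituting $s = T-t$, $\tilde{\sigma}_s = \sigma_t$, and $h(s,\vx) = \bwd{p}_{0|t}(\bX_0 \in B \mid \bX_t = \vx)$, the forward shift $+\tilde{\sigma}_s^2 \nabla \log h$ becomes the reverse-time shift $-\sigma_t^2 \nabla_{\bH_t} \log \bwd{p}_{0|t}(\bX_0 \in B \mid \bH_t)$ displayed in Eqn.~\eqref{eq:back_sde_h}. The terminal identity $h(T,\cdot) = \mathbf{1}_{\cdot \in B}$ forces $\mathbb{P}(\bH_0 \in B) = 1$, while the transition identity $\law{\bH_s \mid \bH_t} = \bwd{p}_{s|t,0}(\vx_s \mid \vx_t, \vx_0 \in B)$ is inherited from the analogous bridge transition for $\tilde{X}$. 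The main obstacle is sign bookkeeping under time-reversal: the minus sign in Eqn.~\eqref{eq:back_sde_h} is precisely an artefact of the direction of time, and any inconsistency in the conventions for $\bwd{\mathrm{d}W}_t$ or the backward filtration would flip this correction, so one must pin down a consistent notion of backward It\^o integral throughout before matching terms.
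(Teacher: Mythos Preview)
Your proposal is correct and aligns with the paper's treatment. The paper does not give a formal proof of this proposition; it cites it as a classical result from \cite{rogers2000diffusions} and only offers an intuitive sketch in Appendix~\ref{app:intuit}, whose three-step description (time-reverse the SDE we want to condition, impose the condition on the resulting forward process, then time-reverse once more) is exactly the strategy you carry out. Your version supplies the rigorous backbone the paper omits---the martingale property of $h(s,\tilde X_s)$, the Radon--Nikodym identification via Bayes, and the Girsanov drift shift---so there is nothing to correct, and your caution about sign bookkeeping under the backward It\^o convention is well placed.
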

%\end{mdframed}
\vspace{-0.2cm}
Note, that we will refer to the conditional process with $\bH_t$ and to the unconditional process with $\bX_t$. Doob's $h$-transform shows that by conditioning a diffusion process to hit a particular event $\mX_0 \in B$ at a boundary time, the resulting conditional process is itself an SDE with an \textit{additional drift term} (shown in the blue box above).
Furthermore, the resulting SDE will hit the specified event within a finite time $T$.
The function $\cboxnew{h(t, \bH_t) \triangleq \bwd{P}_{0|t}(\mX_0 \in B\mid \bH_t)}$ is referred to as the \emph{$h$-transform} \citep{rogers2000diffusions,de2021simulating}. See also Appendix~\ref{app:hard_guidance} for a discussion about the connection to reconstruction guidance methods.

Rather than conditioning an SDE on a deterministic event, one is often interested in a posterior arising from noisy observations (e.g. noisy inverse problems)
\begin{align}
    \mY = \text{noisy}(\Pm(\mX_0)), \,\, \mX_0 \sim p_{\mathrm{data}},
\end{align}
where $\Pm$ is a forward operator, ``$\text{noisy}$'' describes a noise process and unlike the classical $h$-transform, we are not enforcing a deterministic condition such as $\Pm(\mX_0) = \mY$. We typically assume we can evaluate and sample from the likelihood $p(\m| \fX = \mx_0)$. Our goal is to sample from the posterior $p(\vx_0 | \mY = \vy) = p(\vy | \vx_0) p_{\mathrm{data}}(\vx_0) / p(\vy)$. %\citep{song2021solving,chung2022diffusion,chung2022improving}.
Sampling from the posterior $p(\vx_0 | \mY = \vy) $ can be achieved by a generalisation of the $h$-transform that build on results in~\citep{vargas2021bayesian}, given as follows:

\begin{mdframed}[style=highlightedBox]
\begin{proposition}(Generalised $h$-transform)
\label{prop:noisy_cond}
Given the following backwards SDE with marginals $p_t$
\begin{align} 
    \dd \bX_t &=  \bwd{b}_t(\bX_t) \,\dd t + \sigma_t\; \bwd{ \dd \rv{W}}_t,  \quad \bX_T \sim  \sP_T ,\label{eq:rev_sde2trans_21}
\end{align}
then it follows that the backward SDE 
\begin{align} 
    \bH_T &\sim  Q_T^{f_t} [p(\vx_0|\vy) ] = \int \fwd{p}_{T|0}(\vx| \vx_0) p(\vx_0|\vy) \dd \vx_0 \nonumber\\
    \dd \bH_t &= \left( \bwd{b}_t(\bH_t) - \sigma^2_t  \cbox{\nabla_{\bH_t} \ln p_{y|t}(\mY=\m|\bH_t)}\right)\,\dd t + \sigma_t\; \bwd{ \dd \rv{W}}_t, \label{eq:rev_sde2trans_2}
\end{align}
satisfies $\law{\bH_0} = p(\vx_0 | \mY=\m)$
with $p_{y|t}(\mY=\m|\cdot) = \int p(\mY=\m|\vx_0)\bwd{p}_{0|t}(\vx_0 |\cdot) \dd \vx_0$. We recover guidance based diffusions via $\bwd{b}_t(\bH_t) = f_t(\bH_t) - \sigma_t^2 \nabla_{\bH_t} \ln p_t(\bH_t)$. 
\end{proposition}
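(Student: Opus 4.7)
The plan is to recognise that conditioning on $\mY = \m$ modifies only the initial law of the forward process (since $\mY$ is a function of $\mX_0$ and independent noise, so conditional on $\mX_0$ it is independent of the future trajectory), and then to use Anderson's time-reversal formula to read off the drift of the conditioned backward SDE. The Bayesian score decomposition $\nabla_\vx \ln p_t(\vx \mid \m) = \nabla_\vx \ln p_t(\vx) + \nabla_\vx \ln p_{y|t}(\m \mid \vx)$ will supply the extra guidance term that appears in the highlighted drift.

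First I would verify the identification of $p_{y|t}$ in the statement: using $\bwd{p}_{0|t}(\vx_0 \mid \vx) = \fwd{p}_{t|0}(\vx \mid \vx_0)\, p_0(\vx_0)/p_t(\vx)$, a direct calculation gives
\begin{align*}
p_{y|t}(\m \mid \vx)\, p_t(\vx) \;=\; \int p(\m \mid \vx_0)\, \fwd{p}_{t|0}(\vx \mid \vx_0)\, p_0(\vx_0)\, \dd \vx_0,
\end{align*}
and dividing by $p(\m)$ yields $p_t(\vx \mid \m)$. Hence Bayes' rule gives $p_t(\vx \mid \m) \propto p_{y|t}(\m \mid \vx)\, p_t(\vx)$ so that taking $\nabla_\vx \ln (\cdot)$ produces the desired additive decomposition, with the constant $\ln p(\m)$ dropping out.

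Next I would argue that the conditional process $(\mX_t \mid \mY=\m)_{t\in[0,T]}$ is itself a diffusion obeying the same forward SDE $\dd \mX_t = f_t(\mX_t)\,\dd t + \sigma_t\,\fwd{\dd \rv{W}}_t$, but initialised at $p(\vx_0 \mid \m) \propto p(\m\mid\vx_0)\, p_{\mathrm{data}}(\vx_0)$. This follows from the Markovian structure: conditional on $\mX_0$, the law of the trajectory does not depend on $\mY$. Its marginal at time $T$ is therefore $\int \fwd{p}_{T|0}(\vx\mid\vx_0)\, p(\vx_0\mid\m)\, \dd\vx_0 = Q_T^{f_t}[p(\vx_0\mid\m)]$, matching the terminal law of $\bH_T$ in the proposition. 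Applying Anderson's reversal to this conditional diffusion yields backward drift $f_t(\vx) - \sigma_t^2 \nabla_\vx \ln p_t(\vx\mid\m)$, which by the previous step equals $\bwd{b}_t(\vx) - \sigma_t^2 \nabla_\vx \ln p_{y|t}(\m\mid\vx)$ whenever $\bwd{b}_t = f_t - \sigma_t^2 \nabla \ln p_t$; this is precisely the setting in which guidance-based diffusions live, as noted in the statement.

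The main obstacle I expect is not the algebra but the regularity bookkeeping required to make Anderson's reversal applicable to the conditional process and to justify interchanging $\nabla_\vx$ with the integral defining $p_{y|t}$. Under standard assumptions on $f_t$, $\sigma_t$, smoothness and positivity of $p_t$, and integrability of $p(\m\mid\vx_0)$ against the transition kernel (as routinely imposed in the diffusion-model literature cited in Proposition~\ref{prop:htrans}), these steps go through and the identification $\law{\bH_0} = p(\vx_0\mid\mY=\m)$ follows from uniqueness of the Fokker–Planck evolution with the prescribed terminal condition.
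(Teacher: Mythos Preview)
Your proposal is correct and follows essentially the same route as the paper's proof: both start the forward SDE from the posterior $p(\vx_0\mid\vy)$ (you justify this via the Markov/conditional-independence observation, the paper simply writes it down), compute the resulting time-$t$ marginal and show it factors as $p_t(\vx)\,p_{y|t}(\m\mid\vx)/p(\m)$ via the Bayes identity $\bwd{p}_{0|t}(\vx_0\mid\vx)=\fwd{p}_{t|0}(\vx\mid\vx_0)p_0(\vx_0)/p_t(\vx)$, and then apply Anderson's reversal to read off the backward drift. The paper additionally stresses that the terminal law must be $Q_T^{f_t}[p(\vx_0\mid\vy)]$ rather than $\gP_T$, which you also note.
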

\end{mdframed}
\vspace{-0.2cm}
Here $Q_T^{f_t} [\pi(\vx_0) ] = \int \fwd{p}_{T|0}(\vx| \vx_0) \pi(\vx_0) \dd \vx_0$ is the transition operator of the forward process. Note, that the initial distribution $Q_T^{f_t}[p(\vx_0|\vy) ]$ of the controlled SDE differs from the unconditional SDE. However, in Proposition  \ref{prop:initial_prop} we show that for the VP-SDE the difference between them gets exponentially small for increasing $T$. To summarise, the above result gives a generalisation of the $h$-transform that allows sampling from posteriors; notice that it recovers the traditional $h$-transform in the no-noise setting. Whilst this more general formulation of the $h$-transform has been explored in unconditional generative modelling \citep{ye2022first}, this is the first work to cast conditional generative modelling in this light. We refer to the term in blue as the \textit{generalised $h$-transform} henceforth.

Proposition~\ref{prop:noisy_cond} provides theoretical backing to methodologies such as DPS \citep{chung2022diffusion} or $\Pi$GDM \citep{song2022pseudoinverse}, in which the reverse SDE~\eqref{eq:rev_sde2trans_2} is used to solve noisy inverse problems. For a careful derivation of Proposition~\ref{prop:noisy_cond} see Appendix \ref{sec:genh}. While prior works have explored using Bayes' rule to decompose the conditional score, we provide rigorous arguments for intermediate steps, and carefully formalise the connection between conditional generative modelling and the $h$-transform, providing a concise result. This framework is flexible enough to also encompass prior work on conditional score matching, see e.g., \cite{batzolis2021conditional,ho2022classifier}, and the discussion Appendix~\ref{app:amort_cond_training}.

\section{Learning the generalised \texorpdfstring{$h$}{h}-transform}
\label{sec:training_objectives}
Prior works either learn the posterior score from scratch, see e.g. \citep{batzolis2021conditional,ho2022classifier}, or use approximations to the generalised $h$-transform, see e.g. \citep{chung2022diffusion,kawar2022denoising}. Instead, we propose a method to learn the generalised $h$-transform. We refer to this process as fine-tuning, as the pre-trained unconditional network remains unchanged and only the approximation to the generalised $h$-transform is learned. Our main result is given in the following theorem, where we give several representations of the generalised $h$-transform. 

\begin{mdframed}[style=highlightedBox]
\begin{theorem}(Representations of conditional SDE sampling) 
\label{theorem:representation_h}
For a given $\m \sim \text{noisy}(\Pm(\vx_0))$, let $\Q$ be the path measure of the conditional SDE 
\begin{align} \label{eq:vpsderev0_main}
     \dd \bH_t = \left( f_t(\bH_t) - \sigma^2_t \left(\nabla_{\bH_t} \ln p_t(\bH_t)+ h_t(\bH_t)\right) \right)\dd t + \sigma_t\; \bwd{ \dd \rv{W}}_t,
\end{align}
where $\bH_T \sim  Q_T^{f_t} [p(\vx_0|\vy) ]$. The generalised $h$-transforms admits the following representations:
\begin{enumerate}
    \item[1)] The path measure induced by the $h$-transformed SDE satisfies $\dd\Q^* = \dd\P \frac{\dd p(\vx_0|\vy)}{\dd \P_0}$,
    %\begin{align}
    %    \dd\Q^* = \dd\P \frac{\dd p(\vx_0|\vy)}{\dd \P_0} \nonumber
    %\end{align}
    where $\P$ is the path measure of the unconditioned SDE and $\P_0$ is it's time $0$ marginal.
    \item[2)] The $h$-transform admits a \textbf{denoising score matching} representation
    \begin{align}
h_t^* &= \argmin_{h_t \in \gH} \mathcal{L}_\text{SM}^\m(h_t) \nonumber \\ &\mathcal{L}_\text{SM}^\m(h_t) \coloneqq  \!\!\!\!\!\! \!\!
    \mathop{\mathbb{E}}_{\substack{ \fX_0 \sim p(\vx_0 | \m) \\ t\sim \mathrm{U}(0,T), \fH_t \sim p_{t|0}(\vx_t|\vx_0)}  }\!\!\!\!\!\!\!\!\!\!\left[ \left\| \left({h_t(\fH_t)}\!+\!\nabla_{\fH_t} \ln p_t(\fH_t) \right)\!-\!\nabla_{\fH_t}\ln \fwd{p}_{t|0}(\fH_t  |\fX_0) \right\| ^2\right] \nonumber \end{align}
    \item[3)] The $h$-transform admits the following \textbf{stochastic control} formulation
    \begin{align}
    h_t^* =\argmin_{h_t \in \gH} \left\{ \mathcal{L}_\text{SC}^\m(h_t) \coloneqq \E_\Q\left[\frac{1}{2}\int_0^T \sigma_t^{2}|| h_t(\bH_t) ||^2 \dd t\right] - \E_{\bH_0 \sim \Q_0} [\ln p(\vy| \bH_0)] \right\} \nonumber,
    \end{align}
    where $\Q$ is the path measure for the conditional SDE being controlled.
    \item[4)] The path measure induced by the $h$-tranformed SDE solves the a Schr\"{o}dinger bridge problem with boundary conditions $\Q_0 = Q^{f_t}_T[ p(\vx_0|\vy)] \approx \gN(0,I)$, $\Q_T = p(\vx_0|\vy)$ and with the unconditional process $\sP$ as its reference. 
    % \begin{align}
    %     \Q^* = \argmin_{\Q:\,\, \Q_0 = Q^{f_t}_T[ p(\vx_0|\vy)],\,\Q_T = p(\vx_0|\vy)} \KL(\Q || \P)\nonumber
    % \end{align}
    % where $\P$ as before is the path measure associated with our unconditioned reverse SDE. 
\end{enumerate}
\end{theorem}
\end{mdframed}
Here, 4) and 1) follow directly from \citep{bernton2019schr,vargas2021solving}. For the proof 2) see Appendix \ref{app:proof_offline_finetuning} and for 3) see Appendix \ref{app:stoch_control}. Under appropriate conditions on the likelihood, the space of admissible controls $\gH$ can be taken to be the set of $C_1$-vector fields with linear growth in space; see \cite{nusken2021solving}. In the following sections, we will discuss the representations in 2) and 3) in more detail. 

%Methods such as classifier guidance~\citep{dhariwal2021diffusion} estimate $p_t(\m | \vx)$ in settings where a classifier is applicable (i.e. $y$ has a discrete space). Similar to classifier guidance, we propose an approach to learn the generalised $h$-transform, leveraging a pre-trained diffusion model. However, our proposed methodology estimates $\nabla_{\vx}\ln p_t(\m | \vx)$ rather than $p_t(\m | \vx )$ and is thus applicable to more general settings where $\m$ is continuous and training a classifier to learn $p_t(\m | \vx)$ is no longer applicable. 

%We focus on applications where we already have access to a pre-trained unconditional model $s^{\theta^*}_t\!(\vx) \approx \nabla_{\vx} \ln p_t(\vx)$, and propose DEFT (Doob's $h$-transform Efficient FineTuning), an algorithm that only learns a small network $h_t^{\phi}$. We refer to this process as fine-tuning, as the pre-trained unconditional network remains unchanged and only the approximation to the generalised $h$-transform is learned.

%\newpage
%\begin{proof}
%    4) follows directly from 1) see \citep{bernton2019schr,vargas2021solving} for more details. 1-3 Follow from Propositions \ref{prop:app_stochastic_control}, \ref{prop:offline_finetuning}.
%\end{proof}

\subsection{DEFT: Fine-tuning by score matching}
\label{sec:offline_finetuning}
The score matching objective in Theorem~\ref{theorem:representation_h} 2) offers a simulation-free loss function to estimate the generalised $h$-transform. While the theorem's formulation focuses on learning the $h$-transform for a specific measurement $\m$, this loss function can naturally be extended and amortized over the full range of measurements, i.e.,
\begin{align}
    \min_{h \in \gH} \E_{\m \sim \mY}[\mathcal{L}_\text{SM}^\m(h)], \label{eq:finetuning_loss_amortised}
\end{align}
to obtain $h^{*}_t(\vx, \m) = \nabla_{\vx}\ln p_t(\m | \vx)$. Further, for settings where the operator may vary, we can additionally amortise over the forward operator $\Pm \sim p$ and learn $h^{*}_t\!(\vx, \m, \Pm) = \nabla_{\vx}\ln p_t(\m | \vx, \Pm)$. We exploit this to amortise over inpainting masks, see Section~\ref{sec:image_rec}, and motif scaffolding, see Section~\ref{sec:protein_design}. 
For the DDPM \citep{ho2020denoising} discretisation of the SDE and a pre-trained epsilon matching model $\epsilon_t^{\theta^*}$, the fine-tuning objective~\eqref{eq:finetuning_loss_amortised} reduces to
\begin{align}
    \label{eq:ddpm_finetuning}
    \min_\phi \mathbb{E}_{(\fX_0, \mY), \epsilon, t} \left[ \| ( h^\phi_t(\fH_t, \mY) + \epsilon^{\theta^*}_t\!(\fH_t)) - \epsilon \|^2 \right],
\end{align}
with $\fH_t = \sqrt{\Bar{\alpha}_t} \fX_0 + \sqrt{1 - \Bar{\alpha}_t} \epsilon$, $ (\fX_0, \mY) \sim p(\vx_0, \m), \epsilon \sim \mathcal{N}(0,\mathbf{I})$, where $h_t^\phi$ represents the neural network used to approximate the generalised $h$-transform. Note that the loss function~\eqref{eq:ddpm_finetuning} only requires evaluation of the pre-trained model, without needing to backpropagate through the weights $\theta^{*}$, which is often quite expensive and sometimes impossible in closed APIs. Training under the DDPM discretisation can be performed according to Algorithm~\ref{algo:finetune_dobsh_training}. Sampling with DEFT is further explained in Algorithm~\ref{algo:finetune_dobsh_sampling}, and pictorially represented in Figure~\ref{fig:condov}. As an additional insight into the behaviour of the $h$-transform that makes it more flexible and capable of modelling non-linear tasks than standard reconstruction guidance methods, we show that the $h$-transform can be interpreted as a correction term for the Tweedie estimate \cite{efron2011tweedie}.
We can express the conditional Tweedie's estimate as 
\begin{equation}
    \label{eq:h_transform_tweedie}
    \begin{split}
           \E[\vx_0 | \vx_t,\m] &\approx \hat{\vx}_0(\vx_t, \m) \\ &= \frac{\vx_t - \sqrt{1 - \bar \alpha_t} \left(h^{\phi^*}_t\!(\vx_t, \m) + \epsilon^{\theta^*}_t\!(\vx_t)\right) }{\sqrt{\bar \alpha_t}} = \hat{\vx}_0(\vx_t) - \frac{\sqrt{1 - \bar \alpha_t}}{\sqrt{\bar \alpha_t}} h^{\phi^*}_t\!(\vx_t, \m), 
    \end{split}
\end{equation}
where $\hat{\vx}_0(\vx_t)$ is the unconditional Tweedie estimate. Equation~\eqref{eq:h_transform_tweedie} highlights that the $h$-transform can also be interpreted as a correction factor to the unconditional denoised estimate, similar to~\cite{ravula2023optimizing,zhang2023adding}. 

\subsection{Connections to variational inference and stochastic control}
\label{sec:online_finetuning}
A limitation to the fine-tuning objective with DEFT is that it requires a small dataset of paired datapoints and measurements. In this section, we propose an alternative approach by expressing the solution to the conditional sampling problem as a stochastic optimal control objective, which is highlighted in Theorem~\ref{theorem:representation_h} 3). This allows us to learn the $h$-transform by optimising a variational inference-type problem. Importantly, this stochastic control objective only requires the availability of a single noisy observation $\vy$ instead of a paired fine-tuning dataset. Further, the stochastic control objective can even be used in other conditional sampling tasks, for example in reward tilted distributions, i.e. where the goal is to sample from $\pi(\vx) \propto e^{r(\vx)}p_\text{data}(\vx)$. Here $e^{r(\vx)}$ serves the same purpose as the likelihood, but there is no explicit measurement $\m$ \cite{domingo2024adjoint}. 

However, the stochastic control objective is not directly applicable for high-dimensional training, as the complete chain $\{\bH_t\}_t$ must be kept in memory and backpropagated through or adjoint methods have to be used~\cite{li2020scalable}. In Appendix \ref{app:vargrad}, we discuss several alternatives and present experiments for scaling up the above objective, e.g., methods like VarGrad \cite{richter2020vargrad} and Trajectory Balance \cite{malkin2022trajectory}. VarGrad allows to detach the trajectory from the gradient computation, drastically reducing the memory footprint. We discuss concurrent work in \ref{app:related_work_control} and \ref{app:stoch_control_bias}. Further, we show initial experiments for conditional sampling in \ref{sec:stoch_cont_exp}. The stochastic control objective serves as a conceptual bridge between sampling from unnormalised densities using diffusion models \citep{vargas2023denoising,vargas2021bayesian,zhang2021path} and conditional score-based generative modelling.

% This could go in the appendix
% Instead of backpropagating through the solver, we could make use of the stochastic adjoint sensitivity method \citep{li2020scalable}, in which an adjoint SDE is solved to estimate the gradients of Eqn.~\eqref{eq:stochastic_control}. This method has the advantage of a constant memory cost. However, the computational cost increases as both the reverse SDE and the adjoint SDE must be simulated. Instead, one can use the VarGrad \citep{richter2020vargrad} and trajectory balance \citep{nusken2021solving,malkin2022trajectory} divergences, explored in Appendix \ref{app:vargrad}.

%\subsection{DEFT network parametrisation}
\subsection{Likelihood-informed inductive bias}
\label{sec:network_architecture}
If the likelihood is differentiable, we can impose an inductive bias on the $h$-transform approximation. Specifically, the generalized $h$-transform can be expressed as an expectation, and we can apply the DPS approximation \cite{chung2022diffusion} as follows
\begin{align*}
    \nabla_{\vx_t} \ln p_{y|t}(\m | \vx_t) = \nabla_{\vx_t} \ln \E_{\vx_0 \sim p(\vx_0 | \vx_t)}[ p(\m | \vx_0)] &\approx \nabla_{\vx_t} \ln  p(\m | \E[\hat{\vx}_0|\vx_t])
    \\ &\approx  \nabla_{\vx_t} \ln p(\m | \hat{\vx}_0(\vx_t) ),
\end{align*}
where we use Tweedie's estimate based on the pre-trained unconditional diffusion model in the last step. The DPS approximation has been validated in many different conditional sampling tasks, so it would make for a good initialisation of the learned $h$-transform. However, the DPS approximation requires the Jacobian of the unconditional model, which is expensive to compute and known to be poorly conditioned. Further, in applications where we only have access to the forward pass of the unconditional model, the Jacobian is infeasible to compute. Similar to~\cite{poole2023dreamfusion}, we found that omitting this term still leads to an expressive architecture, while greatly reducing the computational cost. Thus, we propose the following network architecture 
\begin{align}
    \label{eq:sampling_architecture}
    h_t^\phi(\vx_t, \m) = \text{NN}_1^\phi(\vx_t, \hat{\vx}_0(\vx_t), \nabla_{\hat{\vx}_0} \ln p(\m | \hat{\vx}_0(\vx_t)), t) + \text{NN}_2^\phi(t) \nabla_{\hat{\vx}_0} \ln p(\m | \hat{\vx}_0(\vx_t)),  
\end{align}
to parametrise the $h$-transform, where the last layer of $\text{NN}_1^\phi$ is initialised with $\mathbf{0}$ and $\text{NN}_2^\phi$ is initialised to output $1$. This initialisation provides a computationally efficient approximation to the $h$-transform, which still guides the sampling.

This type of network architecture has been proposed within the sampling community to apply diffusion models to normalising constant estimation \citep{phillips2024particle,vargas2023denoising,zhang2021path}. The theoretical connection to stochastic control in Section \ref{sec:online_finetuning}, motivates us further to adapt the architectures from the sampling field to the conditional generative modelling setting. We ablate the different components of our proposed architecture in Appendix~\ref{app:ablation_arch} and find that the additional components greatly improve performance empirically.

\section{Experiments}
\label{sec:exp}
We evaluate the DEFT framework from Section~\ref{sec:offline_finetuning} on both linear and non-linear natural and medical image reconstruction tasks, as well as the motif scaffolding problem in protein design. Further, in Appendix \ref{app:flowers_experiments} we provide a comparison of the conditional training framework with DEFT on the {\scshape Flowers}~\citep{nilsback08} image dataset. We provide our code \href{https://github.com/alexdenker/DEFT}{\url{https://github.com/alexdenker/DEFT}}.

\begin{figure*}
\begin{figure}[H]
\centering
\begin{subfigure}[t]{.165\textwidth}
  \includegraphics[width=\linewidth]{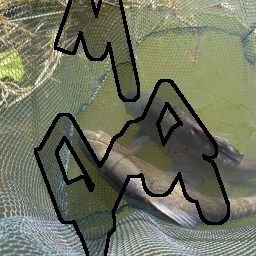}
\end{subfigure}%
\hfill
\begin{subfigure}[t]{.165\textwidth}
  \includegraphics[width=\linewidth]{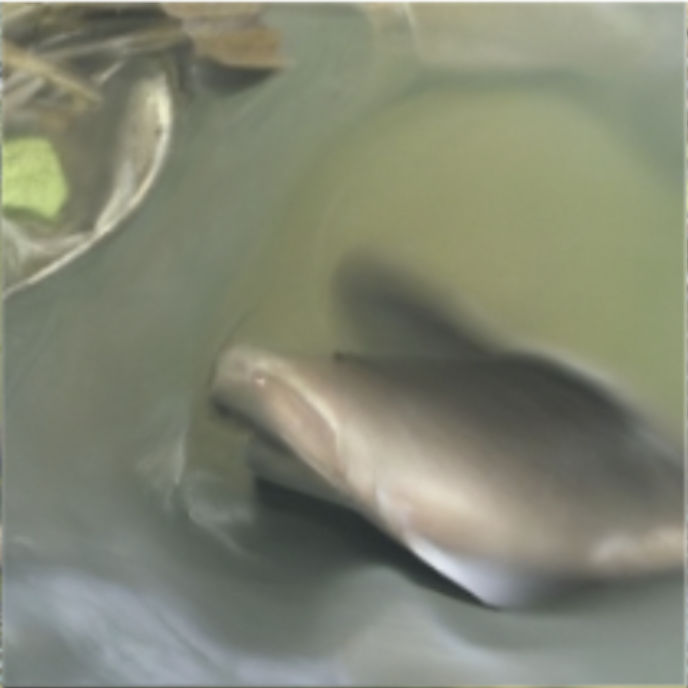}
\end{subfigure}%
\hfill
\begin{subfigure}[t]{.165\textwidth}
  \includegraphics[width=\linewidth]{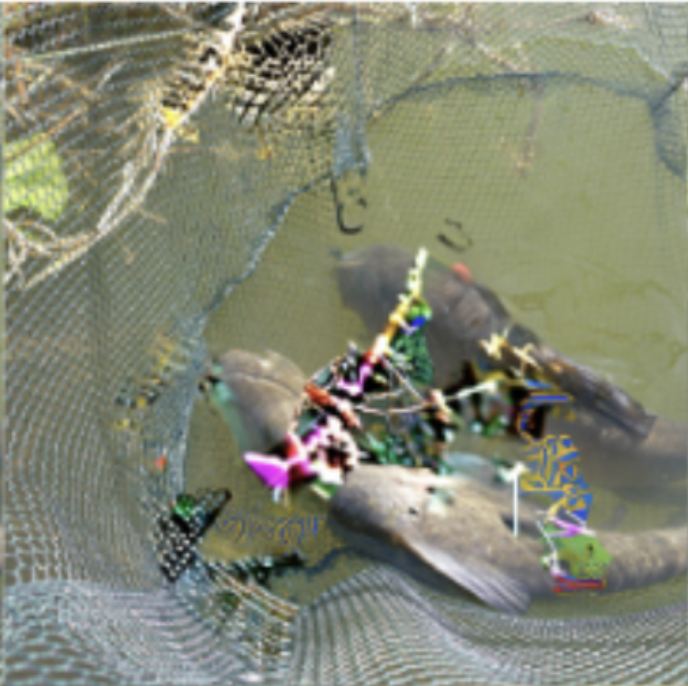}
\end{subfigure}%
\hfill
\begin{subfigure}[t]{.165\textwidth}
  \includegraphics[width=\linewidth]{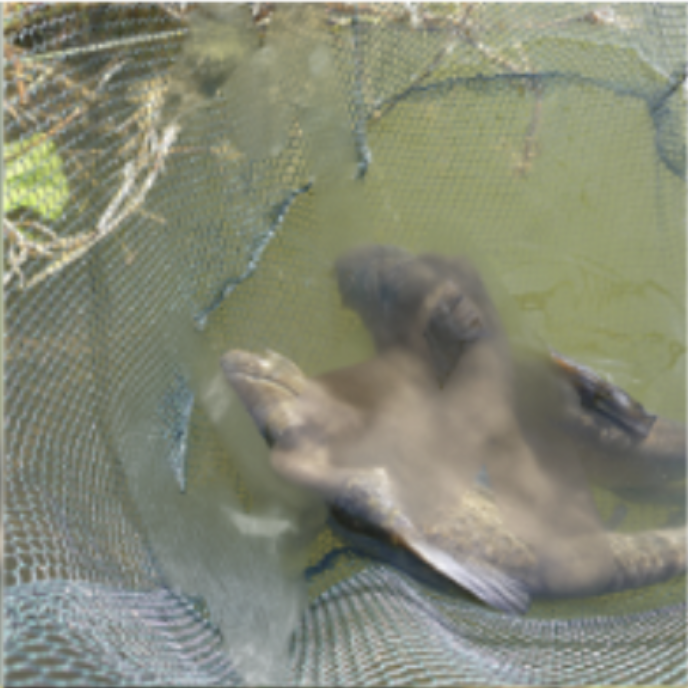}
\end{subfigure}%
\hfill
\begin{subfigure}[t]{.165\textwidth}
  \includegraphics[width=\linewidth]{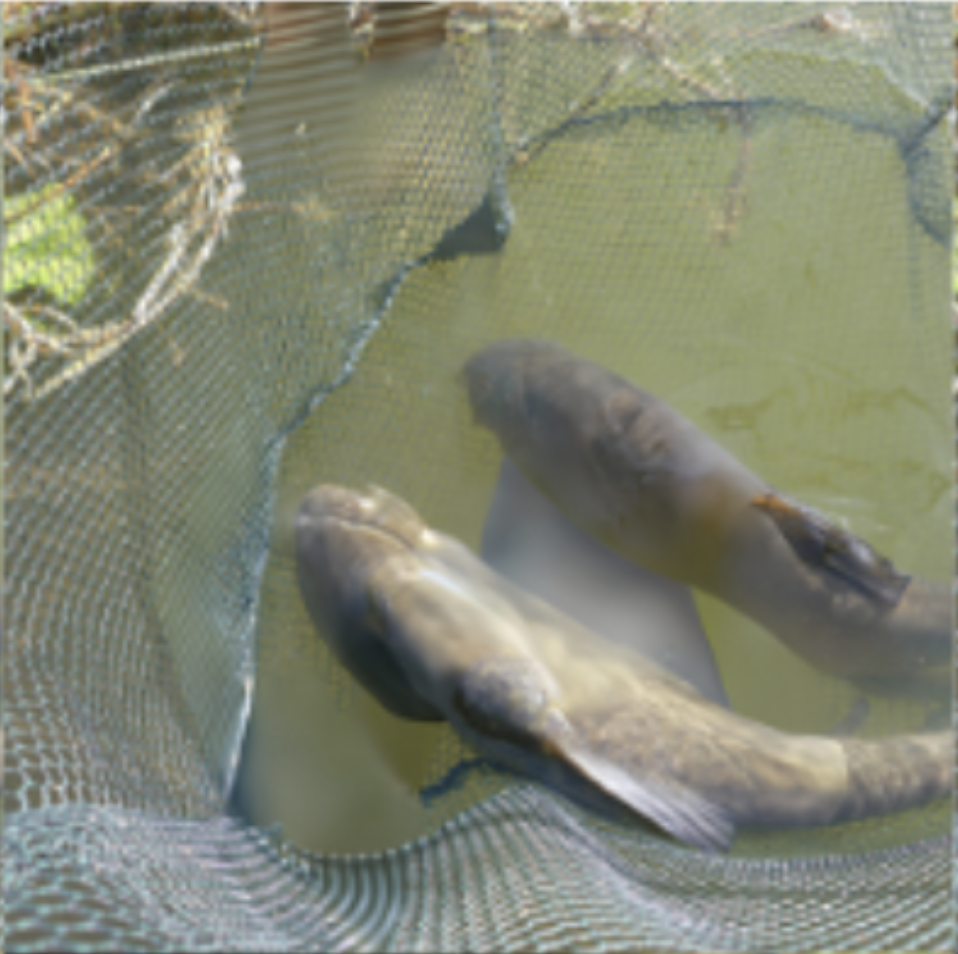}
\end{subfigure}%
\hfill
\begin{subfigure}[t]{.165\textwidth}
  \includegraphics[width=\linewidth]{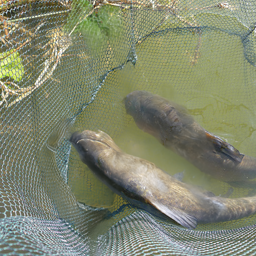}
\end{subfigure}%

\begin{subfigure}[t]{.165\textwidth}
  \includegraphics[width=\linewidth]{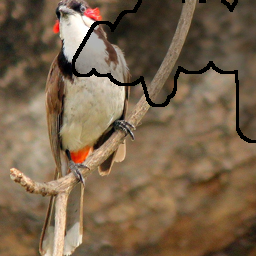}
  \caption*{Ground truth}
\end{subfigure}%
\hfill
\begin{subfigure}[t]{.165\textwidth}
  \includegraphics[width=\linewidth]{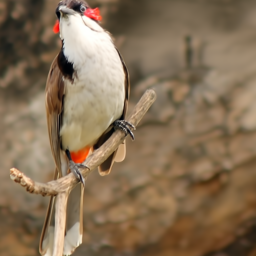}
  \caption*{DPS}
\end{subfigure}%
\hfill
\begin{subfigure}[t]{.165\textwidth}
  \includegraphics[width=\linewidth]{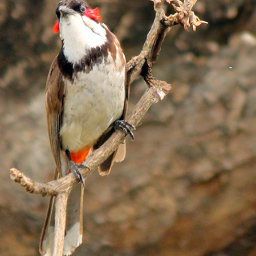}
  \caption*{$\Pi$GDM}
\end{subfigure}%
\hfill
\begin{subfigure}[t]{.165\textwidth}
  \includegraphics[width=\linewidth]{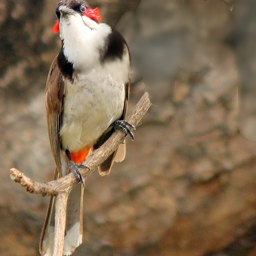}
  \caption*{DDRM}
\end{subfigure}%
\hfill
\begin{subfigure}[t]{.165\textwidth}
  \includegraphics[width=\linewidth]{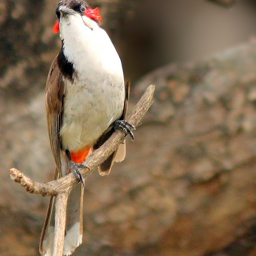}
  \caption*{RED-diff}
\end{subfigure}%
\hfill
\begin{subfigure}[t]{.165\textwidth}
  \includegraphics[width=\linewidth]{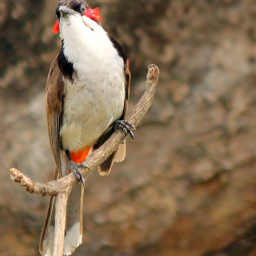}
  \captionsetup{justification=centering}
  \caption*{DEFT}
\end{subfigure}%
\caption{Results for inpainting. We show the ground truth with the inpainting mask superimposed. 
} \vspace{-1cm}\label{fig:inpainting_results}
\end{figure}
\begin{table}[H]
\centering
\caption{Results on inpainting and 4x super-resolution. Best values are shown in \textbf{bold}, second best values are \underline{underlined}. We report both the total time to sample $1$k images, and the time per sample in seconds. The time to sample includes the training time for DEFT. These tasks aim to generate "natural"-looking images and therefore perceptual similarity metrics (KID, LPIPS and top-1) are more relevant. I$^2$SB (grey column) can be considered an upper bound on performance.}
\resizebox{0.95\textwidth}{!}{%
\begin{tabular}{@{}lccccc>{\columncolor{lightgray}}cccccc>{\columncolor{lightgray}}c@{}}
\toprule
& \multicolumn{6}{c}{\textbf{Inpainting}}                          & \multicolumn{6}{c}{\textbf{Super-resolution}}                    \\
 & \small DPS & \small $\Pi$GDM & \multicolumn{1}{l}{\small DDRM} & \small RED-diff & \small DEFT &  \small I$^2$SB & \small DPS & \small $\Pi$GDM & \small DDRM & \small RED-diff & \small DEFT & \small I$^2$SB \\ \midrule
PSNR $(\uparrow)$         & $21.27$ & $20.30$ & $20.72$ & $\mathbf{23.29}$   &   \underline{$22.18$} & $23.26$    & $24.83$ & $25.25$ & \underline{$25.32$} & $\mathbf{25.95}$  & $24.92$ & 23.95        \\
SSIM $(\uparrow)$         & $0.67$  & $0.82$  & $0.83$  & $\mathbf{0.87}$    &  \underline{$0.85$} & $0.86$       & $0.71$  & \underline{$0.73$}  & $0.72$  & $\mathbf{0.75}$  & $0.71$ & $0.64$         \\
KID $(\downarrow)$        & $15.28$ & $4.50$  & $2.50$  & \underline{$0.86$} &  $\mathbf{0.29}$ & $0.238$  & $10.01$ & $10.9$  & $14.0$  & \underline{$10.0$}       & $\mathbf{1.78}$ & $0.004$     \\
LPIPS $(\downarrow)$      & $0.26$  & $0.12$  & $0.14$  & \underline{$0.10$} &  $\mathbf{0.09}$ & $0.068$ & $0.16$  & \underline{$0.15$}  & $0.23$  & $0.25$       & $\mathbf{0.12}$  & $0.11$   \\
top-1 $(\uparrow)$        & $58.2$  & $67.8$  & $68.6$  & $\mathbf{72.0}$    & \underline{$71.7$} & $74.5$  &    \underline{$71.5$}  & $71.02$ & $63.9$  & $66.7$         & $\mathbf{71.9}$ & $71.6$   \\ \midrule
Time (hrs) $(\downarrow)$ & $30.72$ & \underline{$2.83$} & $\mathbf{0.33}$  & $7.86$   &  $5.2$ & N/A  & $30.72$ &   \underline{$2.83$}      &     $\mathbf{0.33}$    & $7.86$     & $5.2$ & N/A  \\ 
Time per sample (s) $(\downarrow)$ & 100.6 &  10.2       & 1.22 & 28.3 &  4.36 & N/A & 100.6 &     10.2    &    1.22     &  28.3  & 4.36 & N/A \\\bottomrule
\end{tabular}%
}
\vspace{-0.4cm}
\label{tab:image_results_inp_sr}
\end{table}

\end{figure*}

\subsection{Image reconstruction}
\label{sec:image_rec}
We test a wide variety of both linear and non-linear image reconstruction tasks on the $256\times 256$px ImageNet dataset \cite{russakovsky2015imagenet}. We make use of a pre-trained unconditional diffusion model with $\sim500$M parameters \citep{dhariwal2021diffusion}\footnote{Checkpoints are available at \url{https://github.com/openai/guided-diffusion}}. We perform all our evaluations on a $1k$ subset of the validation set\footnote{\url{https://bit.ly/eval-pix2pix}}. For all inverse problems under consideration, the $h$-transform was trained on a separate $1$k subset of the validation set. For linear inverse problems, we compare against $\Pi$GDM \cite{song2022pseudoinverse}, DDRM \cite{kawar2022denoising}, DPS \cite{chung2022diffusion} and RED-diff \cite{mardani2023variational}. Additionally, we evaluate I$^2$SB \cite{liu20232}. The performance of I$^2$SB can be seen as an upper-bound to DEFT, as it is a conditional diffusion trained on the complete ImageNet dataset. For non-linear tasks, we only compare against DPS and RED-diff as both $\Pi$GDM and DDRM are not directly applicable to non-linear forward operators. For DEFT we make use of the DDIM sampling scheme with $100$ time steps \cite{song2021denoising}. For the comparison methods we used the same hyperparameters as in \cite{mardani2023variational} without further tuning, including the number of sampling steps (1000 for DPS and RED-Diff, 20 for DDRM and 100 for $\Pi$GDM).

We compute PSNR and SSIM, which are commonly used distortion measures, along with perceptual metrics such as Learned Perceptual Image Patch Similarity (LPIPS) \cite{zhang2018unreasonable}, Kernel Inception Distance (KID) \cite{binkowski2018demystifying}, and top-1 classifier accuracy of a pre-trained ResNet50 model \cite{he2016deep}. There is a well-known tradeoff between optimising distortion metrics versus perceptual quality, and depending on the task, one may wish for better performance along one axis at the cost of the other. For natural image tasks involving in-painting and super-resolution, it is common to prefer "natural"-looking images, which score better on perceptual similarity, whereas for tasks involving (medical) image reconstruction
%of an operator, such as HDR, phase retrieval, non-linear deblurring and medical image reconstruction, 
it is standard to prefer a lower distortion metric \cite{blau2018perception}. Further, we calculate the total time (including training for DEFT) for evaluation $1$k validation images in the "Time (hrs)" row. Furthermore, we report the effective time taken to sample a single image in the "Time per sample (s)" row. This time is calculated by fitting the largest batch size of validation images that fit on a single A100 GPU and dividing the time taken for the batch by the batch size. 

\begin{figure*}   
\begin{figure}[H]
\centering
\begin{subfigure}[t]{.199\textwidth}
  \includegraphics[width=\linewidth]{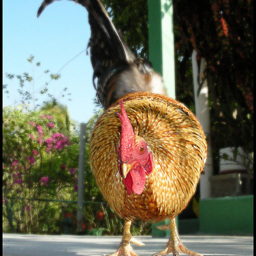}
\end{subfigure}%
\hfill
\begin{subfigure}[t]{.199\textwidth}
  \includegraphics[width=\linewidth]{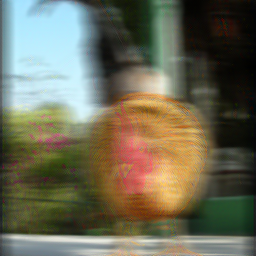}
\end{subfigure}%
\hfill
\begin{subfigure}[t]{.199\textwidth}
  \includegraphics[width=\linewidth]{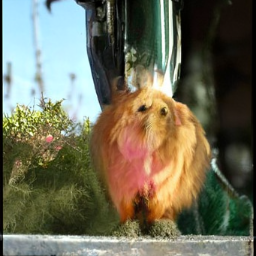}
\end{subfigure}%
\hfill
\begin{subfigure}[t]{.199\textwidth}
  \includegraphics[width=\linewidth]{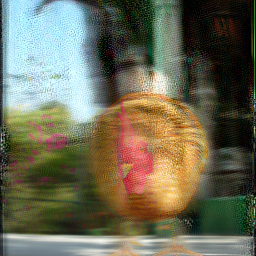}
\end{subfigure}%
\hfill
\begin{subfigure}[t]{.199\textwidth}
  \includegraphics[width=\linewidth]{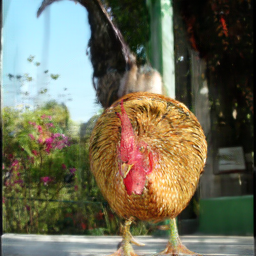}
\end{subfigure}%
 
\begin{subfigure}[t]{.199\textwidth}
  \includegraphics[width=\linewidth]{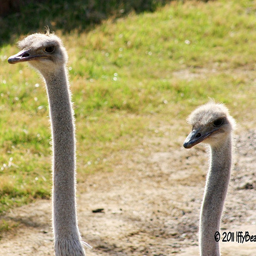}
  \caption*{Ground truth}
\end{subfigure}%
\hfill
\begin{subfigure}[t]{.199\textwidth}
  \includegraphics[width=\linewidth]{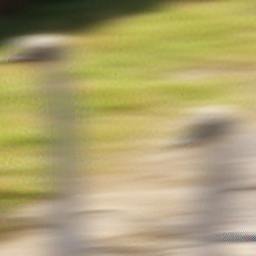}
  \caption*{Measurements}
\end{subfigure}%
\hfill
\begin{subfigure}[t]{.199\textwidth}
  \includegraphics[width=\linewidth]{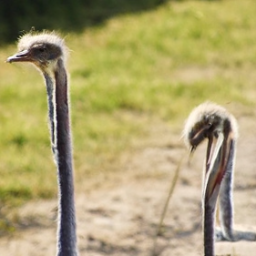}
  \caption*{DPS}
\end{subfigure}%
\hfill
\begin{subfigure}[t]{.199\textwidth}
  \includegraphics[width=\linewidth]{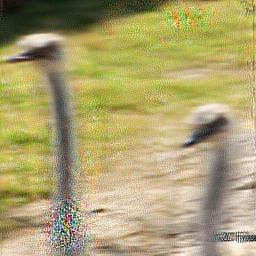}
  \caption*{RED-diff}
\end{subfigure}%
\hfill
\begin{subfigure}[t]{.199\textwidth}
  \includegraphics[width=\linewidth]{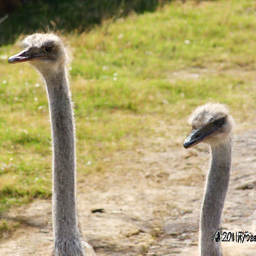}
  \captionsetup{justification=centering}
  \caption*{DEFT}
\end{subfigure}%
\caption{Results for non-linear deblurring. We show both the ground truth, the measurements and samples for DPS, RED-diff and DEFT. DEFT is able to reconstruct high-quality images.}  \vspace{-1cm} \label{fig:non-linear_deblur}
\end{figure}
\begin{table}[H]
\centering
\caption{Results on different non-linear image reconstruction tasks. Best values are shown in \textbf{bold}, second best values are \underline{underlined}.}
\resizebox{0.95\textwidth}{!}{%
\begin{tabular}{lccccccccc}
\toprule
 &
  \multicolumn{3}{c}{\textbf{HDR}} &
  \multicolumn{3}{c}{\textbf{Phase retrieval}} &
  \multicolumn{3}{c}{\textbf{Non-linear Deblurring}} \\
 &
  \multicolumn{1}{l}{\small DPS} &
  \small RED-diff &
  \small DEFT &
  \multicolumn{1}{l}{\small DPS} &
  \small RED-diff &
  \small DEFT &
  \multicolumn{1}{l}{\small DPS} &
  \small RED-diff &
  \small DEFT \\ \midrule
PSNR $(\uparrow)$ &
  $7.94$ &
  \underline{$25.23$} &
  $\mathbf{28.51}$ &
  $9.99$ &
  \underline{$10.53$} &
  $\mathbf{13.03}$ &
  $17.57$ &
  \underline{$21.21$} &
  $\mathbf{25.16}$ \\
SSIM $(\uparrow)$ &
  $0.21$ &
  \underline{$0.79$} &
  $\mathbf{0.89}$ &
  $0.12$ &
  \underline{$0.17$} &
  $\mathbf{0.32}$ &
  $0.39$ &
  \underline{$0.53$} &
  $\mathbf{0.79}$ \\
KID $(\downarrow)$ &
  $272.5$ &
  \underline{$1.2$} &
  $\mathbf{0.10}$ &
  \underline{$93.2$} &
  $114.0$ &
  $\mathbf{80.89}$ &
  \underline{$12.89$} &
  $66.8$ &
  $\mathbf{0.34}$ \\
LPIPS $(\downarrow)$ &
  $0.72$ &
  \underline{$0.1$} &
  $\mathbf{0.04}$ &
   $0.66$ &
  \underline{$0.6$} &
  $\mathbf{0.52}$ &
  \underline{$0.42$} &
  \underline{$0.42$} &
  $\mathbf{0.09}$ \\
top-1 $(\uparrow)$ &
  $4.0$ &
  \underline{$68.5$} &
  $\mathbf{74.0}$ &
   $1.5$ &
  $\underline{7.2}$ &
  $\mathbf{13.1}$ &
  \underline{$30.2$} &
  $23.5$ &
  $\mathbf{69.9}$ \\ \midrule
Time (hrs) $(\downarrow)$ &
  $30.7$ &
  \underline{$7.9$} &
  $\mathbf{5.2}$ &
   $30.7$ &
  \underline{$7.9$} &
  $\mathbf{5.2}$ &
  $30.9$ &
  \underline{$8.1$} &
  $\mathbf{5.2}$ \\ 
Time per sample (s) $(\downarrow)$ &
  $100.4$ &
  \underline{$28.3$} &
  $\mathbf{4.4}$ &
   $100.6$ &
  \underline{$28.4$} &
  $\mathbf{4.4}$ &
  $101.2$ &
  \underline{$30.4$} &
  $\mathbf{4.6}$ \\
  
  \bottomrule 
\end{tabular}%
}
\vspace{-0.3cm}
\label{tab:image_results_nonlinear}
\end{table}
\end{figure*}

\paragraph*{Inpainting} First, we evaluate DEFT on the linear inverse problem of image inpainting. We make use of the inpainting masks for the 1k subset used by \cite{saharia2022palette}\footnotemark[2], which includes masks that obscure $20\!\%-\!30\%$ of the image. Results are shown in Table~\ref{tab:image_results_inp_sr}, including the computational time for sampling all $1000$ validation images. For DEFT, this computational time additionally includes the $3.9$ hrs of training time of the $h$-transform additionally with the $1.2$ hrs of evaluation. Even with the added training time, we reduce the overall computational time for DEFT, compared to DPS and RED-diff. A visual comparison is provided in Figure~\ref{fig:inpainting_results}. Further, in Figure~\ref{fig:inp_diversity} in the Appendix, we show the diversity of samples using different initial seeds. Even though $\Pi$GDM and DDRM are faster methods, they perform significantly worse, and are only applicable for linear inverse problems. Inpainting is a task that prefers "natural"-looking image samples, and DEFT outperforms all other methods on perceptual metrics such as LPIPS and KID, being a close second on top-1 accuracy. 

\paragraph{Super-resolution} For another linear inverse problem, we evaluate $4$x noiseless super-resolution. Here, the forward operator is given by a bicubic downsampling. The results are presented in Table~\ref{tab:image_results_inp_sr}. While DEFT has a lower PSNR compared to the baseline methods, we see significant improvement on perceptual quality metrics (KID, LPIPS, and top-1 accuracy). We show visual results in Figure~\ref{fig:sr_results}. \vspace{-0.3cm}

\paragraph{High dynamic range} For the first non-linear tasks, we make use of the high dynamic range (HDR) task described in \cite{mardani2023variational}. Here, the forward operator is given by $\Pm(\vx) = \text{clip}(2\vx; -1, 1)$, where $\vx$ denotes the RGB image scaled to the range $[-1,1]$. The results are presented in Table~\ref{tab:image_results_nonlinear}. We observe that DPS struggles with this specific non-linear tasks, while DEFT achieves good results. We show a visual comparison in the Appendix, see Figure~\ref{fig:hdr}. \vspace{-0.3cm}

\paragraph{Phase retrieval} The goal in phase retrieval is to recover the image from intensity measurements only, i.e., the forward operator is given by $\Pm(\vx) = | \mathcal{F} \vx |$, with $\mathcal{F}$ as the Fourier transform. We study the same 2x oversampling setting as in \cite{chung2022diffusion,mardani2023variational}. Phase retrieval is a challenging non-linear inverse problem, as the forward operator is invariant to translations, global phase shifts and complex conjugation. In Figure~\ref{fig:phase_retrieval} in the appendix, we show samples for different initial seeds and observe a wide variety of image quality. We also observe this behaviour for the baseline methods. However, DEFT is able to achieve better performance compared to RED-diff and DPS, see also Table~\ref{tab:image_results_nonlinear}. However, there is room for further improvement to achieve good reconstructions on a consistent basis. \vspace{-0.7cm}

\paragraph{Non-linear deblurring} The non-linear deblurring task was originally proposed by \cite{chung2022diffusion}. Here, the forward operator is defined by a trained neural network \cite{tran2021explore}, resulting in a highly non-linear blur. Quantitative results are presented in Table~\ref{tab:image_results_nonlinear}. This non-linear reconstruction task was also evaluated for RED-diff in \cite{mardani2023variational}. However, we found that the forward operator of the original implementation\footnote{\url{https://github.com/NVlabs/RED-diff/tree/master}} leads to a nearly trivial reconstruction task. In Appendix~\ref{app:blur_mardani}, we show results with the code from \cite{mardani2023variational}, while Table~\ref{tab:image_results_nonlinear} shows the results with our implementation of the forward operator. Further, in Figure~\ref{fig:non-linear_deblur} we provide a visual comparison, where DEFT is able to recover the ground truth quite faithfully. \vspace{-0.3cm}

\paragraph{Ablation: Size of fine-tuning dataset}
As DEFT requires a dataset for fine-tuning, we ablate the number of training samples. We trained DEFT on a subset of 10, 100 and 200 ImageNet images for Inpainting. We see improvements of all metrics, when training on a larger dataset. The results are presented in Table~\ref{tab:inpainting_number_of_samples}. For the KID, we outperform RED-diff (KID: 0.86) even when trained on only 200 images. However, even with 10 images, we perform quite competitively, showcasing that our method is very sample-efficient when it comes to learning a conditional transform.
%\vspace{-0.5cm}

\begin{table}[th]
\centering
\caption{Varying the size of the fine-tuning dataset for DEFT for Inpainting on ImageNet.}
\resizebox{0.4\textwidth}{!}{%
\begin{tabular}{lcccc}
\toprule
 \multicolumn{5}{c}{\textbf{DEFT on ImageNet for Inpainting}} \\ 
\small Number of images & \small 10 & \small 100 & \small 200 & \small 1000 \\ \midrule
PSNR $(\uparrow)$ & $20.87$ & $20.99$ & $22.11$ & $22.18$  \\
SSIM $(\uparrow)$ & $0.83$ & $0.84$& $0.847$ & $0.85$  \\
KID $(\downarrow)$  & $1.85$ & $0.978$ & $0.401$ & $0.29$  \\
LPIPS $(\downarrow)$  & $0.123$ & $0.112$ & $0.096$ & $0.09$   \\
top-1 $(\uparrow)$  & $68.8$ & $69.6$ & $70.6$ & $71.7$  \\
  \bottomrule 
\end{tabular}%
}%
\vspace{-0.5cm}
\label{tab:inpainting_number_of_samples}
\end{table}

% \newpage
\subsection{Computed tomography}
\label{sec:ct}

\begin{wraptable}{r}{0.55\textwidth}
\centering
\caption{Results for CT on \textsc{AAPM} and \textsc{LoDoPab-CT} and sampling time per image on a single GeForce RTX 3090. Best values are shown in \textbf{bold}, second best values are \underline{underlined}. For DEFT we use $100$ DDIM steps, while RED-diff and DPS use $1000$ time steps.}
\resizebox{0.5\textwidth}{!}{%
\begin{tabular}{lcccccc}
\toprule
     & \multicolumn{3}{c}{AAPM} & \multicolumn{3}{c}{LoDoPab-CT} \\
     & DPS  & RED-diff & DEFT & DPS &  RED-diff & DEFT \\ \midrule
 PSNR & $33.11$ & $\mathbf{34.85}$ & \underline{$34.73$} & $34.16$ & \underline{$34.95$} & $\mathbf{35.81}$\\
  SSIM & \underline{$0.885$} & $0.865$ & $\mathbf{0.887}$ & $0.846$ & \underline{$0.849$} & $\mathbf{0.876}$ \\
  Time (s) & $208.9$ & \underline{$83.4$} & $\mathbf{16.3}$ & $156.8$ & \underline{$70.1$} & $\mathbf{13.8}$ \\ 
\bottomrule 
\end{tabular}%
}
%\vspace{-1cm}
\label{tab:ct_results}
\end{wraptable}

We are evaluating DEFT both on the 2016 American Association of Physicists in Medicine
(AAPM) grand challenge dataset \citep{mccollough2017low}, and the LoDoPab-CT dataset \citep{leuschner2021lodopab}, for details see Appendix~\ref{app:experimental_details_images}. 
For the unconditional models we make use of the attention U-Net architecture \cite{dhariwal2021diffusion}. For the model trained on AAPM, we use exactly the same architecture ($\approx374$ params.) as in \cite{chung2022improving}, while for LoDoPab-CT we use a smaller model ($\approx133$M params.). For the forward operator, we use a parallel-beam radon transform with $60$ angles and add Gaussian noise with $\sigma_y = 1.0$, which corresponds to approx. $3.5\%$ relative noise. We compare against DPS \citep{chung2022diffusion} and RED-diff \citep{mardani2023variational}, where the parameters were obtained using a grid search on a subset of the validation set to maximise the PSNR. In Table~\ref{tab:ct_results} we present PSNR and SSIM, in addition to the sampling time, and provide a visual comparison Figure~\ref{fig:ct_lodopab}.  
%We focus on these two quality measures, as for medical imaging minimising the distortion is more important than the perceptual quality \cite{blau2018perception}.
For both datasets, we choose the same DEFT architecture with about $23$M parameters.%, with results in $6\%$ and $17\%$ the size of the unconditional model for AAPM and LoDoPab-CT, respectively. %For LoDoPab-CT, we $h$-transform was trained on a set of $3522$ images, while for AAPM we trained the $h$-transform on the same dataset as the unconditional model. 
 In the Appendix~\ref{sec:add_res}, we perform an ablation regarding the parametrisation of DEFT, see Table~\ref{tab:ct_ablation}. In particular, these results show the necessity of providing the unconditional Tweedie estimate $\hat{\vx}_0$ as input to the $h$-transform in~\eqref{eq:sampling_architecture}. We observe almost a $8$dB difference in PSNR for models without the Tweedie estimate and the log-likelihood term.
 
\begin{figure*}
\begin{figure}[H]
\centering
\begin{subfigure}[t]{.24\textwidth}
  \includegraphics[width=\linewidth]{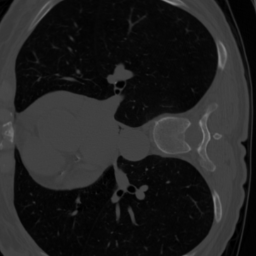}
  \caption*{Ground truth}
\end{subfigure}%
\hfill
\begin{subfigure}[t]{.24\textwidth}
  \includegraphics[width=\linewidth]{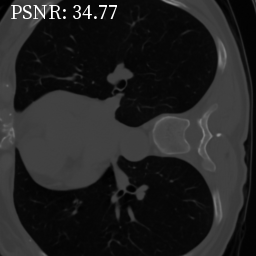}
  \caption*{DPS}
\end{subfigure}%
\hfill
\begin{subfigure}[t]{.24\textwidth}
  \includegraphics[width=\linewidth]{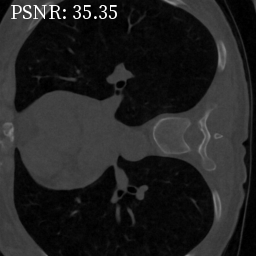}
  \caption*{RED-diff}
\end{subfigure}%
\hfill
\begin{subfigure}[t]{.24\textwidth}
  \includegraphics[width=\linewidth]{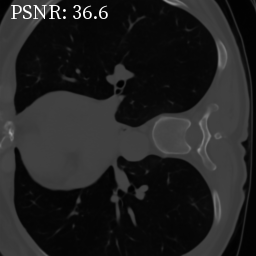}
  \caption*{DEFT}
\end{subfigure}%
\caption{Reconstructions for computed tomography on LoDoPab-CT} \vspace{-1cm} \label{fig:ct_lodopab}
\end{figure}

\end{figure*}

\subsection{Conditional protein design: motif scaffolding}\label{sec:protein_design} 
We evaluate DEFT on the contiguous motifs of the RFDiffusion benchmark \cite{watson2023novo}. In this motif scaffolding task, we sample protein $C_\alpha$ atom coordinates $\mx \in \R^{d}$ such that the generated backbone contains a targeted motif, i.e. a subset of $C_\alpha$ coordinates $\my \in \R^{n}$, similar to an image outpainting task. The forward operator is therefore given by $\my=\Pm(\mx) = \mathrm{A} \mx$, where $\mathrm{A} \in \{0,1\}^{n \times d}$ denotes a masking matrix which only selects the $n$ observed $C_\alpha$ coordinates.

%As a basis for our motif scaffolding experiments, 
We leverage the pretrained Genie diffusion model which is an unconditional model for protein backbone generation \cite{lin2023generating}. To apply DEFT to it, we use a downsized version of the unconditional base model as our $h$-transform model which only uses 200k instead of the original 4.1M parameters. To adopt this model to the DEFT algorithm, we modify the SE(3)-invariant encoder by adding additional conditional pair feature networks for the motif coordinates as well as the unconditional Tweedie estimate $\hat{\vx}_0$, similar to the setting in the previous experiments. As per Section~\ref{sec:network_architecture}, we add a time-dependent likelihood approximation term to the $h$-transform network. We train the $h$-transform network on the same SCOPe dataset as in \cite{didi2023framework}. More details on the training details can be found in App.~\ref{app:protein_motif_scaffolding_details}. We compare DEFT against DPS \cite{chung2022diffusion} and a previously published version of Genie that was trained in an amortised fashion \cite{didi2023framework}. The guidance parameter of DPS was tuned over 5 different experiment runs. The amortised model serves here as an upper limit of how well DEFT can perform with Genie as a base model.

%\begin{wrapfigure}{r}{0.4\textwidth}
%    \centering
%    \includegraphics{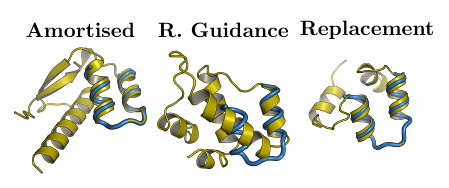}
%    \caption{Conditional protein designs in yellow with target motif 3IXT in blue.}
%    \label{fig:enter-label}
%\end{wrapfigure}

The overall in-silico success, defined by scRMSD $< 2$\AA~ and motifRMSD $< 1$\AA, is provided in Figure~\ref{fig:rfdiff_comp}. In the Appendix, we provide a detailed breakdown of these results, see Figure~\ref{fig:deft_9}. Further, in Figure~\ref{fig:protein_compare} and Figure~\ref{fig:protein_compare_samples} we provide a comparison of the task 1YCR for the different methods. We observe that DEFT outperforms DPS, solving 10 out of the 12 tasks compared to only 5 tasks for DPS. While it has lower success rates than the amortised model, it still solves all but two tasks in that benchmark with only 9\% of the parameter count and significantly shorter training time compared to the amortised model ($800$ epochs for DEFT vs $2100$ epochs for amortised). The low performance of DPS indicates that the base Genie model is limiting the performance here and may partly explain the performance difference between DEFT and amortised training. Exploring DEFT with a more capable base model is therefore another promising avenue for research. Excitingly, the lower training time and data requirements of DEFT enable fine-tuning a model on specific protein families for particular applications, a task that is left for future work. 

%The task of motif scaffolding in our protein setting amounts to sampling protein C alpha atom coordinates $\mx \in \R^{d}$ such that it contains a given subset of C alpha coordinates $\my \in \R^{n}$, i.e.\ $\my=\Pm(\mx) = \mathrm{A} \mx$, where $\Pm \in \{0,1\}^{n \times d}$ is a masking matrix which selects $n$ observed C alpha coordinates.
%We perform two sets of motif scaffolding experiments. We firstly compare the conditional training approach, i.e., {\scshape amortised} training in Appendix~\ref{app:amort_cond_training},  to {\scshape replacement} and {\scshape reconstruction guidance}. Upon observing that {\scshape amortised} performs significantly better, we then dive into a more detailed analysis of this method on the RFDiffusion benchmark, as well as a new SCOPe-based benchmark that is created from a hierarchical structure and sequence-based split described below. 

\vspace{-0.2cm}
\begin{figure}[tbh]
\centering
\hspace*{-1.3cm}
\begin{minipage}[b]{0.55\textwidth}
\includegraphics[width=0.98\textwidth]{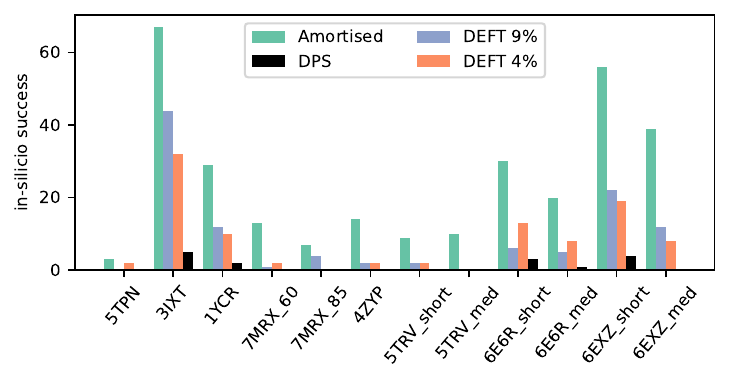}
% \put(-80,10){\rotatebox{90}{hello}}
% \caption{Samples conditioned on inner patch.}
\vspace{-0.3cm}
\caption{Comparison of DPS, DEFT and amortised training for motif scaffolding for 12 contiguous targets. 4\% and 9\% are the relative sizes of the h-transform compared to the unconditional model.\label{fig:rfdiff_comp}}
\end{minipage}
% \hfill
\hspace*{0.1cm}
\begin{minipage}[b]{0.35\textwidth}

\adjustbox{max width=1.26\textwidth}{
\centering
% \small
% \vspace{-0}
\begin{tabular}{lccc}
\toprule
\scshape Metric & \scshape DPS & \scshape DEFT & \scshape Amortised \\
\midrule
%\textbf{\scshape SCOPe} \\
%\% Designable $(\uparrow)$ & $0.34_{\pm 0.01}$ & $0.27_{\pm 0.01}$ & $0.28_{\pm 0.01}$\\
%LPIPS $(\downarrow)$ & $0.25_{\pm 0.00}$ & $0.29_{\pm 0.01}$ & $0.33_{\pm 0.01}$\\
%\midrule
\% Success $(\uparrow)$ & $1.3$ & $9.2$ & $24.5$\\
\% scRMSD < 2 \AA $(\uparrow)$ & $44.3$ & $28.9$ & $42.$\\
\% mRMSD < 1 \AA $(\uparrow)$ & $4.1$ & $24.0$ & $45.8$\\

\bottomrule
\end{tabular}
}
\vspace{0.45cm}
% \vspace{0.05cm}
\vspace{-0.3cm}
\captionof{table}{{\scshape RFDiff} benchmark metrics (averaged over the 11 targets, 100 samples each). Success: scRMSD < 2\AA, motifRMSD < 1\AA. Details in \cref{sec:protein_design}.}
\label{tab:image-outpainting_prot}
\end{minipage}
\vspace{-0.4cm}
\end{figure}
\label{fig:protein_comp}

\section{Conclusion}
\label{sec:conclusion}
We presented a unified mathematical framework, based on Doob's $h$-transform, to better understand and classify different conditional diffusion methods.
Under this framework, we proposed DEFT, a novel parameter-efficient conditional fine-tuning method that does not require backpropagation through large pre-trained score networks, resulting in efficient sampling. We evaluated DEFT on several image reconstruction tasks and showed that it reliably outperformed standard methods, both in time, reconstruction quality and perceptual similarity metrics. While DEFT requires additional training on a small dataset of paired measurements, we find that it is still faster than many existing baselines due to being able to use fewer sampling steps during evaluation, and not needing to backpropagate during evaluation.

\vspace{-0.4cm}
\paragraph*{Limitations and future work} The DEFT framework uses
% relies on the availability of 
a (small) fine-tuning dataset, 
% which is 
in contrast to zero-shot conditional sampling approaches, e.g., DPS \cite{chung2022diffusion} or $\Pi$GDM \cite{song2022pseudoinverse}. Fine-tuning on small datasets may have the risk of 
% reproducing and 
overfitting to biases inherent in the data. 
In contrast to zero-shot conditional sampling, DEFT assumes no knowledge of the forward operator. However, the forward operator can be incorporated as an inductive bias within the network architecture to improve performance.
We also proposed a zero-shot approach through the optimal control loss in Section~\ref{sec:online_finetuning}, which only needs a single observation $\m$ to learn the $h$-transform. Though we show promising results scaling this approach to the MNIST dataset in Appendix~\ref{app:amort_cond_training}, the computational burden of simulating the full SDE at each iteration is still high, which might make this optimal control loss infeasible for high-dimensional data. However, there is promising recent work on partial trajectory optimisation \cite{zhang2023diffusion}, which may reduce the computational burden of the stochastic control objective, making it competitive with existing methods.

%which differs from existing approaches in that it takes into account the forward operator. For the motif scaffolding task this means we denoise both the scaffold and the motif.
%We evaluated the DEFT approach on several image reconstruction tasks and motif scaffolding in protein design and outperform standard methods. 
%We further investigated the performance of the {\scshape amortised} approach by comparing to RFDiffusion on contiguous motifs.
%Surprisingly, our {\scshape amortised} implementation of Genie \citep{lin2023generating} achieves notable \emph{in-silico} success rates of between 3 -- 50\% (as per the RFDiffusion definition) across the targets.
%Though it lags behind RFDiffusion in 9/12 targets, it achieves this without low-temperature sampling, a mere 10\% of RFDiffusion's parameter count, and after being trained for just 1.2\% of the time. This positions the {\scshape amortised} approach as a promising candidate for further improving motif scaffolding, potentially opening up new applications in protein engineering for drug discovery and enzyme design.

\section*{Acknowledgements}
Alexander Denker acknowledges support by the EPSRC programme grant EP/V026259/1. Shreyas Padhy is funded by the University of
Cambridge Harding Distinguished Postgraduate Scholars Programme.

\bibliographystyle{plainnat}
\bibliography{bibliography}

\begin{thebibliography}{84}
\providecommand{\natexlab}[1]{#1}
\providecommand{\url}[1]{\texttt{#1}}
\expandafter\ifx\csname urlstyle\endcsname\relax
  \providecommand{\doi}[1]{doi: #1}\else
  \providecommand{\doi}{doi: \begingroup \urlstyle{rm}\Url}\fi

\bibitem[Albergo et~al.(2023)Albergo, Boffi, and Vanden-Eijnden]{albergo2023stochastic}
Michael~S Albergo, Nicholas~M Boffi, and Eric Vanden-Eijnden.
\newblock Stochastic interpolants: A unifying framework for flows and diffusions.
\newblock \emph{arXiv preprint arXiv:2303.08797}, 2023.

\bibitem[Anderson(1982)]{anderson1982reverse}
Brian~D.O. Anderson.
\newblock Reverse-time diffusion equation models.
\newblock \emph{Stochastic Processes and their Applications}, 12\penalty0 (3):\penalty0 313--326, 1982.

\bibitem[Arridge et~al.(2019)Arridge, Maass, {\"O}ktem, and Sch{\"o}nlieb]{arridge2019solving}
Simon Arridge, Peter Maass, Ozan {\"O}ktem, and Carola-Bibiane Sch{\"o}nlieb.
\newblock Solving inverse problems using data-driven models.
\newblock \emph{Acta Numerica}, 28:\penalty0 1--174, 2019.

\bibitem[Bakry et~al.(2014)Bakry, Gentil, Ledoux, et~al.]{bakry2014analysis}
Dominique Bakry, Ivan Gentil, Michel Ledoux, et~al.
\newblock \emph{Analysis and geometry of Markov diffusion operators}, volume 103.
\newblock Springer, 2014.

\bibitem[Batzolis et~al.(2021)Batzolis, Stanczuk, Sch{\"o}nlieb, and Etmann]{batzolis2021conditional}
Georgios Batzolis, Jan Stanczuk, Carola-Bibiane Sch{\"o}nlieb, and Christian Etmann.
\newblock Conditional image generation with score-based diffusion models.
\newblock \emph{arXiv preprint arXiv:2111.13606}, 2021.

\bibitem[Berner et~al.(2022)Berner, Richter, and Ullrich]{berner2022optimal}
Julius Berner, Lorenz Richter, and Karen Ullrich.
\newblock An optimal control perspective on diffusion-based generative modeling.
\newblock In \emph{NeurIPS 2022 Workshop on Score-Based Methods}, 2022.

\bibitem[Bernton et~al.(2019)Bernton, Heng, Doucet, and Jacob]{bernton2019schr}
Espen Bernton, Jeremy Heng, Arnaud Doucet, and Pierre~E Jacob.
\newblock Schrödinger bridge samplers.
\newblock \emph{arXiv preprint}, 2019.

\bibitem[Bi{\'n}kowski et~al.(2018)Bi{\'n}kowski, Sutherland, Arbel, and Gretton]{binkowski2018demystifying}
Mikolaj Bi{\'n}kowski, Danica~J Sutherland, Michael Arbel, and Arthur Gretton.
\newblock Demystifying mmd gans.
\newblock \emph{arXiv preprint arXiv:1801.01401}, 2018.

\bibitem[Blau and Michaeli(2018)]{blau2018perception}
Yochai Blau and Tomer Michaeli.
\newblock The perception-distortion tradeoff.
\newblock In \emph{Proceedings of the IEEE conference on computer vision and pattern recognition}, pages 6228--6237, 2018.

\bibitem[Bortoli(2022)]{de2022convergence}
Valentin~De Bortoli.
\newblock Convergence of denoising diffusion models under the manifold hypothesis.
\newblock \emph{Transactions on Machine Learning Research}, 2022.
\newblock ISSN 2835-8856.

\bibitem[Chung et~al.(2022)Chung, Sim, Ryu, and Ye]{chung2022improving}
Hyungjin Chung, Byeongsu Sim, Dohoon Ryu, and Jong~Chul Ye.
\newblock Improving diffusion models for inverse problems using manifold constraints.
\newblock In \emph{NeurIPS}, 2022.

\bibitem[Chung et~al.(2023)Chung, Kim, Mccann, Klasky, and Ye]{chung2022diffusion}
Hyungjin Chung, Jeongsol Kim, Michael~Thompson Mccann, Marc~Louis Klasky, and Jong~Chul Ye.
\newblock Diffusion posterior sampling for general noisy inverse problems.
\newblock In \emph{ICLR}, 2023.

\bibitem[Clark et~al.(2024)Clark, Vicol, Swersky, and Fleet]{clark2024directly}
Kevin Clark, Paul Vicol, Kevin Swersky, and David~J. Fleet.
\newblock Directly fine-tuning diffusion models on differentiable rewards.
\newblock In \emph{The Twelfth International Conference on Learning Representations}, 2024.
\newblock URL \url{https://openreview.net/forum?id=1vmSEVL19f}.

\bibitem[De~Bortoli et~al.(2021{\natexlab{a}})De~Bortoli, Doucet, Heng, and Thornton]{de2021simulating}
Valentin De~Bortoli, Arnaud Doucet, Jeremy Heng, and James Thornton.
\newblock Simulating diffusion bridges with score matching.
\newblock \emph{arXiv preprint arXiv:2111.07243}, 2021{\natexlab{a}}.

\bibitem[De~Bortoli et~al.(2021{\natexlab{b}})De~Bortoli, Thornton, Heng, and Doucet]{de2021diffusion}
Valentin De~Bortoli, James Thornton, Jeremy Heng, and Arnaud Doucet.
\newblock Diffusion {S}chr{\"o}dinger bridge with applications to score-based generative modeling.
\newblock \emph{NeurIPS}, 2021{\natexlab{b}}.

\bibitem[Dhariwal and Nichol(2021)]{dhariwal2021diffusion}
Prafulla Dhariwal and Alexander Nichol.
\newblock Diffusion models beat gans on image synthesis.
\newblock \emph{NeurIPS}, 2021.

\bibitem[Didi et~al.(2023)Didi, Vargas, Mathis, Dutordoir, Mathieu, Komorowska, and Lio]{didi2023framework}
Kieran Didi, Francisco Vargas, Simon~V Mathis, Vincent Dutordoir, Emile Mathieu, Urszula~J Komorowska, and Pietro Lio.
\newblock A framework for conditional diffusion modelling with applications in motif scaffolding for protein design.
\newblock \emph{arXiv preprint arXiv:2312.09236}, 2023.

\bibitem[Domingo-Enrich et~al.(2024)Domingo-Enrich, Drozdzal, Karrer, and Chen]{domingo2024adjoint}
Carles Domingo-Enrich, Michal Drozdzal, Brian Karrer, and Ricky~TQ Chen.
\newblock Adjoint matching: Fine-tuning flow and diffusion generative models with memoryless stochastic optimal control.
\newblock \emph{arXiv preprint arXiv:2409.08861}, 2024.

\bibitem[Dutordoir et~al.(2023)Dutordoir, Saul, Ghahramani, and Simpson]{dutordoir2023neural}
Vincent Dutordoir, Alan Saul, Zoubin Ghahramani, and Fergus Simpson.
\newblock Neural diffusion processes.
\newblock In \emph{ICML}, pages 8990--9012. PMLR, 2023.

\bibitem[Efron(2011)]{efron2011tweedie}
Bradley Efron.
\newblock Tweedie’s formula and selection bias.
\newblock \emph{Journal of the American Statistical Association}, 106\penalty0 (496):\penalty0 1602--1614, 2011.

\bibitem[Fienup(1982)]{fienup1982phase}
James~R Fienup.
\newblock Phase retrieval algorithms: a comparison.
\newblock \emph{Applied optics}, 21\penalty0 (15):\penalty0 2758--2769, 1982.

\bibitem[Finzi et~al.(2023)Finzi, Boral, Wilson, Sha, and Zepeda-N{\'u}{\~n}ez]{finzi2023user}
Marc~Anton Finzi, Anudhyan Boral, Andrew~Gordon Wilson, Fei Sha, and Leonardo Zepeda-N{\'u}{\~n}ez.
\newblock User-defined event sampling and uncertainty quantification in diffusion models for physical dynamical systems.
\newblock In \emph{ICML}, pages 10136--10152. PMLR, 2023.

\bibitem[Fleming and Rishel(2012)]{fleming2012deterministic}
Wendell~H Fleming and Raymond~W Rishel.
\newblock \emph{Deterministic and stochastic optimal control}, volume~1.
\newblock Springer Science \& Business Media, 2012.

\bibitem[Han et~al.(2022)Han, Zheng, and Zhou]{han2022card}
Xizewen Han, Huangjie Zheng, and Mingyuan Zhou.
\newblock Card: Classification and regression diffusion models.
\newblock \emph{NeurIPS}, 35:\penalty0 18100--18115, 2022.

\bibitem[Hauptmann et~al.(2020)Hauptmann, Adler, Arridge, and {\"O}ktem]{hauptmann2020multi}
Andreas Hauptmann, Jonas Adler, Simon Arridge, and Ozan {\"O}ktem.
\newblock Multi-scale learned iterative reconstruction.
\newblock \emph{IEEE transactions on computational imaging}, 6:\penalty0 843--856, 2020.

\bibitem[He et~al.(2016)He, Zhang, Ren, and Sun]{he2016deep}
Kaiming He, Xiangyu Zhang, Shaoqing Ren, and Jian Sun.
\newblock Deep residual learning for image recognition.
\newblock In \emph{Proceedings of the IEEE conference on computer vision and pattern recognition}, pages 770--778, 2016.

\bibitem[Ho and Salimans(2021)]{ho2022classifier}
Jonathan Ho and Tim Salimans.
\newblock Classifier-free diffusion guidance.
\newblock In \emph{NeurIPS 2021 Workshop on Deep Generative Models and Downstream Applications}, 2021.

\bibitem[Ho et~al.(2020)Ho, Jain, and Abbeel]{ho2020denoising}
Jonathan Ho, Ajay Jain, and Pieter Abbeel.
\newblock Denoising diffusion probabilistic models.
\newblock \emph{NeurIPS}, 33:\penalty0 6840--6851, 2020.

\bibitem[Jalal et~al.(2021)Jalal, Arvinte, Daras, Price, Dimakis, and Tamir]{jalal2021robust}
Ajil Jalal, Marius Arvinte, Giannis Daras, Eric Price, Alexandros~G Dimakis, and Jon Tamir.
\newblock Robust compressed sensing mri with deep generative priors.
\newblock \emph{NeurIPS}, 34:\penalty0 14938--14954, 2021.

\bibitem[Jumper et~al.(2021)Jumper, Evans, Pritzel, Green, Figurnov, Ronneberger, Tunyasuvunakool, Bates, {\v{Z}}{\'\i}dek, Potapenko, et~al.]{jumper2021highly}
John Jumper, Richard Evans, Alexander Pritzel, Tim Green, Michael Figurnov, Olaf Ronneberger, Kathryn Tunyasuvunakool, Russ Bates, Augustin {\v{Z}}{\'\i}dek, Anna Potapenko, et~al.
\newblock Highly accurate protein structure prediction with alphafold.
\newblock \emph{Nature}, 596\penalty0 (7873):\penalty0 583--589, 2021.

\bibitem[Kappen(2005)]{kappen2005linear}
Hilbert~J Kappen.
\newblock Linear theory for control of nonlinear stochastic systems.
\newblock \emph{Physical review letters}, 95\penalty0 (20):\penalty0 200201, 2005.

\bibitem[Kawar et~al.(2022)Kawar, Elad, Ermon, and Song]{kawar2022denoising}
Bahjat Kawar, Michael Elad, Stefano Ermon, and Jiaming Song.
\newblock Denoising diffusion restoration models.
\newblock In \emph{ICLR Workshop on Deep Generative Models for Highly Structured Data}, 2022.

\bibitem[Kingma and Ba(2015)]{kingma2014adam}
Diederik~P Kingma and Jimmy Ba.
\newblock Adam: A method for stochastic optimization.
\newblock \emph{ICLR 2015}, 2015.

\bibitem[LeCun and Cortes(2010)]{mnisthandwrittendigit}
Yann LeCun and Corinna Cortes.
\newblock {MNIST} handwritten digit database.
\newblock 2010.
\newblock URL \url{http://yann.lecun.com/exdb/mnist/}.

\bibitem[Leuschner et~al.(2021)Leuschner, Schmidt, Baguer, and Maass]{leuschner2021lodopab}
Johannes Leuschner, Maximilian Schmidt, Daniel~Otero Baguer, and Peter Maass.
\newblock Lodopab-ct, a benchmark dataset for low-dose computed tomography reconstruction.
\newblock \emph{Scientific Data}, 8\penalty0 (1):\penalty0 109, 2021.

\bibitem[Li et~al.(2020)Li, Wong, Chen, and Duvenaud]{li2020scalable}
Xuechen Li, Ting-Kam~Leonard Wong, Ricky T.~Q. Chen, and David~K. Duvenaud.
\newblock Scalable gradients and variational inference for stochastic differential equations.
\newblock In \emph{Symposium on Advances in Approximate Bayesian Inference}, pages 1--28. PMLR, 2020.

\bibitem[Lin and Alquraishi(2023)]{lin2023generating}
Yeqing Lin and Mohammed Alquraishi.
\newblock Generating novel, designable, and diverse protein structures by equivariantly diffusing oriented residue clouds.
\newblock In \emph{Proceedings of the 40th International Conference on Machine Learning}, volume 202 of \emph{Proceedings of Machine Learning Research}, pages 20978--21002. PMLR, 23--29 Jul 2023.

\bibitem[Lipman et~al.(2022)Lipman, Chen, Ben-Hamu, Nickel, and Le]{lipman2022flow}
Yaron Lipman, Ricky~TQ Chen, Heli Ben-Hamu, Maximilian Nickel, and Matt Le.
\newblock Flow matching for generative modeling.
\newblock \emph{arXiv preprint arXiv:2210.02747}, 2022.

\bibitem[Liu et~al.(2023)Liu, Vahdat, Huang, Theodorou, Nie, and Anandkumar]{liu20232}
Guan-Horng Liu, Arash Vahdat, De-An Huang, Evangelos~A. Theodorou, Weili Nie, and Anima Anandkumar.
\newblock I2sb: image-to-image schr\"{o}dinger bridge.
\newblock In \emph{Proceedings of the 40th International Conference on Machine Learning}, ICML'23. JMLR.org, 2023.

\bibitem[Liu and Wu(2023)]{liu2023learning}
Xingchao Liu and Lemeng Wu.
\newblock Learning diffusion bridges on constrained domains.
\newblock In \emph{ICLR}, 2023.

\bibitem[Liu et~al.(2022)Liu, Gong, and Liu]{liu2022flow}
Xingchao Liu, Chengyue Gong, and Qiang Liu.
\newblock Flow straight and fast: Learning to generate and transfer data with rectified flow.
\newblock \emph{arXiv preprint arXiv:2209.03003}, 2022.

\bibitem[Malkin et~al.(2022{\natexlab{a}})Malkin, Jain, Bengio, Sun, and Bengio]{malkin2022trajectory}
Nikolay Malkin, Moksh Jain, Emmanuel Bengio, Chen Sun, and Yoshua Bengio.
\newblock Trajectory balance: Improved credit assignment in gflownets.
\newblock \emph{NeurIPS}, 35:\penalty0 5955--5967, 2022{\natexlab{a}}.

\bibitem[Malkin et~al.(2022{\natexlab{b}})Malkin, Lahlou, Deleu, Ji, Hu, Everett, Zhang, and Bengio]{malkin2022gflownets}
Nikolay Malkin, Salem Lahlou, Tristan Deleu, Xu~Ji, Edward Hu, Katie Everett, Dinghuai Zhang, and Yoshua Bengio.
\newblock Gflownets and variational inference.
\newblock \emph{arXiv preprint arXiv:2210.00580}, 2022{\natexlab{b}}.

\bibitem[Mardani et~al.(2024)Mardani, Song, Kautz, and Vahdat]{mardani2023variational}
Morteza Mardani, Jiaming Song, Jan Kautz, and Arash Vahdat.
\newblock A variational perspective on solving inverse problems with diffusion models.
\newblock In \emph{ICLR}, 2024.

\bibitem[McCollough et~al.(2017)McCollough, Bartley, Carter, Chen, Drees, Edwards, Holmes~III, Huang, Khan, Leng, et~al.]{mccollough2017low}
Cynthia~H McCollough, Adam~C Bartley, Rickey~E Carter, Baiyu Chen, Tammy~A Drees, Phillip Edwards, David~R Holmes~III, Alice~E Huang, Farhana Khan, Shuai Leng, et~al.
\newblock Low-dose ct for the detection and classification of metastatic liver lesions: results of the 2016 low dose ct grand challenge.
\newblock \emph{Medical physics}, 44\penalty0 (10):\penalty0 e339--e352, 2017.

\bibitem[Meng and Kabashima(2022)]{meng2022diffusion}
Xiangming Meng and Yoshiyuki Kabashima.
\newblock Diffusion model based posterior sampling for noisy linear inverse problems.
\newblock \emph{arXiv preprint arXiv:2211.12343}, 2022.

\bibitem[Nilsback and Zisserman(2008)]{nilsback08}
Maria-Elena Nilsback and Andrew Zisserman.
\newblock Automated flower classification over a large number of classes.
\newblock In \emph{Indian Conference on Computer Vision, Graphics and Image Processing}, Dec 2008.

\bibitem[N{\"u}sken and Richter(2021)]{nusken2021solving}
Nikolas N{\"u}sken and Lorenz Richter.
\newblock Solving high-dimensional {H}amilton--{J}acobi--{B}ellman {PDEs} using neural networks: perspectives from the theory of controlled diffusions and measures on path space.
\newblock \emph{Partial Differential Equations and Applications}, 2\penalty0 (4):\penalty0 1--48, 2021.

\bibitem[Phillips et~al.(2024)Phillips, Dau, Hutchinson, De~Bortoli, Deligiannidis, and Doucet]{phillips2024particle}
Angus Phillips, Hai-Dang Dau, Michael~John Hutchinson, Valentin De~Bortoli, George Deligiannidis, and Arnaud Doucet.
\newblock Particle denoising diffusion sampler.
\newblock \emph{arXiv preprint arXiv:2402.06320}, 2024.

\bibitem[Poole et~al.(2023)Poole, Jain, Barron, and Mildenhall]{poole2023dreamfusion}
Ben Poole, Ajay Jain, Jonathan~T. Barron, and Ben Mildenhall.
\newblock Dreamfusion: Text-to-3d using 2d diffusion.
\newblock In \emph{ICLR}, 2023.

\bibitem[Ramesh et~al.(2021)Ramesh, Pavlov, Goh, Gray, Voss, Radford, Chen, and Sutskever]{pmlr-v139-ramesh21a}
Aditya Ramesh, Mikhail Pavlov, Gabriel Goh, Scott Gray, Chelsea Voss, Alec Radford, Mark Chen, and Ilya Sutskever.
\newblock Zero-shot text-to-image generation.
\newblock In Marina Meila and Tong Zhang, editors, \emph{Proceedings of the 38th International Conference on Machine Learning}, volume 139 of \emph{Proceedings of Machine Learning Research}, pages 8821--8831. PMLR, 18--24 Jul 2021.

\bibitem[Ravula et~al.(2023)Ravula, Levac, Jalal, Tamir, and Dimakis]{ravula2023optimizing}
Sriram Ravula, Brett Levac, Ajil Jalal, Jon Tamir, and Alex Dimakis.
\newblock Optimizing sampling patterns for compressed sensing {MRI} with diffusion generative models.
\newblock In \emph{NeurIPS 2023 Workshop on Deep Learning and Inverse Problems}, 2023.

\bibitem[Richter et~al.(2020)Richter, Boustati, N{\"u}sken, Ruiz, and Akyildiz]{richter2020vargrad}
Lorenz Richter, Ayman Boustati, Nikolas N{\"u}sken, Francisco Ruiz, and Omer~Deniz Akyildiz.
\newblock Vargrad: a low-variance gradient estimator for variational inference.
\newblock \emph{NeurIPS}, 33:\penalty0 13481--13492, 2020.

\bibitem[Richter et~al.(2023)Richter, Berner, and Liu]{richter2023improved}
Lorenz Richter, Julius Berner, and Guan-Horng Liu.
\newblock Improved sampling via learned diffusions.
\newblock \emph{arXiv preprint arXiv:2307.01198}, 2023.

\bibitem[Rogers and Williams(2000)]{rogers2000diffusions}
L~Chris~G Rogers and David Williams.
\newblock \emph{Diffusions, Markov processes and martingales: Volume 2, It{\^o} calculus}, volume~2.
\newblock Cambridge university press, 2000.

\bibitem[Rout et~al.(2024)Rout, Raoof, Daras, Caramanis, Dimakis, and Shakkottai]{rout2024solving}
Litu Rout, Negin Raoof, Giannis Daras, Constantine Caramanis, Alex Dimakis, and Sanjay Shakkottai.
\newblock Solving linear inverse problems provably via posterior sampling with latent diffusion models.
\newblock \emph{NeurIPS}, 36, 2024.

\bibitem[Rozet and Louppe(2023)]{rozet2023score}
Fran{\c{c}}ois Rozet and Gilles Louppe.
\newblock Score-based data assimilation.
\newblock \emph{arXiv preprint arXiv:2306.10574}, 2023.

\bibitem[Russakovsky et~al.(2015)Russakovsky, Deng, Su, Krause, Satheesh, Ma, Huang, Karpathy, Khosla, Bernstein, et~al.]{russakovsky2015imagenet}
Olga Russakovsky, Jia Deng, Hao Su, Jonathan Krause, Sanjeev Satheesh, Sean Ma, Zhiheng Huang, Andrej Karpathy, Aditya Khosla, Michael Bernstein, et~al.
\newblock Imagenet large scale visual recognition challenge.
\newblock \emph{IJCV}, 115:\penalty0 211--252, 2015.

\bibitem[Saharia et~al.(2022{\natexlab{a}})Saharia, Chan, Chang, Lee, Ho, Salimans, Fleet, and Norouzi]{saharia2022palette}
Chitwan Saharia, William Chan, Huiwen Chang, Chris Lee, Jonathan Ho, Tim Salimans, David Fleet, and Mohammad Norouzi.
\newblock Palette: Image-to-image diffusion models.
\newblock In \emph{ACM SIGGRAPH 2022 conference proceedings}, pages 1--10, 2022{\natexlab{a}}.

\bibitem[Saharia et~al.(2022{\natexlab{b}})Saharia, Ho, Chan, Salimans, Fleet, and Norouzi]{saharia2022image}
Chitwan Saharia, Jonathan Ho, William Chan, Tim Salimans, David~J Fleet, and Mohammad Norouzi.
\newblock Image super-resolution via iterative refinement.
\newblock \emph{TPAMI}, 45\penalty0 (4):\penalty0 4713--4726, 2022{\natexlab{b}}.

\bibitem[S{\"a}rkk{\"a} and Solin(2019)]{sarkka2019applied}
Simo S{\"a}rkk{\"a} and Arno Solin.
\newblock \emph{{Applied stochastic differential equations}}, volume~10.
\newblock Cambridge University Press, 2019.

\bibitem[Simon~V et~al.(2023)Simon~V, Urszula, Mateja, and Pietro]{mathis2023nmd}
Mathis Simon~V, Julia~Komorowska Urszula, Jamnik Mateja, and Lio Pietro.
\newblock Normal mode diffusion: Towards dynamics-informed protein design.
\newblock \emph{The 2023 ICML Workshop on Computational Biology. Baltimore, Maryland, USA, 2023. C}, 2023.

\bibitem[Somnath et~al.(2023)Somnath, Pariset, Hsieh, Martinez, Krause, and Bunne]{somnath2023aligned}
Vignesh~Ram Somnath, Matteo Pariset, Ya-Ping Hsieh, Maria~Rodriguez Martinez, Andreas Krause, and Charlotte Bunne.
\newblock Aligned diffusion schr\"odinger bridges.
\newblock \emph{arXiv preprint arXiv:2302.11419}, 2023.

\bibitem[Song et~al.(2021{\natexlab{a}})Song, Meng, and Ermon]{song2021denoising}
Jiaming Song, Chenlin Meng, and Stefano Ermon.
\newblock Denoising diffusion implicit models.
\newblock In \emph{ICLR}, 2021{\natexlab{a}}.

\bibitem[Song et~al.(2022{\natexlab{a}})Song, Vahdat, Mardani, and Kautz]{song2022pseudoinverse}
Jiaming Song, Arash Vahdat, Morteza Mardani, and Jan Kautz.
\newblock Pseudoinverse-guided diffusion models for inverse problems.
\newblock In \emph{ICLR}, 2022{\natexlab{a}}.

\bibitem[Song and Ermon(2020)]{song2020improved}
Yang Song and Stefano Ermon.
\newblock Improved techniques for training score-based generative models.
\newblock \emph{Advances in neural information processing systems}, 33:\penalty0 12438--12448, 2020.

\bibitem[Song et~al.(2021{\natexlab{b}})Song, Sohl-Dickstein, Kingma, Kumar, Ermon, and Poole]{song2020score}
Yang Song, Jascha Sohl-Dickstein, Diederik~P Kingma, Abhishek Kumar, Stefano Ermon, and Ben Poole.
\newblock Score-based generative modeling through stochastic differential equations.
\newblock In \emph{ICLR}, 2021{\natexlab{b}}.

\bibitem[Song et~al.(2021{\natexlab{c}})Song, {Sohl-Dickstein}, Kingma, Kumar, Ermon, and Poole]{song2021Scorebased}
Yang Song, Jascha {Sohl-Dickstein}, Diederik~P Kingma, Abhishek Kumar, Stefano Ermon, and Ben Poole.
\newblock Score-based generative modeling through stochastic differential equations.
\newblock In \emph{ICLR}, 2021{\natexlab{c}}.

\bibitem[Song et~al.(2022{\natexlab{b}})Song, Shen, Xing, and Ermon]{song2021solving}
Yang Song, Liyue Shen, Lei Xing, and Stefano Ermon.
\newblock Solving inverse problems in medical imaging with score-based generative models.
\newblock In \emph{ICLR}, 2022{\natexlab{b}}.

\bibitem[Tran et~al.(2021)Tran, Tran, Phung, and Hoai]{tran2021explore}
Phong Tran, Anh~Tuan Tran, Quynh Phung, and Minh Hoai.
\newblock Explore image deblurring via encoded blur kernel space.
\newblock In \emph{Proceedings of the IEEE/CVF CVPR}, pages 11956--11965, 2021.

\bibitem[Tzen and Raginsky(2019)]{tzen2019theoretical}
Belinda Tzen and Maxim Raginsky.
\newblock Theoretical guarantees for sampling and inference in generative models with latent diffusions.
\newblock In \emph{Conference on Learning Theory}, pages 3084--3114. PMLR, 2019.

\bibitem[Uehara et~al.(2024)Uehara, Zhao, Black, Hajiramezanali, Scalia, Diamant, Tseng, Biancalani, and Levine]{uehara2024fine}
Masatoshi Uehara, Yulai Zhao, Kevin Black, Ehsan Hajiramezanali, Gabriele Scalia, Nathaniel~Lee Diamant, Alex~M Tseng, Tommaso Biancalani, and Sergey Levine.
\newblock Fine-tuning of continuous-time diffusion models as entropy-regularized control.
\newblock \emph{arXiv preprint arXiv:2402.15194}, 2024.

\bibitem[Vargas et~al.(2021)Vargas, Thodoroff, Lamacraft, and Lawrence]{vargas2021solving}
Francisco Vargas, Pierre Thodoroff, Austen Lamacraft, and Neil Lawrence.
\newblock Solving {S}chr{\"o}dinger bridges via maximum likelihood.
\newblock \emph{Entropy}, 23\penalty0 (9):\penalty0 1134, 2021.

\bibitem[Vargas et~al.(2023{\natexlab{a}})Vargas, Grathwohl, and Doucet]{vargas2023denoising}
Francisco Vargas, Will~Sussman Grathwohl, and Arnaud Doucet.
\newblock Denoising diffusion samplers.
\newblock In \emph{ICLR}, 2023{\natexlab{a}}.

\bibitem[Vargas et~al.(2023{\natexlab{b}})Vargas, Ovsianas, Fernandes, Girolami, Lawrence, and N{\"u}sken]{vargas2021bayesian}
Francisco Vargas, Andrius Ovsianas, David Fernandes, Mark Girolami, Neil~D Lawrence, and Nikolas N{\"u}sken.
\newblock Bayesian learning via neural {S}chr{\"o}dinger--{F}{\"o}llmer flows.
\newblock \emph{Statistics and Computing}, 33\penalty0 (1):\penalty0 3, 2023{\natexlab{b}}.

\bibitem[Vargas et~al.(2024)Vargas, Padhy, Blessing, and N{\"u}sken]{vargas2024transport}
Francisco Vargas, Shreyas Padhy, Denis Blessing, and Nikolas N{\"u}sken.
\newblock Transport meets variational inference: Controlled monte carlo diffusions.
\newblock In \emph{ICLR}, 2024.

\bibitem[Watson et~al.(2023)Watson, Juergens, Bennett, Trippe, Yim, Eisenach, Ahern, Borst, Ragotte, Milles, et~al.]{watson2023novo}
Joseph~L Watson, David Juergens, Nathaniel~R Bennett, Brian~L Trippe, Jason Yim, Helen~E Eisenach, Woody Ahern, Andrew~J Borst, Robert~J Ragotte, Lukas~F Milles, et~al.
\newblock De novo design of protein structure and function with rfdiffusion.
\newblock \emph{Nature}, pages 1--3, 2023.

\bibitem[Ye et~al.(2022)Ye, Wu, and Liu]{ye2022first}
Mao Ye, Lemeng Wu, and Qiang Liu.
\newblock First hitting diffusion models for generating manifold, graph and categorical data.
\newblock In \emph{NeurIPS}, 2022.

\bibitem[Zhang et~al.(2023{\natexlab{a}})Zhang, Chen, Liu, Courville, and Bengio]{zhang2023diffusion}
Dinghuai Zhang, Ricky Tian~Qi Chen, Cheng-Hao Liu, Aaron Courville, and Yoshua Bengio.
\newblock Diffusion generative flow samplers: Improving learning signals through partial trajectory optimization.
\newblock \emph{arXiv preprint arXiv:2310.02679}, 2023{\natexlab{a}}.

\bibitem[Zhang et~al.(2023{\natexlab{b}})Zhang, Rao, and Agrawala]{zhang2023adding}
Lvmin Zhang, Anyi Rao, and Maneesh Agrawala.
\newblock Adding conditional control to text-to-image diffusion models.
\newblock In \emph{Proceedings of the IEEE/CVF ICCV}, pages 3836--3847, 2023{\natexlab{b}}.

\bibitem[Zhang and Chen(2022{\natexlab{a}})]{zhang2021path}
Qinsheng Zhang and Yongxin Chen.
\newblock Path integral sampler: A stochastic control approach for sampling.
\newblock In \emph{ICLR}, 2022{\natexlab{a}}.
\newblock URL \url{https://openreview.net/forum?id=_uCb2ynRu7Y}.

\bibitem[Zhang and Chen(2022{\natexlab{b}})]{zhang2022fast}
Qinsheng Zhang and Yongxin Chen.
\newblock Fast sampling of diffusion models with exponential integrator.
\newblock \emph{arXiv preprint arXiv:2204.13902}, 2022{\natexlab{b}}.

\bibitem[Zhang et~al.(2018)Zhang, Isola, Efros, Shechtman, and Wang]{zhang2018unreasonable}
Richard Zhang, Phillip Isola, Alexei~A Efros, Eli Shechtman, and Oliver Wang.
\newblock The unreasonable effectiveness of deep features as a perceptual metric.
\newblock In \emph{Proceedings of the IEEE conference on computer vision and pattern recognition}, pages 586--595, 2018.

\bibitem[Zhou et~al.(2024)Zhou, Lou, Khanna, and Ermon]{zhou2024denoising}
Linqi Zhou, Aaron Lou, Samar Khanna, and Stefano Ermon.
\newblock Denoising diffusion bridge models.
\newblock In \emph{The Twelfth International Conference on Learning Representations}, 2024.
\newblock URL \url{https://openreview.net/forum?id=FKksTayvGo}.

\end{thebibliography}

%%%%%%%%%%%%%%%%%%%%%%%%%%%%%%%%%%%%%%%%%%%%%%%%%%%%%%%%%%%%
\newpage

\onecolumn
\appendix

\section{Background on diffusion formulations}
\label{app:background_diffmodels}
\subsection{Recap - continuous and discrete diffusion formulations}
The discretised DDPM versions with various discrete time schedules amount to the time-dependent OU process
% \begin{align}\label{eq:OUprocess}
%     \dd \rv{X}_t &= -\fwd{f}(\rv{x}_t, t) \dd t + \sigma(t) \fwd{\dd \rv{W}}_t \\
%     \fwd{f}(\mathbf{x}_t, t) &= \frac{\beta(t)}{2} \mathbf{x}_t \\
%     \sigma(t) &= \sqrt{\beta(t)},
% \end{align}
% The above is a bit redundant ($f$ serves little purpose especially that minus sign in $-f$) maybe consider:
\begin{align}\label{eq:OUprocess}
    \dd \rv{X}_t &= -\frac{\beta(t)}{2} \mX_t  \dd t + \sqrt{\beta(t)} \,\fwd{\dd \rv{W}}_t,
\end{align}
where choosing different time schedules amounts to choosing different functions $\beta(t)$. This process gives rise to the following transition probabilities
\begin{align}
p(\vx,t | \vx_0, 0) 
&= \fwd{p}_{t|0}(\vx | \vx_0) \\
&= \mathcal{N}\left(\vx_0 e^{-\int_0^T \frac{\beta(s)}{2} \dd s}, \mI  \int_0^T \beta(t) e^{-\int_0^{T-t} \beta(s) \dd s} \dd t  \right) \\
&= \mathcal{N} \left(\vx_0 e^{-\int_0^T \frac{\beta(s)}{2} \dd s}, \left(1 - e^{-\int_0^{T} \beta(s) \dd s} \right) \mI  \right),
\end{align}
see also Appendix B in \cite{song2020score}. With $\bar{\alpha}(t) = e^{-\int_0^{T} \beta(s) \dd s}$, this is the familiar form (\cite{ho2020denoising}):
\begin{align}
p(\vx, t | \vx_0, 0) = \fwd{p}_{t|0}(\vx | \vx_0) = \mathcal{N}\left( \vx_0 \sqrt{\bar{\alpha}(t)}, (1-\bar{\alpha}(t)) \mI \right),
\end{align}
with $\bar{\alpha}(t)$ time-dependent and we can therefore choose different functional forms for the noise schedule by either choosing the transition parameters $\beta(t)$ or the cumulative parameters $\alpha(t)$.

If we define the noise schedule in terms of $\beta(t)$, the time-dependent OU process is immediately apparent (see \eqref{eq:OUprocess}). If we define the noise schedule in terms of $\bar{\alpha}(t)$, the mean and variance of the corresponding OU process can simply be obtained from
\begin{align}
\beta(t) = - \frac{\dd}{\dd t}\left[ \ln{{\bar{\alpha}(t)}} \right].
\end{align}

\subsection{Score, noise and mean diffusion formulations}

The score-based model used for generation at inference time can be parametrised to model different quantities. The three most common one are the score, the noise and the mean. Using the score based SDE formulation we parametrise the network as the score that is the network approximates the quantity:
\begin{align}
    \nabla_{\vx} \ln p_t(\vx_t) \approx s^{\theta}_t(\vx_t)
\end{align}
Moving on to the DDPM formulation one typically trains a noise prediction network instead which is proportional to the score
\begin{align}
    s^{\theta}_t(\vx_t) =-  \frac{1}{\sqrt{1-\bar{\alpha}(t)}} \epsilon^{\theta}_t(\vx_t).
\end{align}
This formulation is typically preferable for training as it is known to learn a less stiff vector field \citep{zhang2022fast}.

Finally in its most naive form DDPM also admits a mean matching formulation
\begin{align}
\mu^{\theta}_0(\vx_t, t) = \frac{1}{\sqrt{\alpha_t}} \left(\vx_t - \frac{1- \alpha_t}{\sqrt{1- \bar{\alpha}(t)}} \epsilon^{\theta}_t(\vx_t)\right),
\end{align}
whilst not the ideal parametrisation for direct training, it is a useful expression/macro, for expressing the sampling updates more succinctly.

In this work we parametrise $\epsilon^{\theta}_t$ directly with our novel architecture for finetuning, using a noise matching objective as in DDPM, however we allude to and use the above parametrisations across our propositions and novel architectures. See Algorithm~\ref{algo:uncond_training} for training and Algorithm~\ref{algo:uncond_sampling} for sampling.

% \begin{align}
    
% \end{align}

\section{Related Work Discussion}
\label{sec:related_work}

A common distinction to all the works we will discuss in this section is that they all either train the conditional network from scratch or they initialise the conditional network with a pretrained score and fully train a conditional network. This is in stark contrast to DEFT which completely freezes the unconditional score and trains a highly efficient network to learn the $h$-transform which ranges from $4-17\%$ in total parameter size of the pretrained unconditional score network.

\paragraph{Classifier free guidance}{

Methodologies such as classifier free guidance \citep{ho2022classifier} do not model the forward operator explicitly. As a result, if these methods are applied to settings such as motif-scaffolding or image out-painting (where the conditioning is on a subset of the random variable), these methodologies would only denoise the scaffolding and the missing image patches. This is different to our approach which adds noise to both motif and scaffolding and then proceeds to denoise both jointly as part of the same space. In a way, one can view RFDiffusion's conditional training as an application of classifier-free guidance to this subset conditioning setting.
}

\paragraph{Image 2 Image Schr\"{o}dinger Bridges (I2SB \cite{liu20232})}{ I2SB  and more generally aligned Schr\"{o}dinger Bridges \citep{somnath2023aligned} are a recently proposed class of conditional generative models based on ideas from Schr\"{o}dinger bridges. The premise of these methods is that they aim to learn an interpolating diffusion between a clean data sample and a altered or corrupted data sample. This is in contrast to our framework: we consider an unconditioned SDE and condition it to hit an event at a particular time, thus learning an interpolating distribution between noise and an un-corrupted target distribution. This results in several algorithmic differences:

\begin{itemize}
    \item At its core, I2SB treat $\mY = \Pm(\mX_0) +\eta$ and $\mX_0$ as source and target distributions respectively; thus, at sampling time, $\mY$ is provided to the learned SDE which generates approximate samples from $\law {\mX_0}$. However, in our approach, the source distribution is $\gN(0,\mathbf{I})$ and we pass $\mY$ to the score network to then obtain approximate samples from $\law {\mX_0}$.
    
    \item The score network in I2SB is a function only of $\mX_t$ and not $\mY=\Pm(\mX_0) +\eta$. This means that in I2SB, the network is parametrised as $\epsilon^{\theta}_t(\mX_t)$, whilst in our setting we parametrise as $\epsilon^{\theta}_t(\mX_t, \Pm(\mX_0) +\eta, \Pm)$. In the case of completion tasks like motif-scaffolding or image out-painting, our paramerisation looks something like $\epsilon^{\theta}_t(\mX_t, \mX_0^{\mathrm{mask}}, \mathrm{mask})$. This makes the task much easier for the network as we effectively provide it with a binary variable indicating which parts of the image are conditioned and which are not. In I2SB,  the network must learn this on its own. Furthermore, as we show in \cref{prop:train}, adding this to the network parametrisation is essential to allow recovering the true conditional score.
    
    \item The training procedure in I2SB uses the diffusion bridge $p(\vx_t| \vx_0, \vy)$ to add noise to both the source and target distributions, whilst our forward process is given by the transition density of an OU process $p(\vx_t| \vx_0)$ and is identical to standard DDPM/VP-SDE \citep{ho2020denoising,song2021Scorebased} noise adding procedures.

    \item Finally and most importantly I2SB does full fine-tuning firstly initialising with a pretrained score and training all parameters of this large pretrained network to learn an unconditional network, this requires longer training times and significantly larger networks as they must be at least the same size as the unconditional score network. 
\end{itemize}

To summarise: whilst both methodologies employ similar mathematical methodologies (e.g. Diffusion Bridges \citep{de2021simulating}), their ideations and resulting methods are fundamentally different: on one side, \cite{liu20232} learns an interpolating distribution between the unconditioned $p(\vx_0)$ and conditioned $p(\vy| \vx_0)$ samples. On the other, we learn a denoising procedure that directly samples from the posterior  $p( \vx_0 | \vy)$; via this, we derive and explain most popular approaches for conditioning denoising diffusion models as part of our framework.

It's important to highlight that another akin approach to I2SB, also based on the h-transform but leveraging VP-SDes, was recently proposed in \citep{zhou2024denoising}.
}

\paragraph{CDE} \cite{batzolis2021conditional}{
Similar to classifier free guidance. CDE \cite{batzolis2021conditional} trains a conditional network from scratch without leveraging a pretrained unconditional score. For more details on CDE please see our detailed discussion in Appendix \ref{app:amort_cond_training}.
}

\paragraph{First Hitting Diffusions}{
A line of generative modelling methods proposed in \citep{liu2023learning,ye2022first} utilise the $h$-transform for unconditional generative modelling in the following settings: 

\begin{itemize}
    \item Hitting the target distribution $p_{\mathrm{data}}$ in a finite amount of time $[0,T]$ via time reversing an h-transformed VP-SDE conditioned to hit $0$ at time $T$.
    \item Constraining a diffusion process at time $T$ to lie in a subset of the reals $\Omega \subseteq \sR^d$. 
\end{itemize}

Whilst the aforementioned work uses a similar methodology and theory the focus is more in line with unconditional generative modelling rather than our setting which seeks to sample from the posterior arising in an inverse problem / conditional generative modelling.
}

\paragraph{RFDiffusion}{

As highlighted in \cref{algo:rfdiff} and in contrast to {\scshape amortised training}, RFDiffusion \citep{watson2023novo} does not noise the motif coordinates $\fX_0^{[M]}$ with the forward OU-Process, instead it directly aims to sample from $p(\fX_t^{[\backslash M]} | \fX_0^{[ M]})$ and estimate this score while keeping the motif fixed. We can relate this approach to our amortised learning of Doob's $h$-transform, by noting that RF diffusion can be understood as learning the marginal conditional score
\begin{align}
    p(\vx_t^{[\backslash M]} | \vx_0^{[ M]})   = \int   \overbrace{p(\vx_t | \vx_0^{[ M]}) }^{\propto h(t,\vx_t)p_t(\vx_t) } d \vx_t^{[ M]}.
\end{align}
This can be viewed as RFDiffusion estimating a marginal counterpart of our amortised $h$-transform approach.
See \cref{algo:cond_dobsh,algo:rfdiff} for more details on how these approaches differ in a pseudo-code implementation.
}

\section{Doob's \texorpdfstring{$h$}{h}-transform}

\subsection{Doob's \texorpdfstring{$h$}{h}-transform intuition}\label{app:intuit}

Doob's transform provides a formal mechanism for conditioning a stochastic differential equation (SDE) to hit an event at a given time.
%
% Practically, Doob's transform adds a new drift to the SDE, which 
The $h$-transform drift decomposes into two terms via Bayes rule, a conditional and a prior score
\begin{equation}
\cboxnew{\nabla_{\bH_t} \ln  \bwd{P}_{0|t}(\mX_0 \in B \mid \bH_t) = \nabla_{\bH_t} \ln  \fwd{P}_{t|0}(\bH_t \mid \mX_0 \in B) - \nabla_{\bH_t} \ln  {P}_{t}(\bH_t)},
\end{equation}
whereby the conditional score ensures that the event is hit at the specified boundary time, while the prior score ensures it is time-reversal of the correct forward process \citep{de2021simulating}. Doob's $h$-transform adds a new drift to the SDE which amounts to two terms (via Bayes Theorem), a conditional and an unconditional score
\begin{equation}
\nabla \ln  \bwd{P}_{0|t}(\mX_0 \in B | \cdot) = \nabla \ln  \fwd{P}_{t|0}(\cdot| \mX_0 \in B) -  \nabla \ln  {P}_{t}(\cdot).
\end{equation}
Interestingly, these two terms provide for a unique intuition: the Doob's transform SDE is the time reversal of the forward SDE corresponding to \eqref{eq:back_sde}, that is the time reversal of the forward SDE
\begin{align}
\dd \bX_t &= \fwd{b}_t(\bX_t) \,\dd t + \sigma_t \fwd{ \dd \rv{W}}_t, \quad \bX_0 \sim \fwd{P}_{0}(\cdot| \mX_0 \in B)
\label{eq:for_sde},
\end{align}
coincides with the Doob transformed SDE \eqref{eq:back_sde_h} \citep{de2021simulating}. 
Thus we can view Doob's transform as the following series of steps:
\begin{enumerate}
    \item Time reverse the SDE we want to condition (\eqref{eq:back_sde_h} to \eqref{eq:for_sde}).
    \item  Impose the condition via ancestral sampling from the conditioned distribution/posterior.
    \item Time reverse once more to be in the same time direction as we started.
\end{enumerate}

\subsection{Example: Truncated normal}
Here for illustrative purposes we frame the problem of sampling from a truncated normal distribution as simulating an SDE that is given by Doob’s h-transform.

Let's remind that a 1d truncated normal distribution had a density $p(x| a, b) \propto \mathds{1}_{x \in (a, b)}(x) \mathcal{N}(x|\mu, \sigma^2)$.
Now, let's assume a data distribution $p_0(x) = \mathcal{N}(\mu, \sigma^2)$ which is noised with an OU process~\eqref{eq:OUprocess}.
Thus we have that $p(x_0|x_t) = \mathcal{N}(x_0|\hat{\mu}_{0|t}(x_t), \hat{\sigma}_{0|t}(x_t)^2)$ is Gaussian, and so is $p(x_t) = \mathcal{N}(x_t|\hat{\mu}_t, \hat{\sigma}_t^2)$.
Let's add the constraint that the process hit at time $t=0$ the event $\bX_0 \in (a, b)$.
\begin{align}
% \label{eq:rev_sde_htrans}
    \dd \bH_t &= \beta(t) \left( \frac{\bH_t}{2} + \nabla_{\mH_t} \ln \fwd{P}_{t}(\bH_t)  - {\nabla_{\mH_t}\ln \bwd{P}_{0|t}(\mX_0 \in (a, b) \mid \mH_t) } \right)\,\dd t + \sqrt{\beta(t)} ~\bwd{ \dd \rv{W}}_t, % \quad \bX_T \sim  \dist_T
\end{align}
We have that the h-transform is given by
\begin{align}
    h(t, \bH_t) 
    &= \bwd{P}_{0|t}(\mX_0 \in (a, b) | \bH_t) 
    = \int \mathds{1}_{x \in (a, b)}(\mx_0) \bwd{p}_{0| t}(\mx_0|\bH_t) \mathrm{d} \mx_0 \nonumber \\
    &= \int \mathds{1}_{x \in (a, b)}(\mx_0) \mathcal{N}(x|\hat{\mu}_{0|t}(\bH_t), \hat{\sigma}_{0|t}(\bH_t)^2) \mathrm{d} \mx_0  \nonumber \\
    % = \frac{1}{2} \mathrm{erf}(\frac{a - \hat{x}(x_t)}{2\hat{\sigma}(x_t)})
    &= \frac{1}{\hat{\sigma}_{0|t}(\bH_t)} \frac{\phi\left( \frac{\bH_t - \hat{\mu}_{0|t}(\bH_t)}{\hat{\sigma}_{0|t}(\bH_t)} \right)}{\Phi\left( \frac{b - \hat{\mu}_{0|t}(\bH_t)}{\hat{\sigma}_{0|t}(\bH_t)} \right) - \Phi\left( \frac{a - \hat{\mu}_{0|t}(\bH_t)}{\hat{\sigma}_{0|t}(\bH_t)} \right)}
\end{align}
where $\phi(\xi) = \frac{1}{\sqrt{2 \pi}} \exp \left( - \frac{1}{2} \xi^2 \right)$ is the pdf of a standard normal distribution, $\Phi(\xi) = \frac{1}{2}\left( 1 + \erf(\xi/\sqrt{2})\right)$ its cumulative function.
The corrective drift term due to the h-transform can then be computed via autograd.
The unconditional score term can be computed in closed form.

\subsection{Doob's \texorpdfstring{$h$}{h}-transform classical noiseless setting}
\label{app:hard_guidance}
We now consider events of the form $\mX_0 \in B$ which are described by an equality constraint $\Pm(\bX_0) = \my$ with $\Pm$ a known \emph{measurement} (or \emph{forward}) operator and $\my$ an observation, which is a common setup in inverse problems such as inpainting or super-resolution.
%We will see concrete examples of $\Pm$ in \cref{sec:exp}.
%

%\begin{mdframed}[style=highlightedBox]
\begin{corollary}\label{col:inpaint}% (hard constraint)
% Let $\Pm \in \{0,1\}^{n \times d}$ be a matrix which selects $n<d$ observed indices of a $d$-dimensional vector,
% that indexes into a vector (e.g.\ $\Pm \vx= \vx_{0:m}$) 
Consider the reverse SDE \eqref{eq:back_sde}, 
% \begin{align}
%     \dd \bX_t &= \bwd{b}_t(\bX_t)\,\dd t + \sigma_t \bwd{ \dd \rv{W}}_t, \quad \bX_T \sim  \dist_T  \label{eq:rev_sde}
% \end{align}
then it follows that
\begin{align}
\label{eq:rev_sde_htrans}
    \dd \bH_t &= ( \bwd{b}_t(\bH_t) - \sigma_t^2 \cboxnew{\nabla_{\mH_t}\ln \bwd{P}_{0|t}(\Pm(\mX_0) = \m \mid \mH_t) })\,\dd t + \sigma_t \bwd{ \dd \rv{W}}_t, % \quad \bX_T \sim  \dist_T
\end{align}
with $\bwd{b}_t(\bH_t) = f_t(\bH_t) - \sigma_t^2 \nabla_{\bH_t} \ln p_t(\bH_t)$
satisfies $\law{\bH_s | \bH_t} = \law{ \bX_s | \bX_t, \Pm (\bX_0) =\m}$ thus $\law{  \bH_0}  = \law {\bX_0 | \Pm (\bX_0) =\m}$.
\end{corollary}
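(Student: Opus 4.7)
The plan is to obtain this statement as a direct specialisation of Proposition~\ref{prop:htrans} (Doob's $h$-transform) to the event
\begin{align}
B \;=\; \Pm^{-1}(\{\m\}) \;=\; \{\vx_0 : \Pm(\vx_0) = \m\}.
\end{align}
With this choice of $B$, the $h$-transform drift in Proposition~\ref{prop:htrans} becomes $\nabla_{\mH_t}\ln \bwd{P}_{0|t}(\Pm(\mX_0) = \m \mid \mH_t)$, which is exactly the corrective term appearing in \eqref{eq:rev_sde_htrans}. Assuming this identification is legitimate, the conclusions on the bridge marginals $\law{\bH_s \mid \bH_t} = \law{\bX_s \mid \bX_t, \Pm(\bX_0) = \m}$ and on the boundary law $\law{\bH_0} = \law{\bX_0 \mid \Pm(\bX_0) = \m}$ are inherited verbatim from Proposition~\ref{prop:htrans}.

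The main obstacle is that, in generic inverse problems of interest (e.g.\ inpainting, subsampling, or any operator with a continuous range), the event $B$ is typically of measure zero under the marginal law of $\mX_0$, so $\mathbb{P}(\mX_0 \in B) = 0$ and the elementary conditioning used in Proposition~\ref{prop:htrans} does not directly apply. The plan is to resolve this via disintegration: interpret $\bwd{P}_{0|t}(\Pm(\mX_0) = \m \mid \mH_t = \vx_t)$ as the value at $\m$ of the density of the pushforward measure $\law{\Pm(\mX_0) \mid \mH_t = \vx_t}$, which under mild smoothness assumptions on $\Pm$ and on $\bwd{p}_{0|t}$ is well-defined, strictly positive, and smooth in $\vx_t$, so that its log-gradient is an admissible drift correction.

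To make this rigorous, I would argue by approximation. Let $B_\varepsilon = \{\vx_0 : \|\Pm(\vx_0) - \m\| \le \varepsilon\}$; for each $\varepsilon > 0$ we have $\mathbb{P}(\mX_0 \in B_\varepsilon) > 0$, so Proposition~\ref{prop:htrans} applies and produces a conditioned SDE $\bH^\varepsilon$ with drift correction $\nabla_{\mH_t}\ln \bwd{P}_{0|t}(\mX_0 \in B_\varepsilon \mid \mH_t)$. Using the disintegration above, one checks pointwise convergence of this drift, as $\varepsilon \to 0$, to the drift in \eqref{eq:rev_sde_htrans}, and standard stability results for SDEs (together with uniform integrability from the regularity of the conditional densities) yield convergence in law of $\bH^\varepsilon$ to a process $\bH$ solving \eqref{eq:rev_sde_htrans}. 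Passing to the limit on both sides of the marginal identity $\law{\bH^\varepsilon_s \mid \bH^\varepsilon_t} = \law{\bX_s \mid \bX_t, \mX_0 \in B_\varepsilon}$ gives the claimed bridge property, and the $s = 0$ case yields the asserted law of $\bH_0$.

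The remaining steps (verifying that the prior score $\nabla_{\bH_t}\ln p_t(\bH_t)$ is indeed the time-reversal drift associated with the forward process, and that the Markov property of $\bH$ is preserved under the limit) are routine given the standing regularity assumptions and follow the same template as the proof of Proposition~\ref{prop:htrans}; they do not introduce new difficulties beyond the disintegration argument described above.
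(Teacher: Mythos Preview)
Your approach is essentially the same as the paper's: the paper treats this corollary as an immediate specialisation of Proposition~\ref{prop:htrans} to the event $B=\{\vx_0:\Pm(\vx_0)=\m\}$ and does not provide a separate proof. You go further than the paper by flagging the measure-zero issue and sketching a disintegration/limiting argument via $B_\varepsilon$; the paper simply states the result as a corollary without addressing this technicality, so your treatment is in fact more careful than the original.
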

%\end{mdframed}
\vspace{-0.2cm}
% Note that if \eqref{eq:back_sde} is the time-reversal of a diffusion process \citep{song2020score}, then \eqref{eq:rev_sde_htrans} provides us with an SDE to do image inpainting where $\Pm(\mX_t) = \m$ specifies the pixels that we observe/condition on.
Sampling from \eqref{eq:rev_sde_htrans} directly provides samples $\mx \sim p_{\mathrm{data}}$ which also satisfy $\Pm(\mx) = \my$.
Crucially, this SDE is guaranteed to hit the conditioning in finite time, unlike prior equilibrium-motivated approaches \citep{chung2022diffusion,dutordoir2023neural,finzi2023user,han2022card,meng2022diffusion, song2022pseudoinverse}. 
%However, sampling from the SDE Eqn.~\eqref{eq:rev_sde_htrans} requires access to the $h$-transform $\bwd{P}_{0|t}(\Pm(\mX_0) = \m \mid \mH_t)$.

% Another difference to these approaches being that in this setting we are imposing a noiseless constraint (i.e.\ a noiseless inverse problem) whilst most prior work is cast as noisy inverse posterior sampling problems \citep{chung2022diffusion}. We will now consider an approximation of Doob's transform which allows us to place a family of existing conditional sampling approaches within our framework.

% \subsection{Conditional sampling: reconstruction guided diffusion models}
% \label{sec:hard_guidance}
\paragraph{Reconstruction guidance}
%
% To paraphrase \cref{col:inpaint} further, had we have access to the $h$-transform, then we would be able to solve the inpainting task.
To get better insight into the challenge of sampling via Doob's $h$-transform in \eqref{eq:rev_sde_htrans} let us re-express the $h$-transform as
% \begin{align}
%     \bwd{P}_{0|t}(\Pm\mX_0 = \m| \cdot) = \int \mathds{1}_{\Pm \vx_0 = \m} (\vx_0)\bwd{p}_{0| t}(\vx_0| \cdot) \dd \vx_0
% \end{align}
\begin{align}
    \cboxnew{\bwd{P}_{0|t}(\Pm(\mX_0) = \m \mid \bH_t)} = \int \mathds{1}_{\Pm(\vx_0) = \m} (\vx_0)\bwd{p}_{0| t}(\vx_0| \bH_t) \dd \vx_0,
\end{align}
where in the case of denoising diffusion models $\bwd{p}_{0| t}(\vx_0| \cdot)$ is the transition density of the reverse SDE \eqref{eq:back_sde}.
In practice, one does not have access to this transition density -- i.e. we can sample from this distribution, but we cannot easily get its value at a certain point. This makes it difficult to approximate the integral.
To alleviate this, a strand of recent works \citep{chung2022diffusion,finzi2023user,rozet2023score,song2022pseudoinverse} have proposed to apply a Gaussian approximation of $\bwd{p}_{0| t}(\vx_0| \cdot) \approx \gN(\vx_0 ~\vert~ \E[\fX_0 |\fX_t = \cdot], \Gamma_t)$ leveraging Tweedie's formula and the pre-trained score network.
This line of work is referred as \emph{reconstruction guidance}.
% via moment matching with $\gN(\vx_0 ~\vert~ \E[\fX_0 |\fX_t = \cdot] , \mathrm{Cov}[\fX_0 |\fX_t])$, where the moments can be computed via Tweedie's formula utilising the already trained score network.
We note that whilst proposing to approximate the quantity $\cboxnew{\bwd{P}_{0|t}(\Pm(\mX_0) = \m| \cdot)}$,
they do not make the connection to Doob's transform and thus are unable to provide guarantees on the conditional sampling that \cref{col:inpaint} provides.
Overall, the Gaussian-based approximations of Doob's $h$-transform lead to reconstruction guidance-based approaches %\citep{finzi2023user, rozet2023score,chung2022diffusion,han2022card,song2022pseudoinverse} 
\begin{align}
  \dd \bH_t = \left( \bwd{b}_t(\bH_t) +\sigma_t^2 \nabla_{\bH_t} ||\m - \mathrm{A} \E[\fX_0 |\fX_t = \bH_t]  ||^2_{\Gamma_t}\right)\dd t + \sigma_t \bwd{ \dd \rv{W}}_t, \bX_T \sim  \dist_T,  
\end{align}
where $\Gamma_t$ acts as a guidance scale \citep{rozet2023score,mathis2023nmd}, and $\mathrm{A}$ is a matrix if $\Pm$ is linear otherwise $\mathrm{A} = \mathrm{d} \Pm(\E[\fX_0 |\fX_t = \bH_t])$.

\section{Proofs} \label{sec:genh}

\subsection{Proof of Proposition \ref{prop:noisy_cond}}
\begin{proof}
Starting from the unconditioned reverse SDE
\begin{align} 
    \dd \bX_t &=  \bwd{b}_t(\bX_t) \,\dd t + \sigma_t\; \bwd{ \dd \rv{W}}_t,  \quad \bX_T \sim  \sP_T = Q_T^{f_t}[p(\vx_0)],
\end{align}
we consider its reversal, the forward SDE, but we change its initial from distribution $p(\vx_0)$ to the target posterior $p(\vx_0 | \vy)$, i.e.,
\begin{align} 
    \dd \bH_t &= f_t( \bH_t )\,\dd t + \sigma_t\;\fwd{ \dd \rv{W}}_t, \quad \bH_0 \sim  \frac{p(\vy|\vx_0)\textcolor{magenta}{p_{\mathrm{data}}(\vx_0)}}{p(\vy)}, \label{eq:forpost}
\end{align}
where $\bwd{b}_t(\bH_t) = f_t(\bH_t) - \sigma_t^2 \nabla_{\bH_t} \ln p_t(\bH_t)$.

Now let us use $p_{t|y}(\vx|\vy) = \int p(\vx_t |\vx_0) \dd p(\vx_0|\vy)$ to denote the marginal of the above SDE and as before $p_t$ to denote the marginal of the reference starting at the data distribution. Then it follows that
\begin{align}
    p_{t|y}(\vx |\vy) &= p_t(\vx)p(\vy)^{-1} \int \textcolor{magenta}{\frac{\fwd{p}_{t|0}(\vx| \vx_0)}{p_t(\vx)}} p(\vy |\vx_0) \textcolor{magenta}{p_{\mathrm{data}}(\vx_0)} d\vx_0 \label{eq:post1}\\
    &=  p_t(\vx)p(\vy)^{-1} \int {\textcolor{magenta}{\bwd{p}_{0|t}(\vx_0| \vx)}} p(\vy |\vx_0)  d\vx_0 \label{eq:post2}
\end{align}
and thus the score of the reference starting at the posterior is given by:
\begin{align}
   \nabla_\vx \ln  p_{t|y}(\vx|\vy) = \nabla_\vx \ln  p_t(\vx) + \nabla_\vx \ln \int \bwd{p}_{0|t}(\vx_0| \vx) p(\vy |\vx_0)  d\vx_0  %= \nabla_\vx \ln  p_{t|y}(\vx |\vy)
\end{align}
Now that we have the score of the SDE in Equation \ref{eq:forpost} we can reverse it yet another time to obtain the  conditional backwards SDE:
\begin{align} 
    \bH_T &\sim   \textcolor{magenta}{ Q_T^{f_t} [p(\vx_0|\vy) ] = \int \fwd{p}_{T|0}(\vx| \vx_0) p(\vx_0|\vy) \dd \vx_0} \nonumber\\
    \dd \bH_t &= \left( {f}_t(\bH_t) - \sigma^2_t\left( \nabla_{\bH_t} \ln p_{t}(\bH_t) + {\nabla_{\bH_t} \ln p_{y|t}(\mY=\m|\bH_t)}\right)\right)\,\dd t + \sigma_t\; \bwd{ \dd \rv{W}}_t, \nonumber\\
   \dd \bH_t &= \left( \bwd{b}_t(\bH_t) - \sigma^2_t{\nabla_{\bH_t} \ln p_{y|t}(\mY=\m|\bH_t)}\right)\,\dd t + \sigma_t\; \bwd{ \dd \rv{W}}_t,  \label{eq:rev_sde2trans_2apdx}
\end{align}
Where it is very important to notice that the backward SDE starts a the terminal distribution of the conditional forward SDE $Q_T^{f_t} [p(\vx_0|\vy) ]$ and \textcolor{magenta}{not} $\sP_T = Q_T^{f_t} [p(\vx_0) ]$, for a VP-SDE this happens to approximately be $\gN(0,I)$ in both cases (i.e. $Q_T^{f_t} [p(\vx_0|\vy) ] \approx   Q_T^{f_t} [p(\vx_0) ] \approx \gN(0,I)$) however more generally it is not and one needs to be careful.

Note that this remark highlights that the score used in DPS \citep{chung2022diffusion} (i.e. $\nabla_\vx \ln  p_{t|y}(\vx |\vy)$) is in fact the score of an OU process starting at $p(\vx_0|\vy)$ notice the cancellation going from Equations \eqref{eq:post1} to \eqref{eq:post2} was only possible since the \textcolor{magenta}{prior} in our target posterior is the initial distribution for the forward SDE (in their case an OU-process) with marginal $p_t$, these considerations are subtle yet important and omitted in prior works. Whilst this is akin the relationship motivated in DPS as $\nabla_\vx \ln p_{t|y}(\vx|\vy) = \nabla_\vx \ln p_t(\vx) + \nabla_\vx \ln p_t(\vy|\vx) $, DPS fails to convey that this is in fact the score of a VP-SDE with the posterior $p(\vx_0|\vy)$ as its initial distribution.

% Also notice that whilst similar simply stating $\nabla_\vx \ln  p_{t|y}(\vx |\vy) = \nabla_\vx \ln  p_{t}(\vx ) + \nabla_\vx \ln  p_{y|t}(\vy |\vx)$ is insufficient to prove 
\end{proof}

\subsection{Proof of Theorem~\ref{theorem:representation_h} 2)}
\label{app:proof_offline_finetuning}
\begin{proof}
The idea of the proof is similar to Proposition \ref{prop:train}. Here, we directly proof the amortised variation from Equation \eqref{eq:finetuning_loss_amortised}. First, we state the well known score matching identity for the posterior score:
\begin{align*}
    \nabla_{\vx_t} \ln p_t(\vx | \m) = \int \nabla_{\vx_t} \log \fwd{p}_{t|0}(\vx_t|\vx_0) \bwd{p}_{0|t}(\vx_0 | \vx_t, \m) \dd \vx_0.
\end{align*}
Via the mean squared error property of the conditional expectation we get the minimiser as 
\begin{align*}
   \mathbb{E}\left[ \nabla_{\fH_t} \ln \fwd{p}_{t|0}(\fH_t | \fX_0) - \nabla_{\fH_t} \ln p_t(\fH_t) \Big| \mY = \m, \fH_t = \vx  \right]
\end{align*}
Then
\begin{align*}
    h_t^*(\vx, \m) &= \int ( \nabla_{\vx}\ln \fwd{p}_{t|0}(\vx|\vx_0 ) - \nabla_{\vx} \ln p_t(\vx)) \bwd{p}_{0|t}( \vx_0 | \fH_t =\vx, \mY=\m) \d\vx_0 \\ 
    &= \int \nabla_{\vx} \log \fwd{p}_{t|0}(\vx|\vx_0) \bwd{p}_{0|t}(\vx_0 | \fH_t =\vx, \mY=\m) \dd \vx_0 - \nabla_{\vx} \ln p_t(\vx) \\ 
    &= \nabla_{\vx} \ln p_t(\vx|\m) - \nabla_{\vx} \ln p_t(\vx) = \nabla_{\vx} \ln  p_t(\m | \vx).
\end{align*}
\end{proof}

\section{Experimental details}
\label{sec:experimental_details}
\subsection{Image Experiments}
\label{app:experimental_details_images}

In the image experiment, we use the DDPM~\citep{ho2020denoising} formulation for the diffusion model with $N=1000$ steps, a linear $\beta$-schedule with $\beta_0=10^{-4}$ and $\beta_N=2\cdot10^{-2}$. In all our imaging experiments the $h$-transform was implemented according to the parametrisation in Section~\ref{sec:network_architecture} using an attention U-Net \cite{dhariwal2021diffusion} for $\text{NN}_1^\phi$. The network $\text{NN}_2^\phi$ in the residual pathway was implemented as a small three layer fully connected network with SiLU activation functions, predicting a single scalar value. The final layer in $\text{NN}_1^\phi$ was initialised with zeros. All weights in $\text{NN}_2^\phi$ where initialised with zeros, expect for the bias in the last layer, which was initialised to $0.01$. We found that this initialisation was stable over all imaging tasks, i.e., close to the unconditional model. All tasks on ImageNet were noiseless, i.e., the log-likelihood $p(\vx|\m)$ would be a Delta function. However, for the guidance term $ \nabla_{\hat{\vx}_0} \log p(\m | \hat{\vx}_0)$ we always approximated the log-likelihood using a Gaussian with $\sigma_y = 1.0$.

\paragraph{Computed Tomography}
The 2016 American Association of Physicists in Medicine (AAPM) grand challenge dataset~\cite{mccollough2017low} contains CT scans of $10$ patients. We only make use of the abdomen scans. We use the scans of $9$ patients to train both the unconditional model and the $h$-transform. The remaining patient, i.e., id L035 with $87$ slices, is used for testing only. We used a bilinear interpolation to resize the images to $256\times256$px. The unconditional model follows the same architecture as in \cite{chung2022improving} with about $374$M parameters. The model was trained for $300$ epochs using the Adam optimiser \cite{kingma2014adam}. We used exponential moving average on the weights as in \cite{song2020improved} with a parameter of $0.999$.

The LoDoPab-CT dataset \citep{leuschner2021lodopab} is a collection of human chest CT scans. We resize the images to $256\times 256$px using a bilinear interpolation. The training set contains $35820$ images and was used to train the unconditional diffusion model. The validation set contains $3522$ images and was used to train the $h$-transform model and for a hyperparameter search for DPS and RED-diff. Finally, we take $178$ images (every $20$th) from the test set to compute the final results. The unconditional model is an attention U-Net \cite{dhariwal2021diffusion} with about $133$M parameters. The unconditional model was trained for $75$ epochs, corresponding to about $671625$ gradient steps using the Adam optimiser \cite{kingma2014adam}. We used exponential moving average on the weights as in \cite{song2020improved} with a parameter of $0.999$.  

For both datasets we used the same parametrisation for the $h$-transform, with an attention U-Net \cite{dhariwal2021diffusion} and a residual path as in Section~\ref{sec:network_architecture}. The $h$-transform has about $23$M parameters. Thus, the $h$-transform is about $6\%$ and $17\%$ the size of the unconditional model for AAPM and LoDoPab-CT, respectively. For LoDoPab-CT, the $h$-transform was trained on a set of $3522$ images, while for AAPM we trained the $h$-transform on the same dataset as the unconditional model. The forward operator $A$ is linear and we added Gaussian noise with standard deviation $\sigma_y = 1.0$. Thus, the gradient of the log likelihood reduces to 
\begin{align*}
    \nabla_{\hat{\vx}_0} \log p(\m | \hat{\vx}_0) = - \frac{1}{2 \sigma_y^2} A^*(A \hat{\vx}_0 - \m), 
\end{align*}
where $A^*$ denotes the backprojection. Here, instead of the backprojection, we made use of the filtered backprojection, which is a common technique for computed tomography \cite{hauptmann2020multi}.

\paragraph{Inpainting}
We made use of the pre-trained ImageNet dataset. For this task, we also amortised over the different sampling masks. We implemented the $h$-transform using an attention U-Net \cite{dhariwal2021diffusion} as the base network $\text{NN}_1$ with an initial convolution with $13$ channels, i.e., the noisy image $\vx_t$, the denoised estimate $\hat{\vx}_0$, the observation $\m$ (where the missing pixels filled with zeros), the mask $M$ and the cheap guidance term $\nabla_{\hat{\vx}_0} \log p(\m | \hat{\vx}_0)$. The pre-trained unconditional score model has 550M parameters, and we use a fine-tuning network with 23M parameters ($4.2\%$ of the size). The fine-tuning network was trained for 
200 epochs using a batch size of 16, and the Adam optimiser with a learning rate of $5e^{-4}$ and annealing. 
\paragraph{Super-resolution}
We made use of the pre-trained ImageNet dataset. We implemented the $h$-transform using an attention U-Net \cite{dhariwal2021diffusion} as the base network $\text{NN}_1$ with an initial convolution with $9$ channels, i.e., the noisy image $\vx_t$, the denoised estimate $\hat{\vx}_0$, the observation $\m$ bilinear upsampled to the original size and the cheap guidance term $\nabla_{\hat{\vx}_0} \log p(\m | \hat{\vx}_0)$.  The pre-trained unconditional score model has 550M parameters, and we use a fine-tuning network with 23M parameters ($4.2\%$ of the size). The fine-tuning network was trained for 
200 epochs using a batch size of 16, and the Adam optimiser with a learning rate of $5e^{-4}$ and annealing. 

\paragraph{HDR}
For HDR the forward operator is given by $\Pm(\vx) = \text{clip}(2x; -1, 1)$ for the RGB image scaled to $[-1, 1]$. This leads to a reduction of high intensity values. We approximated the cheap guidance term 
\begin{align*}
    \nabla_{\hat{\vx}_0} \log p(\m | \hat{\vx}_0) \approx 0.5 (\Pm(\hat{\vx}_0) - \m),
\end{align*}
and found this to achieve good results. The pre-trained unconditional score model has 550M parameters, and we use a fine-tuning network with 23M parameters ($4.2\%$ of the size). The fine-tuning network was trained for 
200 epochs using a batch size of 16, and the Adam optimiser with a learning rate of $5e^{-4}$ and annealing. 

\paragraph{Phase retrieval}
The $h$-transform was again implemented as an attention U-Net \cite{dhariwal2021diffusion} as the base network $\text{NN}_1$.
The observations in phase retrieval corresponds to the magnitude values of the Fourier transform. In our initial experiments, we feed these observation directly into the model. However, this model failed to create convincing reconstruction. Instead we used two rough reconstruction. For the first initial reconstruction, we used the phase of the unconditional Tweedie estimate $\hat{\vx}_0$ and the magnitude of the observation to construct an image. For the second initial reconstruction, we ran $350$ steps of an ER algorithm \cite{fienup1982phase} with a random initialisation. Further, we used the cheap guidance term $\nabla_{\hat{\vx}_0} \| \m - \Pm(\hat{\vx}_0) \|_2^2$, calculated using torch autograd. The $h$-transform had about $23$M parameter, i.e., $4\%$ of the size of the unconditional model.

\paragraph{Non-linear Deblurring}
For non-linear deblurring we found that there was little improvement when using the cheap guidance term $\nabla_{\hat{\vx}_0} \| \m - \Pm(\hat{\vx}_0) \|_2^2$, calculated using torch autograd. As the forward operator is defined by a trained neural network this additional autograd term adds to the computational expense. We found that the DEFT provided results of a similar quality, with a reduced computational burden, by using $\Pm(\Pm(\hat{\vx}_0) - \m)$ as a rough approximation. The pre-trained unconditional score model has 550M parameters, and we use a fine-tuning network with 23M parameters ($4.2\%$ of the size). The fine-tuning network was trained for 
200 epochs using a batch size of 16, and the Adam optimiser with a learning rate of $5e^{-4}$ and annealing. 

\subsection{Protein Motif Scaffolding}
\label{app:protein_motif_scaffolding_details}

\paragraph{Diffusion process}
We use a discrete-time DDPM~\citep{ho2020denoising} formulation for the diffusion model with $N=1000$ steps and cosine $\beta$-schedule \citep{dhariwal2021diffusion}.

\paragraph{Noise model}
The denoising model $\varepsilon_\theta$ is adapted from the Genie diffusion model \citep{lin2023generating}. In Genie, the denoiser architecture consists of an SE(3)-invariant encoder and an SE(3)-equivariant decoder. While the network uses Frenet-Serret frames as intermediate representations, the diffusion process itself is defined in Euclidean space over the $C_\alpha$ coordinates. Similar to AlphaFold2, the denoiser network consists of a single representation track that is initialised via a single feature network and a pair representation track that is initialised via a pair feature network. These two representations are further transformed via a pair transform network and are used in the decoder for noise prediction via~IPA~\cite{jumper2021highly}.

To evaluate unconditional sampling-based methods, we used a pre-trained version of the unconditional Genie model.

To evaluate the {\scshape Amortised} approach (\cref{algo:cond_dobsh}), we perform a minor modification to the unconditional Genie model as described in \cite{didi2023framework}: we add an additional conditional pair feature network that takes the motif frames as input with the ground truth coordinates for the motif and 0 as values for all other coordinates that are not part of the motif. The output of this motif-conditional pair feature network is concatenated with the output of the unconditional pair feature network to form an intermediate dimension of twice the channel size compared to the unconditional model before being linearly projected down to the channel size of the unconditional model. From then onward the output is processed by the remaining Genie components as in the unconditional model. The implementation is therefore similar to the image case, where the motif features are presented as additional input and the model learns to use these for reconstructing the motif.
This minor alteration of the Genie architecture means that the amortised network has $4.162$M parameters while the unconditional Genie networks have $4.087$M parameters ($\sim$ 1.8\% fewer). In 80\% of the training steps for the amortised model, we pass a condition to the network. The other 20\% contains an empty mask consisting of only 0's.

For the DEFT implementation, we follow a similar way of feeding in the additional inputs via conditional pair feature networks, but as part of the downsized h-transform model as described in the main text.
%For the reconstruction guidance method (\cref{algo:reco_guidance_motif}), we use a time-dependent guidance term of $\gamma_t=\alpha_t (1-\alpha_t)$. %$\citep{finzi2023user}.

\paragraph{Metrics}
We measure the performance of the methods across two axes: designability and success rate. 
%using metrics measuring the designability of the samples and metrics measuring the motif scaffolding success.
To assess whether a particular protein scaffold is \emph{designable}, we run the same pipeline as \cite{lin2023generating}, consisting of an inverse folding generated $C_\alpha$ backbones with ProteinMPNN and then re-folding the designed sequences via ESMFold. The considered metrics and their corresponding thresholds are the following:

\begin{itemize}

    \item scTM > 0.5: This refers to the TM-score between the structure that's been designed and the predicted structure based on self-consistency as previously described. The scTM-score ranges from 0 to 1. Higher scores indicate a higher likelihood that the input structure can be designed. 
    %Often, a threshold of 0.5 is selected, and the percentage of samples exceeding this threshold is reported.

    \item scRMSD < 2 \AA~: The scRMSD metric is akin to the scTM metric. However, it uses the RMSD (Root Mean Square Deviation) to measure the difference between the designed and predicted structures, instead of the TM-score. This metric is more stringent than scTM as RMSD, being a local metric, is more sensitive to minor structural variances.

    \item pLDDT > 70 and pAE < 5: Both scTM and scRMSD metrics depend on a structure prediction method like AlphaFold2 or ESMFold to be reliable. Hence, additional confidence metrics such as pLDDT and pAE are employed to ascertain the reliability of the self-consistency metrics.
\end{itemize}

In addition, we want to judge whether the motif scaffolding was successful or not. Therefore, similar to previous work by \cite{watson2023novo}, we calculate the motifRMSD between the predicted design structure and the original input motif and judge samples with < 1 \AA~motifRMSD as a successful motif scaffold.

We follow previous work and call a sample a "success" if scRMSD < 2 \AA~ and motifRMSD < 1 \AA~. Similar to previous work we call a task "solved" if among 100 samples for this task at least 1 sample is a success.

\section{Additional Results}
\label{sec:add_res}

\subsection{Ablation of the DEFT parametrisation} \label{app:ablation_arch}
The parametrisation of the $h$-transform is motivated by the sampling theory in Section~\ref{sec:network_architecture}. We evaluate different parametrisations of this choice for the CT experiment on LoDoPab-CT. For all architecture choices, we used the same training setup. For quantitative results, see Table~\ref{tab:ct_ablation}. In particular, we see that the naive choice $\text{NN}_1^\phi(\vx, A^*\m, t)$ only achieves a PSNR of $26.62$dB. Note, that we do not input the observations directly into the network, but first transform them using the backprojection, which is a standard technique for computed tomography reconstruction~\cite{arridge2019solving}. In CT the observations correspond to sinograms and have a different geometry to the images. In difference, if we add the unconditional Tweedie estimate $\hat{\vx}_0$ to the model architecture, we get an improvement to $34.04$dB. This shows that it is beneficial to supply the $h$-transform with the information of the unconditional diffusion model. Further, given our architecture, i.e., adding the additional information of the log-likelihood term, achieves a PSNR of $35.81$dB. Further, we show the learning residual scaling network $\text{NN}_2^\phi$ in Figure \ref{fig:ct_time_scaling}. We observe a similar behaviour for all tasks,i.e., at the start of sampling $t \approx 1000$ the scaling network assigned a small weighting to the guidance part, which increases during sampling ($t\to 0$).

\begin{figure}[t]
    \centering
    \includegraphics[width=\textwidth]{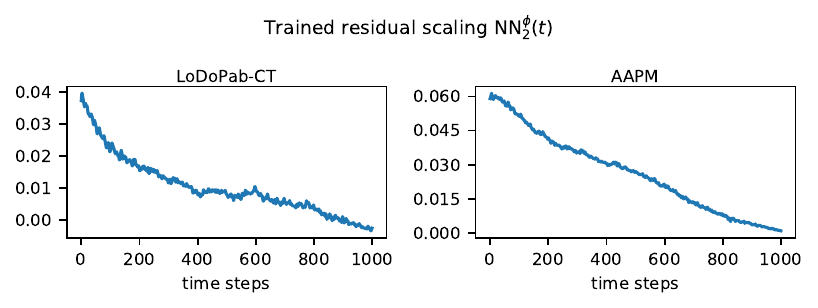}
    \caption{The trained residual scaling network $\text{NN}_2^\phi$ in the DEFT architecture (see Section~\ref{sec:network_architecture}) for computed tomography reconstruction on LoDoPab-CT and AAPM.}
    \label{fig:ct_time_scaling}
\end{figure}

\begin{table}[t]
\centering
\caption{PSNR and SSIM for computed tomography on LoDoPab-CT with different parametrisation of the $h$-transform.}
\begin{tabular}{lcc}
\toprule
Parametrisation & \textbf{PSNR} & \textbf{SSIM} \\ \midrule
$\text{NN}_1^\phi(\vx, \hat{\vx}_0, \nabla_{\hat{\vx}_0} \ln p(\m | \hat{\vx}_0), t) + \text{NN}_2^\phi(t) \nabla_{\hat{\vx}_0} \ln p(\m | \hat{\vx}_0)$ & $35.81$ & $0.876$ \\ 
%$\text{NN}_1^\phi(\vx, \hat{\vx}_0, A^* \m, t) + \text{NN}_2^\phi(t) \nabla_{\hat{\vx}_0} \ln p(\m | \hat{\vx}_0)$ & ??.?? & ?.??? \\ 
$\text{NN}_1^\phi(\vx, \nabla_{\hat{\vx}_0} \ln p(\m | \hat{\vx}_0), t) + \text{NN}_2^\phi(t) \nabla_{\hat{\vx}_0} \ln p(\m | \hat{\vx}_0)$ & $35.74$ & $0.875$ \\ 
$\text{NN}_1^\phi(\vx, \hat{\vx}_0, A^*\m, t)$  & $34.04$ & $0.851$ \\ 
$\text{NN}_1^\phi(\vx, A^*\m, t)$ & $26.62$ & $0.724$ \\ 
\bottomrule 
\end{tabular}
\label{tab:ct_ablation}
\end{table}

\begin{table}[t!]
\centering
{%
\begin{tabular}{@{}lcccc@{}}
\toprule
Method   & \textbf{PSNR ($\uparrow$)} & \textbf{SSIM ($\uparrow$)} & \textbf{LPIPS ($\downarrow$)} & \textbf{Time in hrs ($\downarrow$)} \\ \midrule
RED-diff & $45.00$                    & $0.98$                     & $1.2e^{-3}$         & $7.9$          \\ 
DEFT     & $64.64$                    & $0.99$                     & $1.0e^{-5}$ & $5.2$\\
\bottomrule
\end{tabular}%
}
\caption{Results on the non-linear blurring operator from \cite{mardani2023variational}, where both methods achieve very high reconstruction and perceptual clarity due to the operator resulting in a trivial forward operation, rather than a non-linear blur. On this task, we still find DEFT to outperform RED-diff, with less time taken overall.}
\label{tab:mardani-blur}
\end{table}

\subsection{Non-linear Deblurring: Implementation from \texorpdfstring{\cite{mardani2023variational}}{Mardani et al. (2023)}} \label{app:blur_mardani}

We find that the original implementation of the non-linear forward operator from the codebase provided in \cite{mardani2023variational} results in an almost trivial reconstruction task, due to incorrect loading of weights for the neural network used for the non-linear deblurring. As a result, both RED-diff and DEFT can achieve highly performant results on this trivial task, as shown in Table~\ref{tab:mardani-blur}. For the non-linear deblurring tasks in the main text in Table~\ref{tab:image_results_nonlinear}, we load the weights for the neural network correctly, and see much lower performance, corresponding to the difficulty of the non-linear task.

\begin{figure*}[t]
\centering
\begin{subfigure}[t]{.199\textwidth}
  \includegraphics[width=\linewidth]{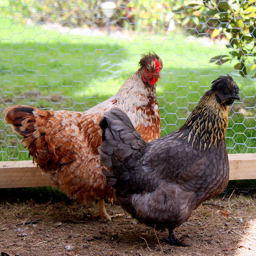}
\end{subfigure}%
\hfill
\begin{subfigure}[t]{.199\textwidth}
  \includegraphics[width=\linewidth]{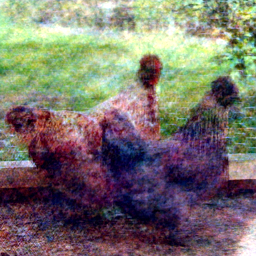}
\end{subfigure}%
\hfill
\begin{subfigure}[t]{.199\textwidth}
  \includegraphics[width=\linewidth]{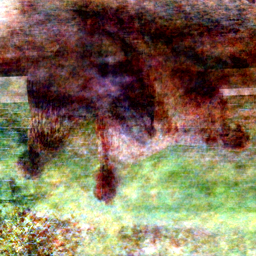}
\end{subfigure}%
\hfill
\begin{subfigure}[t]{.199\textwidth}
  \includegraphics[width=\linewidth]{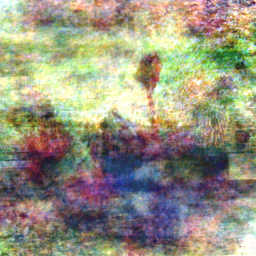}
\end{subfigure}%
\hfill
\begin{subfigure}[t]{.199\textwidth}
  \includegraphics[width=\linewidth]{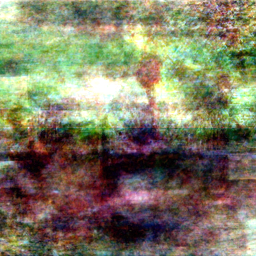}
\end{subfigure}%

\begin{subfigure}[t]{.199\textwidth}
  \includegraphics[width=\linewidth]{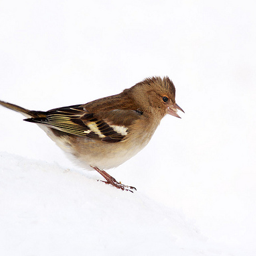}
\end{subfigure}%
\hfill
\begin{subfigure}[t]{.199\textwidth}
  \includegraphics[width=\linewidth]{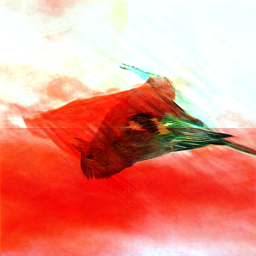}
\end{subfigure}%
\hfill
\begin{subfigure}[t]{.199\textwidth}
  \includegraphics[width=\linewidth]{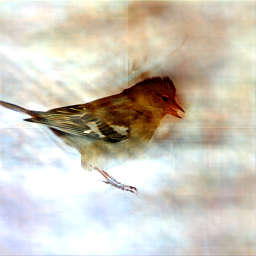}
\end{subfigure}%
\hfill
\begin{subfigure}[t]{.199\textwidth}
  \includegraphics[width=\linewidth]{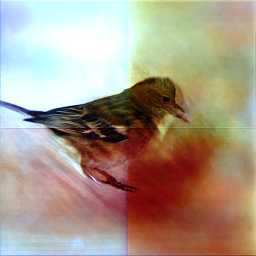}
\end{subfigure}%
\hfill
\begin{subfigure}[t]{.199\textwidth}
  \includegraphics[width=\linewidth]{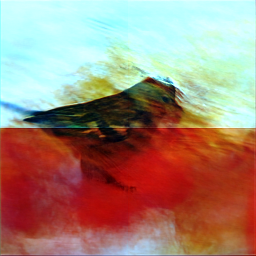}
\end{subfigure}%

\begin{subfigure}[t]{.199\textwidth}
  \includegraphics[width=\linewidth]{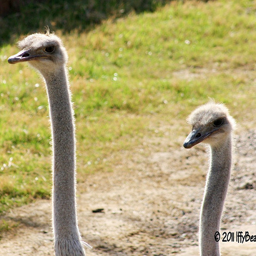}
  \caption*{Ground Truth}
\end{subfigure}%
\hfill
\begin{subfigure}[t]{.199\textwidth}
  \includegraphics[width=\linewidth]{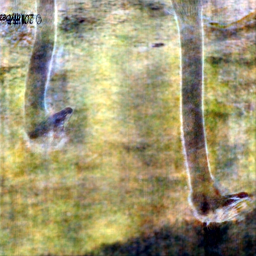}
  \caption*{Sample \#1}
\end{subfigure}%
\hfill
\begin{subfigure}[t]{.199\textwidth}
  \includegraphics[width=\linewidth]{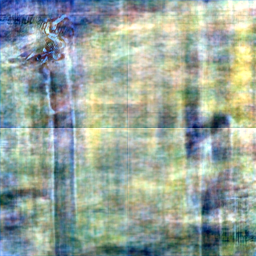}
  \caption*{Sample \#2}
\end{subfigure}%
\hfill
\begin{subfigure}[t]{.199\textwidth}
  \includegraphics[width=\linewidth]{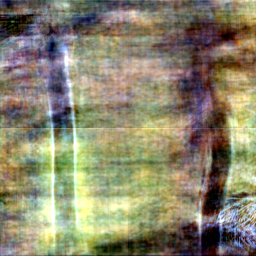}
  \caption*{Sample \#3}
\end{subfigure}%
\hfill
\begin{subfigure}[t]{.199\textwidth}
  \includegraphics[width=\linewidth]{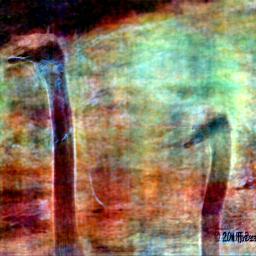}
  \captionsetup{justification=centering}
  \caption*{Sample \#4}
\end{subfigure}%
\caption{Example reconstructions for phase retrieval. Phase retrieval has many local minima, which fully satisfy the data consistency constraints (e.g., complex conjugate, global phase sign). We often see flipped (and perturbed) images as our reconstruction. In some instances, even only one colour channel is flipped. This leads to a strong diversity of samples.} \label{fig:phase_retrieval}
\end{figure*}

\begin{figure*}[t]
\centering
 
\begin{subfigure}[t]{.195\textwidth}
  \includegraphics[width=\linewidth]{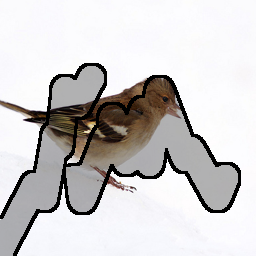}
  \caption*{Ground truth}
\end{subfigure}%
\hfill
\begin{subfigure}[t]{.195\textwidth}
  \includegraphics[width=\linewidth]{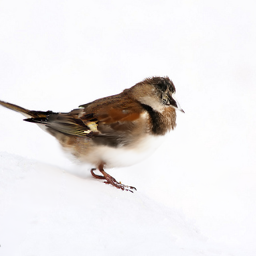}
  \caption*{Sample \#1}
\end{subfigure}%
\hfill
\begin{subfigure}[t]{.195\textwidth}
  \includegraphics[width=\linewidth]{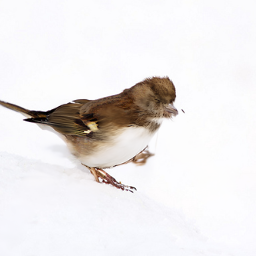}
  \caption*{Sample \#2}
\end{subfigure}%
\hfill
\begin{subfigure}[t]{.195\textwidth}
  \includegraphics[width=\linewidth]{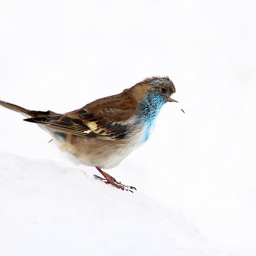}
  \captionsetup{justification=centering}
  \caption*{Sample \#3}
\end{subfigure}%
\hfill
\begin{subfigure}[t]{.195\textwidth}
  \includegraphics[width=\linewidth]{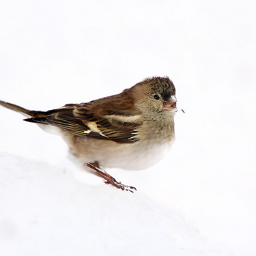}
  \captionsetup{justification=centering}
  \caption*{Sample \#4}
\end{subfigure}%
\caption{Diversity of samples for inpainting. On the left, we show the inpainting mask as a overlay over the ground truth.
} \label{fig:inp_diversity}
\end{figure*}

\begin{figure*}
\centering
\begin{subfigure}[t]{.165\textwidth}
  \includegraphics[width=\linewidth]{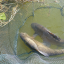}
\end{subfigure}%
\hfill
\begin{subfigure}[t]{.165\textwidth}
  \includegraphics[width=\linewidth]{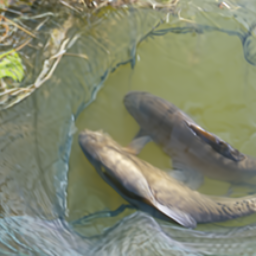}
\end{subfigure}%
\hfill
\begin{subfigure}[t]{.165\textwidth}
  \includegraphics[width=\linewidth]{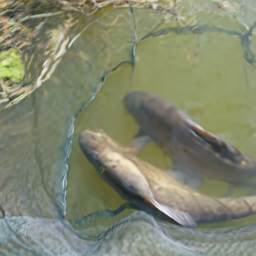}
\end{subfigure}%
\hfill
\begin{subfigure}[t]{.165\textwidth}
  \includegraphics[width=\linewidth]{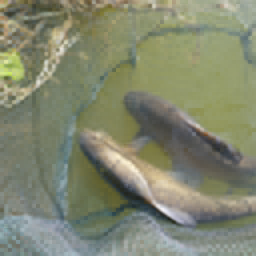}
\end{subfigure}%
\hfill
\begin{subfigure}[t]{.165\textwidth}
  \includegraphics[width=\linewidth]{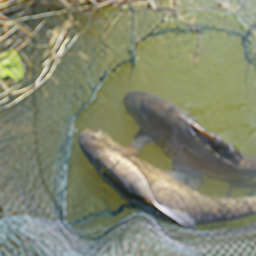}
\end{subfigure}%
\hfill
\begin{subfigure}[t]{.165\textwidth}
  \includegraphics[width=\linewidth]{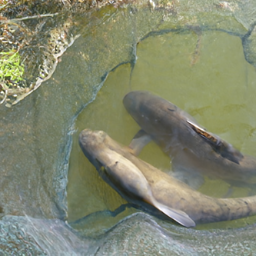}
\end{subfigure}%

\begin{subfigure}[t]{.165\textwidth}
  \includegraphics[width=\linewidth]{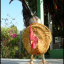}
  \caption*{Measurements}
\end{subfigure}%
\hfill
\begin{subfigure}[t]{.165\textwidth}
  \includegraphics[width=\linewidth]{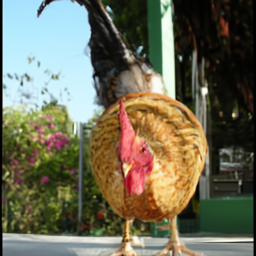}
  \caption*{DPS}
\end{subfigure}%
\hfill
\begin{subfigure}[t]{.165\textwidth}
  \includegraphics[width=\linewidth]{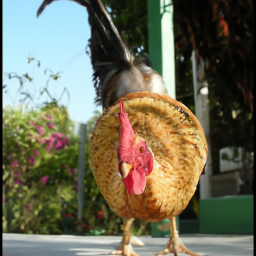}
  \caption*{$\Pi$GDM}
\end{subfigure}%
\hfill
\begin{subfigure}[t]{.165\textwidth}
  \includegraphics[width=\linewidth]{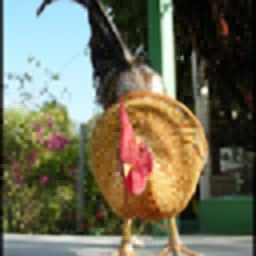}
  \caption*{DDRM}
\end{subfigure}%
\hfill
\begin{subfigure}[t]{.165\textwidth}
  \includegraphics[width=\linewidth]{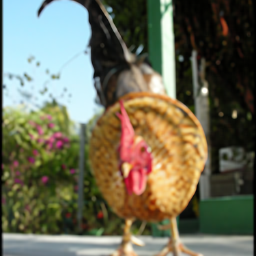}
  \caption*{RED-diff}
\end{subfigure}%
\hfill
\begin{subfigure}[t]{.165\textwidth}
  \includegraphics[width=\linewidth]{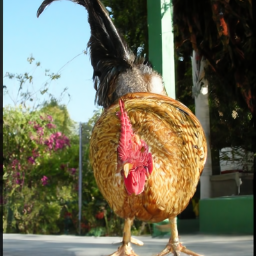}
  \captionsetup{justification=centering}
  \caption*{DEFT}
\end{subfigure}%
\caption{Results for 4x super-resolution.} \label{fig:sr_results}
\end{figure*}

\begin{figure*}[t]
\centering
\begin{subfigure}[t]{.249\textwidth}
  \includegraphics[width=\linewidth]{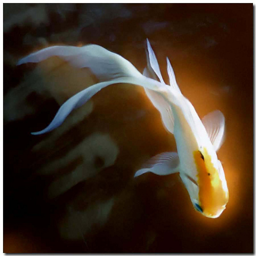}
\end{subfigure}%
\hfill
\begin{subfigure}[t]{.249\textwidth}
  \includegraphics[width=\linewidth]{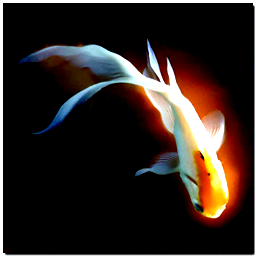}
\end{subfigure}%
\hfill
\begin{subfigure}[t]{.249\textwidth}
  \includegraphics[width=\linewidth]{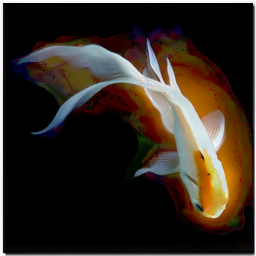}
\end{subfigure}%
\hfill
\begin{subfigure}[t]{.249\textwidth}
  \includegraphics[width=\linewidth]{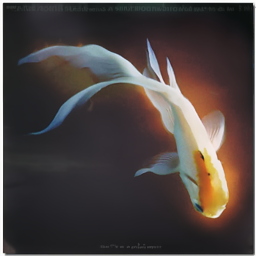}
\end{subfigure}%
 
\begin{subfigure}[t]{.249\textwidth}
  \includegraphics[width=\linewidth]{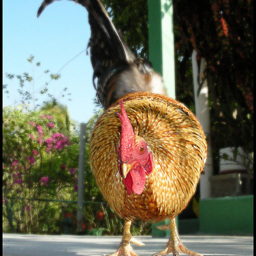}
  \caption*{Ground truth}
\end{subfigure}%
\hfill
\begin{subfigure}[t]{.249\textwidth}
  \includegraphics[width=\linewidth]{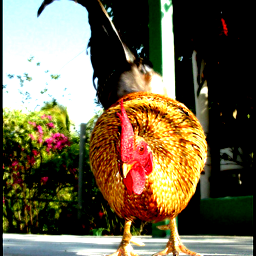}
  \caption*{Measurements}
\end{subfigure}%
\hfill
\begin{subfigure}[t]{.249\textwidth}
  \includegraphics[width=\linewidth]{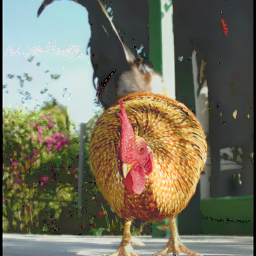}
  \caption*{RED-diff}
\end{subfigure}%
\hfill
\begin{subfigure}[t]{.249\textwidth}
  \includegraphics[width=\linewidth]{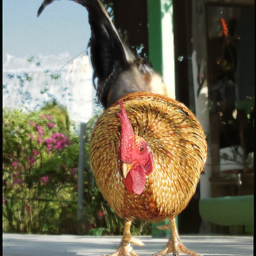}
  \captionsetup{justification=centering}
  \caption*{DEFT}
\end{subfigure}%
\caption{Results for non-linear HDR. Similar to \cite{mardani2023variational}, we found that DPS does not converge to a good solution, returning often black or images not consistent with the ground truth. Given that the forward operator $\Pm(\vx) = \text{clip}(2 \vx | -1, 1)$ has a zero gradient for  $|\vx| \ge 0.5$ the guidance term is often not informative.} \label{fig:hdr}
\end{figure*}

\section{Generalised \texorpdfstring{$h$}{h}-transform and Stochastic Control} \label{app:stoch_control}
Thanks to our formal framework in this section we develop a new VI objective for learning the conditional score in the noisy inverse problems setting. That is by minimising the following ELBO with respect to an additional fine-tuning network, one can learn the conditional score
\begin{align}
   h^* =\argmin_{h} \E_\Q\left[\frac{1}{2}\int_0^T \sigma_t^{2}|| h(\bH_t) ||^2 \dd t\right] - \E_{\bH_0 \sim \Q_0} [\ln p(\vy| \bH_0)], \label{eq:stoch}
\end{align}
where $\bH_t$ follows the unconditioned score SDE with an added control $h$. This objective provides a way to learn the conditioned SDE from the unconditioned one, without making Gaussian approximations. We formalise this connection in the following proposition.

\begin{proposition}
\label{prop:app_stochastic_control}
The following stochastic control problem
\begin{align}
    h^* =\argmin_{h} \E_\Q\left[\frac{1}{2}\int_0^T \sigma_t^{2}|| h(\bH_t) ||^2 \dd t\right] - \E_{\bH_0 \sim \Q_0} [\ln p(\vy| \bH_0)]
\end{align}
with 
\begin{align} \label{eq:vpsderev0}
&\bH_T \sim Q_T^{f_t}[p(\vx_0|\vy)]\nonumber\\
     \dd \bH_t &= \left( f_t(\bH_t) - \sigma^2_t (\nabla_{\bH_t} \ln p_t(\bH_t)+ h_t(\bH_t)) \right)\,\dd t + \sigma_t\; \bwd{ \dd \rv{W}}_t,
\end{align}
is minimised by the conditional score SDE in Equation \eqref{eq:rev_sde_htrans}, that is
\begin{align}
    h^*_t(\vx) = \nabla_{\vx}\ln \E_{\bX_0 \sim p_{0 |t}(\cdot|\vx)}[p(\vy| \bX_0) ] =\nabla_{\vx}\ln p_{y|t}(\vy |\vx).
\end{align}
Furthermore, $h^*$ solves an associated half-bridge problem \citep{bernton2019schr} with the SDE in Eqn. \eqref{eq:back_sde} as its reference process and $p(\vx_0|\vy)$ as its source distribution.
\end{proposition}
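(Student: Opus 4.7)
The plan is to recognise the objective as a KL divergence (up to constants) between the law of the controlled process and the law of the reverse diffusion targeting the posterior, and then appeal to non-negativity of KL to identify the minimiser.

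First I would set up the measures. Let $\P$ denote the path measure of the uncontrolled reverse SDE
\begin{align*}
\dd \bH_t = \left( f_t(\bH_t) - \sigma_t^2 \nabla_{\bH_t} \ln p_t(\bH_t) \right) \dd t + \sigma_t\, \bwd{\dd \rv{W}}_t,
\end{align*}
initialised at $\gP_T = Q_T^{f_t}[p_{\mathrm{data}}]$, and let $\Q^h$ be the path measure of the controlled SDE in Eqn.~\eqref{eq:vpsderev0}, initialised at $Q_T^{f_t}[p(\vx_0|\vy)]$. By Girsanov's theorem (applied to the time-reversed filtration), the Radon--Nikodym derivative satisfies
\begin{align*}
-\ln \frac{\dd \Q^h}{\dd \P} \Big|_{[0,T]} = \frac{1}{2}\int_0^T \sigma_t^{2} \|h_t(\bH_t)\|^2 \dd t + \text{stochastic integral} + \ln \frac{\dd Q_T^{f_t}[p_{\mathrm{data}}]}{\dd Q_T^{f_t}[p(\cdot|\vy)]}(\bH_T).
\end{align*}
Taking expectations under $\Q^h$ kills the stochastic integral, so the quadratic control cost in Eqn.~\eqref{eq:stoch} equals the KL divergence between the path laws, plus a boundary correction coming from the different initial distributions at time $T$.

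Next I would combine this with the terminal cost. Using Bayes' rule $p(\vx_0|\vy) \propto p(\vy|\vx_0) p_{\mathrm{data}}(\vx_0)$ and Proposition~\ref{prop:noisy_cond}, let $\Q^\star$ denote the conditional path measure whose time-$0$ marginal is $p(\vx_0|\vy)$ and whose drift is precisely $\bwd{b}_t - \sigma_t^2 \nabla_{\bH_t}\ln p_{y|t}(\vy|\bH_t)$. A short calculation, pushing the terminal log-likelihood through Bayes and combining with the Girsanov term, shows that
\begin{align*}
\E_{\Q^h}\!\left[\tfrac{1}{2}\int_0^T \sigma_t^{2}\|h_t(\bH_t)\|^2 \dd t\right] - \E_{\Q^h_0}[\ln p(\vy|\bH_0)] = \KL(\Q^h \,\|\, \Q^\star) + \text{const}(\vy),
\end{align*}
where the constant depends only on $\vy$ (it absorbs $\ln p(\vy)$ and marginal normalisation at time $T$). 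By Gibbs' inequality, the right-hand side is minimised at $\Q^h = \Q^\star$, and matching the drifts of the two SDEs then yields $h^*_t(\vx) = \nabla_\vx \ln p_{y|t}(\vy|\vx)$, as required.

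Finally, for the half-bridge claim, I would observe that $\Q^\star$ is characterised by two constraints: its drift family is that of the reverse SDE in Eqn.~\eqref{eq:back_sde} (the reference) augmented by an admissible control, and its time-$0$ marginal equals the posterior $p(\vx_0|\vy)$. This is exactly the structure of a Schrödinger half-bridge with reference $\P$ and fixed source $p(\vx_0|\vy)$, whose minimiser among path measures with the prescribed marginal is the one closest in KL to the reference --- which we have just identified as $\Q^\star$. The main obstacle in the argument is the careful treatment of the time-$T$ boundary terms: the controlled and reference processes start from different marginals, and the Girsanov derivation must correctly book-keep this discrepancy so that it combines cleanly with the terminal log-likelihood to yield a single KL divergence. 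Regularity assumptions (Novikov condition on $h$, smoothness and growth bounds on scores) are standard for diffusion models and would be invoked to justify the application of Girsanov's theorem.
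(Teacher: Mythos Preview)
Your proof is correct and follows a route that differs from the paper's. The paper starts from the constrained half-bridge $\min_{\Q:\Q_0=p(\vx_0|\vy)}\KL(\Q\|\P)$, uses the known closed-form half-bridge solution $\dd\Q^*=\dd\P\,\frac{p(\vy|\vx_0)}{p(\vy)}$ (exploiting the Bayes cancellation $\P_0=p_{\mathrm{data}}$), then recasts the resulting unconstrained $\KL-\E[\ln p(\vy|\cdot)]$ as a stochastic control problem via Girsanov and finally invokes external HJB/Hopf--Cole theory (citing \cite{nusken2021solving}) to extract the optimal drift $h^*_t(\vx)=\nabla_\vx\ln\E_{p_{0|t}(\cdot|\vx)}[p(\vy|\bX_0)]$.

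You instead go in the opposite direction: you already know from Proposition~\ref{prop:noisy_cond} what the target path measure $\Q^\star$ is, so you show directly that the control objective equals $\KL(\Q^h\|\Q^\star)$ plus constants and minimise by Gibbs. This is more self-contained---it avoids the appeal to stochastic control theory and reuses the paper's own Proposition~\ref{prop:noisy_cond}---but it is less constructive, since it presupposes the form of the answer. Your half-bridge argument is also slightly different: rather than quoting the half-bridge solution formula, you argue that since $\Q^\star$ globally minimises the unconstrained objective and satisfies $\Q^\star_0=p(\vx_0|\vy)$, it must also solve the constrained problem (where the likelihood term is constant). Both routes are valid; the paper's derivation makes the half-bridge connection primary and the control characterisation secondary, while yours inverts this emphasis. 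Your identification of the time-$T$ boundary term as the main bookkeeping subtlety is accurate---the paper handles it implicitly by noting it is a constant in $h$ and hence irrelevant to the $\argmin$.
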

\begin{proof}
The derivation for this  objective is inspired by the sequential Bayesian learning scheme proposed in Lemma 1, Appendix B of \cite{vargas2021bayesian}. 

Let $\P$ denote the distribution for the forward SDE starting at the data in Eqn.~\eqref{eq:sdedatafor} (or Equivalently its backwards denoising process Eqn.~\eqref{eq:back_sde}). Now consider the following variational problem termed a half-bridge \citep{bernton2019schr,de2021diffusion, vargas2021solving}.
\begin{align}\label{eq:half}
   \Q^*= \argmin_{\Q: \Q_{0} = p(\vx_0|\vy)} \KL(\Q || \P)
\end{align}
where the constraint enforces that at time $0$ we hit the target posterior $p(\vx|\vy)$ then via standard results in half bridges we know that the above optimisation problem has an unconstrained formulation (e.g. see \cite{vargas2023denoising}) that is $\dd\Q^* = \dd\P \frac{\dd p(\vx|\vy)}{\dd \P_0} $
Now following \citep{vargas2021bayesian} we notice that we can cancel the $p_{\mathrm{data}}$ prior in the posterior term:
\begin{align}
 \dd\P   \frac{\dd p(\vx|\vy)}{\dd \P_0}  = \dd\P \frac{\dd p(\vx|\vy)}{\dd p_{\mathrm{data}}} = \dd \P \frac{\dd p(\vy | \vx)}{\dd p(\vy)}
\end{align}
and thus:
\begin{align}\label{eq:contkl}
   \Q^*= \argmin_{\Q} \KL(\Q || \P) - \E_{\bH_0 \sim \Q_0} [\ln p(\vy| \bH_0)]
\end{align}
with $\Q_T = \law{\bX_T} \approx \gN(0, \mathbf{I})$ when $\bX_T \sim p(\vx|\vy)$. Furthermore, we can parametrise $\P$ as :
\begin{align} \label{eq:vpsderev}
    %\dd \bX_t &= -\beta_t (\bX_t+ 2\nabla_{\bX_t}\ln p_t(\bX_t)) \,\dd t + \sqrt{2\beta_t }\; \fwd{\dd \rv{W}_t }, \quad \bX_0 \sim Q^{f_t}_T[p_{\mathrm{data}}(\vx_0)]
    \dd \bX_t &= \left( f_t(\bX_t) - \sigma_t^2 \nabla_{\bX_t} \ln p_t(\bX_t) \right) \dd t + \sigma_t \bwd{ \dd \rv{W}}_t, \quad \bX_T \sim Q^{f_t}_T[p_{\mathrm{data}}(\vx_0)]
\end{align}

and thus $\Q$ as 
\begin{align} 
\bH_T &\sim Q^{f_t}_T[ p(\vx_0|\vy)]\nonumber\\
     \dd \bH_t &= \left( f_t(\bH_t) - \sigma^2_t (\nabla_{\bH_t} \ln p_t(\bH_t)+ h_t(\bH_t)) \right)\,\dd t + \sigma_t\; \bwd{ \dd \rv{W}}_t,
     \label{eq:reverse_sde_theorem}
\end{align}
then via Girsanov Theorem we can re-express the KL term in Eqn.~\eqref{eq:contkl} as
\begin{align}
   h^* =\argmin_{h} \E_\Q\left[\frac{1}{2}\int_0^T \sigma_t^{2}|| h(\bH_t) ||^2 \dd t\right] - \E_{\bH_0 \sim \Q_0} [\ln p(\vy| \bH_0)].\label{eq:stoch2}
\end{align}
Now noticing that Eqn.~\eqref{eq:stoch2} is a standard stochastic control problem \citep{nusken2021solving, kappen2005linear} we can characterise its minimiser as (using Theorem 2.2 in \cite{nusken2021solving} and the Hopf-Cole transform \citep{fleming2012deterministic}) 
\begin{align}
    h^*_t(\vx) = \nabla_{\vx}\ln \E_{\bX_0 \sim p_{0 |t}(\cdot|\vx)}[p(\vy| \bX_0) ],
\end{align}
and thus the SDE in Eqn.~\ref{eq:reverse_sde_theorem} with $h^*_t$ hits the target posterior $p(\vy|\vx_0)$ at time $0$ as it is the minimiser of half-bridge posed in Eqn. \eqref{eq:half}.
\end{proof}
%
% Interestingly the above objective is akin to the methodology in \cite{jing2022torsional} where unconstrained samples are conditioned to follow a Boltzmann distribution.

Notice that in the case of a VP-SDE, the stochastic control objective reduces to:
\begin{align}
    \argmin_{h} \E_\Q\left[\int_0^T \frac{\beta_t}{2}|| h(\bH_t) ||^2 \dd t\right] - \E_{\bH_0 \sim \Q_0} [\ln p(\vy| \bH_0)] \label{eq:dds}
\end{align}

%
%
% \subsubsection{Discretisation of inverse problem objective}
\paragraph{Discretisation of inverse problem objective}
Following \cite{vargas2023denoising} we will discretise the objective presented in Eqn.~\eqref{eq:dds}. Let us consider the pre-trained score SDE with an added tuning network:
\begin{align} \label{eq:vpsderev2}
&\bH_T \sim \gN(0,I)\nonumber\\
    \dd \bH_t &= -\beta_t (\bH_t+ 2s_{\theta}(\bH_t) + 2h_{\phi}(\bH_t)) \,\dd t + \sqrt{2\beta_t }\;\bwd{\dd \rv{W} }_t,
\end{align}
now using an exponential-like discretisation \citep{de2022convergence} (Ideally we want to discretise in the same way we trained the model):
\begin{align} \label{eq:vpsderev2disc}
&\bH_{t_K} \sim \gN(0,I)\nonumber\\
   \bH_{t_{k-1}} &= \left(\sqrt{1- \alpha_{k}}\bH_{t_k}+ 2(1-\sqrt{1- \alpha_{k}})\left(s_{\theta^*}(\bH_{t_k}) + h_{\phi}(\bH_{t_k})\right) \right) + \sqrt{\alpha_{k}}\varepsilon_k,
\end{align}
where $\alpha_k = 1- \exp\left(\int_{t_{k-1}}^{t_k}\beta_s \dd s\right)$, note we will denote the distribution of the above discrete time chain as $q_\phi$. Now if we follow the sketch in Proposition 3 of \cite{vargas2023denoising} the discretised objective then becomes:
\begin{align}
  \argmin_\phi  \E_{\bH \sim q_\phi}\left[2 \sum_{k=1}^K \frac{\lambda_k^2}{\alpha_k}|| h_{\phi}(k, \bH_{t_k})||^{2} -\ln p(\vy|\bH_{0})\right] \label{eq:objdisc}
\end{align}
where $\lambda_k = 1-\sqrt{1-\alpha_k}$. For a more stable/simple objective following \citep{vargas2023denoising} we can make the approximation $\lambda_k = 1-\sqrt{1-\alpha_k}\approx \alpha_k / 2$ for small time steps. This leads to the following iteration (which is possibly more akin to the training update being used):
\begin{align} \label{eq:vpsderev3disc}
&\bH_{t_K} \sim \gN(0,I) \nonumber\\
   \bH_{t_{k-1}} &= \left(\sqrt{1- \alpha_{k}}\bH_{t_k}+ \alpha_k\left(s_{\theta^*}(\bH_{t_k}) + h_{\phi}(\bH_{t_k})\right) \right) + \sqrt{\alpha_{k}}\varepsilon_k,
\end{align}
and objective:
\begin{align}
  \argmin_\phi  \E_{\bH \sim q_\phi}\left[\sum_{k=1}^K \frac{\alpha_k}{2}|| h_{\phi}(k, \bH_{t_k})||^2 - \ln p(\vy|\bH_{0})\right]. \label{eq:stochdiscrete}
\end{align}

\paragraph{Discrete Time Intuition} For further intuition, we will provide a discrete-time derivation as to how this objective arises. Consider the following discrete-time VP-SDE (i.e let. $p_{k+1|k}(\vh_{t_{k+1}}|\vh_{t_k}) = \gN(\vh_{t_{k+1}} |\sqrt{1- \alpha_{k}}\vh_{t_k} , \alpha_k )$) starting from the posterior:
\begin{align}
    p(\vh_{t_{1}:t_{k}}) = \frac{p(\vy |\vh_0)p_{\mathrm{data}}(\vh_0)}{p(\vy)}\prod_{k=1}^{K} p_{k+1|k}(\vh_{t_{k+1}}|\vh_{t_k})
\end{align}
Now applying Bayes rule and \cite[Section 5]{anderson1982reverse} $p_{k+1|k}(\vh_{t_{k+1}}|\vh_{t_k}) =\frac{p_{k|k+1}^{\mathrm{uncnd}}(\vh_{t_k}|\vh_{t_{k+1}})p^{\mathrm{uncnd}}_{k+1}(\vh_{t_{k+1}})}{p^{\mathrm{uncnd}}_{k}(\vh_{t_k})}$ and telescoping to cancel the marginals we have:
\begin{align}
    p(\vh_{t_{1}:t_{k}}) &= \frac{p_K^{\mathrm{uncnd}}(\vh_T)p(\vy |\vh_0){\not p_{\mathrm{data}}(\vh_0)}}{p(\vy)\not p_{\mathrm{data}}(\vh_0)}\prod_{k=1}^{K} p_{k|k+1}^{\mathrm{uncnd}}(\vh_{t_k}|\vh_{t_{k+1}}) \\
     &= \frac{p_K^{\mathrm{uncnd}}(\vh_T)p(\vy |\vh_0)}{p(\vy) }\prod_{k=1}^{K} p_{k|k+1}^{\mathrm{uncnd}}(\vh_{t_{k}}|\vh_{t_{k+1}}),
\end{align}
where $p_{k|k+1}^{\mathrm{uncnd}}(\vh_{t_k}|\vh_{t_{k+1}})$ is the transition density of the unconditional score SDE and $p^{\mathrm{uncnd}}_{k}(\vh_{t_k})$ correspond to its marginals. Now we would like to learn a backwards process that matches the above process (reverses the VP-SDE starting from the posterior).  We can do so by minimising the KL:
\begin{align}
    \KL(q^\phi || p) \propto \E_{q}\left[\ln \frac{\prod_k q^{\phi}_{k|k+1}(\vh_{t_k}|\vh_{t_{k+1}})}{\prod_k p_{k|k+1}^{\mathrm{uncnd}}(\vh_{t_k}|\vh_{t_{k+1}})} - \ln p(\vy |\vh_0)\right]
\end{align}
where we can approximate the score transition via: 
\begin{align}
    p_{k-1|k}^{\mathrm{uncnd}}(\vh_{t_k}|\vh_{t_{k+1}}) \approx \gN(\vh_{t_{k-1}} |\sqrt{1- \alpha_{k}}\vh_{t_k}+ 2(1-\sqrt{1- \alpha_{k}})s_{\theta^*}(\vh_{t_k}) , \alpha_k ) \nonumber
\end{align}
and parametrise the new conditional denoiser as
\begin{align}
  q_{k-1|k}^{\phi}(\vh_{t_k}|\vh_{t_{k+1}}) =  \gN(\vh_{t_{k-1}} |\sqrt{1- \alpha_{k}}\vh_{t_k}+ 2(1-\sqrt{1- \alpha_{k}})\left(s_{\theta^*}(\vh_{t_k}) + h_{\phi}(\vh_{t_k})\right) , \alpha_k ) \nonumber
\end{align}
making these two substitutions will lead to the objective in Eqn.~\eqref{eq:stochdiscrete}.

\subsection{Related Work}
\label{app:related_work_control}
Fine-tuning diffusion models via a optimal control perspective, e.g., devolved in \cite{berner2022optimal}, has received a lot of attention in recent years. In particular, in the context of fine-tuning with respect to a differentiable reward function, i.e., considering a tilted posterior
\citep{domingo2024adjoint},
\begin{align}
    \label{eq:tilted_posterior}
    \pi(\vx_0) = \frac{e^{r(\vx_0)} p_{\mathrm{data}}(\vx_0)}{\gZ},
\end{align}
where $e^{r(\vx_0)}$ serves an equivalent role to the likelihood $p(\vy|\vx_0)$ in our setting. 

The DRaFT framework \cite{clark2024directly} proposes a heuristic method to estimate the $h$-transform by only optimising the reward function. We want to highly that concurrently \citep{uehara2024fine} develop the same stochastic control formulation as we do, and arrive at the same insight that the optimal starting distribution is given by $p_T=Q_T^{f_t}[\pi]$, however, they chose to learn this distribution which we argue is not necessary for diffusion models due to the mixing property of the OU process leading to a negligible error by approximating $Q_T^{f_t}[\pi] \approx \gN(0,I)$ with a Gaussian.

\subsection{Connection to \citep{domingo2024adjoint} - Value Function Bias}
\label{app:stoch_control_bias}

In \cite{domingo2024adjoint}, it is argued that minimising the stochastic control objective does not lead to hitting the posterior $p(\vx_0|\vy)$ at time $t=0$ due to bias introduced by the value function. We can apply Proposition \ref{prop:app_stochastic_control}
to the tilted posterior $\pi(\vx_0)$ from \eqref{eq:tilted_posterior} which yields the following objective:
\begin{align}
    h^* =\argmin_{h} \E_\Q\left[\frac{1}{2}\int_0^T \sigma_t^{2}|| h(\bH_t) ||^2 \dd t\right] - \E_{\bH_0 \sim \Q_0} [r(\bH_0)]
\end{align}
with 
\begin{align} \label{eq:vpsderev0}
\bH_T &\sim Q_T^{f_t}[p(\vx_0|\vy)]\nonumber\\
     \dd \bH_t &= \left( f_t(\bH_t) - \sigma^2_t (\nabla_{\bH_t} \ln p_t(\bH_t)+ h_t(\bH_t)) \right)\,\dd t + \sigma_t\; \bwd{ \dd \rv{W}}_t,
\end{align}
then by Theorem 2.1 in \cite{tzen2019theoretical} the optimal transition density of the controlled process is given by ($s \leq t$ and let $\tilde{h}(\vx_t) = \ln p_{y|t}(\vy\vert \vx_t) $):
\begin{align}
   p_{s |t}^{*}(\vx|\vy) = e^{\tilde{h}(\vx,s) -\tilde{h}(\vy,t)}p_{s|t}^{\mathrm{ref}}(\vx|\vy) 
\end{align}
where the log of $h$-transform $\tilde{h}$ (note we have used $\tilde{h}$ to denote the log of the $h$-transform) coincides with the negative of the value function in \citep{domingo2024adjoint,tzen2019theoretical}. Then for $s=0$ and $t=T$ this induces the following joint distribution:
\begin{align} \label{eq:optjoint}
   p_{0,T}^{*}(\vx|\vy) = e^{r(\vx) - \ln \gZ -\tilde{h}(\vy,T)}p_{s|t}^{\mathrm{ref}}(\vx|\vy) p_T(\vy)
\end{align}
where \citep{domingo2024adjoint} argue that in general the term $e^{-h(\vy,T)}$ induces a bias such that when we marginalise out $\vy$ , $p_{0}^{*}(\vx)$ is not the tilted distribution, more precisely:
\begin{align} \label{eq:valbias}
p_{0}^{*}(\vx)= e^{r(\vx)- \ln \gZ}  \int e^{ -\tilde{h}(\vy,T)}p_{0|T}^{\mathrm{ref}}(\vx|\vy) p_T(\vy) \dd \vy \neq \frac{e^{r(\vx)}  p_{\mathrm{data}}(\vx)}{\gZ}
\end{align}
However, let's look more closely as to why this is the case; first let us re-express the  h-transform at time $T$ as a ratio of densities:
\begin{align}
    e^{\tilde{h}(\vy,T)} &= \int \frac{\dd \pi}{\dd p_{\mathrm{data}}}(\vy_0) p_{0|T}^{\mathrm{ref}}(\vy_0 | \vy_T) \dd \vy_0 \\
   &= \int \frac{e^{r(\vy_0)} p_{\mathrm{data}}(\vy_0)}{\gZ p_{\mathrm{data}}(\vy_0)} p_{0|T}^{\mathrm{ref}}(\vy_0 | \vy_T) \dd \vy_0 \\
    &= \frac{1}{p_T^{\mathrm{ref}}(\vy_T)} \int \pi(\vy_0) p_{T|0}^{\mathrm{ref}}(\vy_T | \vy_0) \dd \vy_0 \label{eq:qstar} \\ 
    &= \frac{Q_T^{f_t}[\pi](\vy_T)}{ p_T^{\mathrm{ref}}(\vy_T)}  
\end{align}
substituting back into \eqref{eq:optjoint} and marginalizing we have:
\begin{align}
p_{0}^{*}(\vx) = e^{r(\vx) - \ln \gZ} \int p_{0|T}^{\mathrm{ref}}(\vx|\vy) p_T(\vy)  \frac{ p_T^{\mathrm{ref}}(\vy)} {Q_T^{f_t}[\pi](\vy)} \dd \vy
\end{align}
now notice $p_T(\vy)$ is the distribution we simulate our stochastic control from which in our case we have chosen to be $p_T=Q_T^{f_t}[\pi]$ making the cancellation
\begin{align}
p_{0}^{*}(\vx) = e^{r(\vx)- \ln \gZ} \int   p_{0|T}^{\mathrm{ref}}(\vx|\vy) { p_T^{\mathrm{ref}}(\vy)}  \dd \vy =\frac{ e^{r(\vx)} p_{\mathrm{data}}(\vx)}{\gZ}
\end{align}
Leading us to the following remark
\begin{remark}
The choice of setting $p_T=Q_T^{f_t}[\pi]$  leads to removing the value function bias \citep{domingo2024adjoint} in Equation \ref{eq:valbias} . Note the authors of \cite{domingo2024adjoint} pursue a different avenue for removing this bias by altering the noise of the controlled SDE.
\end{remark}

\begin{proposition}(Value function bias when approximating $Q_T^{f_t}[\pi] \approx \gN(0,I)$)
\label{prop:initial_prop}
In practice, we often do not have access to $Q_T^{f_t}[\pi]$ and thus we may make an approximation with some tractable distribution $p_T$, then we can bound the value function bias as follows
\begin{align}
   \left\| p^*_0(\vx) - \frac{ e^{r(\vx)} p_{\mathrm{data}}(\vx)}{\gZ}\right\|_{\mathrm
   {TV}} \leq  \left\| p_T - Q^f_T[\pi]\right\|_{\mathrm
   {TV}},
\end{align}
then in the VP-SDE with a time homogenous $\beta_t = \beta$ based diffusion model (for simplicity), where chose $p_T = \gN(0,I)$ we can obtain a tight bound,
\begin{align}
   \left\| p^*_0(\vx) - \frac{ e^{r(\vx)} p_{\mathrm{data}}(\vx)}{\gZ}\right\|_{\mathrm
   {TV}} \leq Ce^{-\beta T}
\end{align}
for some constant $C>0$, thus the value function bias \citep{domingo2024adjoint} is exponentially small for score based diffusion models. 
\end{proposition}
\begin{proof}
Applying Theorem 17 from \cite{de2021diffusion} we have:
\begin{align}
   \left\| p^*_0(\vx) - \frac{ e^{r(\vx)} p_{\mathrm{data}}(\vx)}{\gZ}\right\|_{\mathrm
   {TV}} &= 
   \left\| P^{f_t+h_t^*}_0[\gN(0,I)] - P^{f_t+h_t^*}_0[Q^f[\pi]]\right\|_{\mathrm
   {TV}} \\
   &\leq  \left\| \gN(0,I) - Q^f_T[\pi]\right\|_{\mathrm
   {TV}} =  Ce^{-\beta T},
\end{align}
the final equality follows from the mixing properties of the OU process \citep{bakry2014analysis}, where     
\begin{align}
 P^{f_t+h_t^*}_0[\mu](\vx) = \int p_{0|T}^{*}(\vx|\vy) \mu(\vx) \dd \vy
\end{align}
\end{proof}
Finally we want to highlight that the benefits of Proposition \ref{prop:initial_prop} can only be leveraged in score matching settings where we have a clear characterisation of the forward process and we are able to tractably characterise $p^*_0$ however in settings such as flow matching \citep{lipman2022flow,liu2022flow} and stochastic interpolants \citep{albergo2023stochastic} where the forward process is not explicitly characterised we wither have to learn $p^*_0$ like in \cite{uehara2024fine} or use a memoryless noise schedule as proposed in \cite{domingo2024adjoint}.

\subsection{Scaling up the Control Objective} \label{app:vargrad}
Naively trying to minimise Eqn.~\eqref{eq:stoch} is demanding, as the full chain has to be kept in memory, which is infeasible for high dimensional problems. To alleviate this problem, one could make use of the stochastic adjoint sensitivity method \citep{li2020scalable}, in which an adjoint SDE is solved to estimate the gradients of the stochastic control loss in Theorem~\ref{theorem:representation_h} 3). This method has the advantage of a constant memory cost. However, the computational cost increases as both the reverse SDE and the adjoint SDE must be simulated. Instead, we discuss two alternative approaches to reduce the memory requirements.

\paragraph{VarGrad} We can make use of a VarGrad \citep{richter2020vargrad, richter2023improved} type loss to reduce the memory requirements. In contrast to the KL loss of Eqn.~\eqref{eq:stoch}, then the VarGrad loss is given by:
\begin{align}
\label{eq:var_grad_stochastic_control}
  D_{\mathrm{logvar}}(\mathbb{Q}, \mathbb{P} ; \mathbb{W}) = \mathbb{E}_{\mH^{g_t}_{0:T} \sim \mathbb{W}} \left[\left(\ln   \frac{\dd  \mathbb{Q} }{\dd  \mathbb{P}}(\mH^{g_t}_{0:T})  - \mathbb{E}\left[\ln   \frac{\dd  \mathbb{Q} }{\dd  \mathbb{P}}(\mH^{g_t}_{0:T})\right] \right)^2\right], 
\end{align}
where $\mathbb{Q}$ and $\mathbb{P}$ are defined as in the proof of Proposition~\ref{prop:app_stochastic_control}, i.e., $\mathbb{Q}$ is given by the conditional SDE and $\mathbb{P}$ is given by the unconditional SDE. The RND in Eqn.~\eqref{eq:rnd_vargrad} is evaluated at the trajectory of a reference process $\mathbb{W} = \mathrm{Law}(\mH^{g_t}_{0:T})$, given by
\begin{align} 
\bH_T &\sim Q_T^{f_t}[p(\vx_0|\vy)] \nonumber\\
     \dd \bH_t &= \left( f_t(\bH_t) - \sigma^2_t (\nabla_{\bH_t} \ln p_t(\bH_t)+ g_t(\bH_t)) \right)\,\dd t + \sigma_t\; \bwd{ \dd \rv{W}}_t.
\end{align}
The Radon–Nikodym derivative (RND) in Eqn~\eqref{eq:var_grad_stochastic_control} can be evaluated as (Using the RND for time reverse SDEs see Equation 64 in \citep{vargas2024transport}):
\begin{align}
\label{eq:rnd_vargrad}
\ln   \frac{\dd  \mathbb{Q} }{\dd  \mathbb{P}}(\mH^{g_t}_{0:T}) & =  -\frac{1}{2}\int_0^T \sigma_t^{2} \| h_t(\mH_t^{g_t}) \|^2 \dd t  +   \int_0^T \sigma_t^2 (g_t^\top h_t) (\mH_t^{g_t}) \dd t - \ln p(\vy| \vx_0^{g_t} )  \nonumber\\
&+  \int_0^T \sigma_t h_t^\top (\mH_t^{g_t})\bwd{ \dd \rv{W}}_t,
\end{align}
for the reference process $\mathbb{W}$. The core advantage of VarGrad is that we can choose this reference process. In particular, the choice $g_t=\mathrm{stop\_grad}(h_t)$ gives us a way to detach the trajectories, saving us from having to score all gradients in memory. 

%\begin{align}
%  D_{\mathrm{logvar}}(\mathbb{Q}, \mathbb{P} ; \mathbb{W}) =\mathbb{E}_{\mH^{2 \beta_t \mathrm{stop\_grad}(h_t)}_{0:T} \sim \mathbb{W}} \left[\left(\ln   \frac{\dd  \mathbb{Q} }{\dd  \mathbb{P}}(\mH^{2 \beta_t \mathrm{stop\_grad}(h_t)}_{0:T})  - \mathbb{E}\left[\ln   \frac{\dd  \mathbb{Q} }{\dd  \mathbb{P}}(\mH^{2 \beta_t \mathrm{stop\_grad}(f_t)}_{0:T})\right] \right)^2\right]
%\end{align}

%This allows us to detach the trajectories $\mH^{2 \beta_t \mathrm{stop\_grad}(f_t)}_{0:T}$  saving us from having to store $K * \mathrm{dim}(\vx_0)$ gradients in memory which is challenging for images.

\paragraph{Trajectory Balance} An alternative to the VarGrad loss in Eqn.~\eqref{eq:var_grad_stochastic_control} is the following trajectory balance \cite{malkin2022trajectory,malkin2022gflownets} loss
\begin{align}
    \label{eq:tbloss}
  \mathcal{L}^{\mathbb{W}}_{\mathrm{TB}}(\mathbb{Q}, \mathbb{P} ; k) = \mathbb{E}_{\mH^{g_t}_{0:T} \sim \mathbb{W}} \left[\left(\ln   \frac{\dd  \mathbb{Q} }{\dd  \mathbb{P}}\left(\mH^{g_t}_{0:T}\right)  - k \right)^2\right],
\end{align}
where $k \in \R$ is a learnable parameter and $\mathbb{W}$ is the same reference process as above. We again choose $g_t = \mathrm{stop\_grad}(h_t)$ This loss is also motivated by a valid divergence, see \cite{nusken2021solving}. In difference to the VarGrad loss~\eqref{eq:var_grad_stochastic_control}, the inner expectation is exchanged with $k$, which approximates a running mean. In practice, we optimise $k$ and and $h_t$ at the same time. 

\paragraph{Trajectory Subsampling} We observed that backpropagating gradients for only a random subset of discrete timesteps in the RND can be an effective strategy. This subsampling reduces memory costs at the expense of increased variance and the introduction of a small bias. Nevertheless, the reduced memory usage per example enables larger batch sizes, which can mitigate these effects. Notably, we found that backpropagating for only $20\%$ of the timesteps still achieves comparable performance.

\subsection{Stochastic Optimal Control - Experiments}
\label{sec:stoch_cont_exp}
We provide some initial proof of concept experiments on the MNIST dataset \cite{mnisthandwrittendigit} of handwritten digits. In particular, we make use of a parallel beam Radon transform with $5$ angles as the forward operator and perturb the observations with $10\%$ additive Gaussian noise. All SDEs are discretised using an Euler-Maruyma scheme and the integrals are estimated using simple quadrature rules. We use a non-equidistant time grid according to a square root function, i.e., let $0=t_0 \le \dots \le t_{K-1}=1$ be an equidistant grid of $[0,1]$ for $K$ time points. We then use $t_0^2, t_1^2, \dots, t_{K-1}^2$ as the time points for evaluating the SDEs. This gives us a finer discretisation closer to $t=0$. The unconditional MNIST model is based on the attention U-Net architecture \cite{dhariwal2021diffusion} with about $3$M parameters. The $h$-transform is implemented using the same DEFT parametrisation as in Section~\ref{sec:network_architecture} with about \num{70000} parameters. We compare \texttt{EM-backprop}, i.e., directly backpropagting through the discrete SDE solver (see e.g. \citep{li2020scalable}), with \texttt{VarGrad} and \texttt{TrajectoryBalance} from Section~\ref{app:vargrad}. For \texttt{EM-backprop} we use a batch size of $16$ and use $60$-$80$ time steps. Instead, for \texttt{VarGrad} and \texttt{TrajectoryBalance} we were able to use a batch size of $26$ and $80$-$140$ time steps. For training we used a single GeForce RTX 3090 and the training time took about $1$h. The results are presented in Figure~\ref{fig:stoch_control_loss}, where the loss trajectory of Eqn.~\eqref{eq:stoch} and the mean PSNR of samples is shown. Both \texttt{VarGrad} and \texttt{TrajectoryBalance} are able to minimise the stochastic optimal control objective to the same extend as \texttt{EM-backprop}. Further, in Figure~\ref{fig:trajectory_balance_training} we provide an overview of the \texttt{TrajectoryBalance} training. Here, we observe that $k$ is working as a estimator of the mean RND with a lower variance.

\begin{figure}[t]
    \centering
    \includegraphics[width=0.95\textwidth]{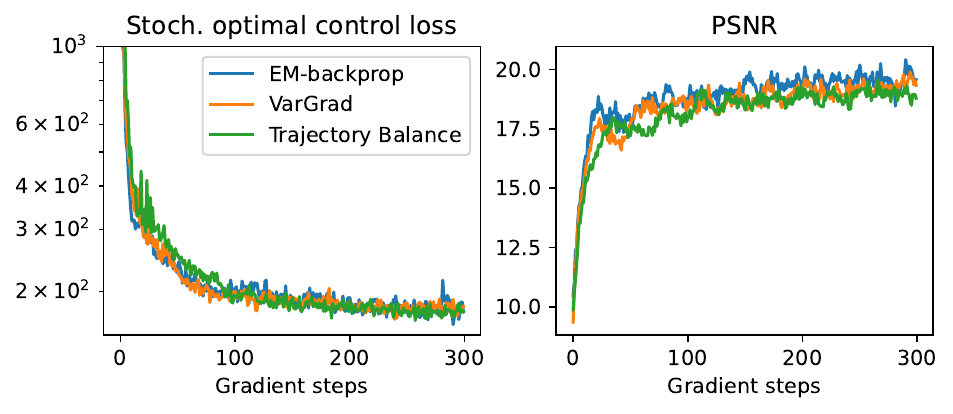}
    \caption{Left: Tracking the stochastic optimal control loss \eqref{eq:stoch} for the three methods. Right: Mean PSNR of samples.}
    \label{fig:stoch_control_loss}
\end{figure}

\begin{figure}[t]
    \centering
    \includegraphics[width=0.95\textwidth]{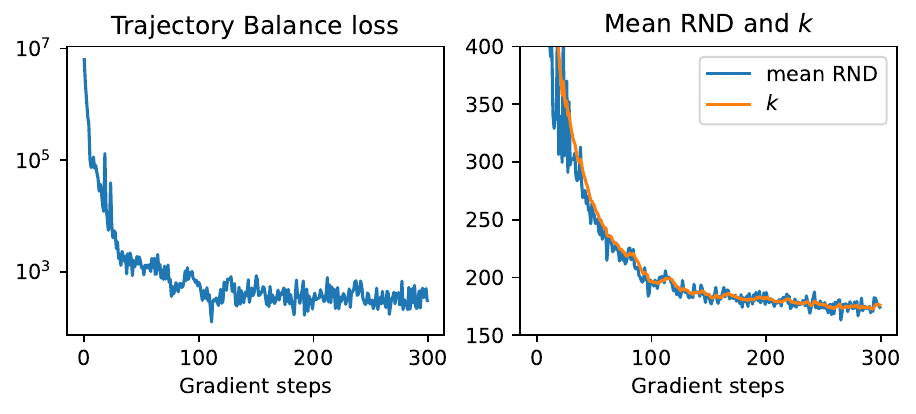}
    \caption{Training using \texttt{TrajectoryBalance}. Left: The trajectory balance objective loss \eqref{eq:tbloss} over training steps. Right: The mean RND and the trained $k$. We see that $k$ follows the mean RND. However, it has a smaller variance.}
    \label{fig:trajectory_balance_training}
\end{figure}

\section{Amortised Conditional Training}
\label{app:amort_cond_training}
In this section, we discuss an objective for learning the full conditional score at training time in an amortised fashion instead of enforcing the constraint during inference time as before in reconstruction guidance approaches. This objective is akin to CDE \citep{batzolis2021conditional} with the difference that we propose amortising over the the forward operator, for example in image inpainting or motif-scaffolding.% in settings where it varies / can be sampled such as image completion and motif-scaffolding. 

%\Shreyas{I think we should be a bit more clear about $X_t$ and $h$ here, since we've not introduced them in this context before. Maybe unify notation?}

Note that since $\cboxnew{\bwd{P}_{0|t}(\mY = \m| \mX_t =\vx)} = \fwd{P}_{t|0}(\vx |\mY = \m)p_0(\mY = \m) / p_t(\mX_t =\vx)$, we can re-express the Doob's transformed SDE of a reversed OU process as:
\begin{align}
    \dd \bH_t &= -\beta_t  \left(\bH_t + 2\nabla_{\bH_t} \ln \fwd{P}_{t|0}(\bH_t \vert \mY= \m )\right)\,\dd t + \sqrt{2\beta_t }\; \bwd{ \dd \rv{W}}_t, \;\; \bH_T \sim  \law{\bX_T}. \nonumber
\end{align}
% then all that is required is to learn the conditional score.
% Then one can directly learn the \emph{conditional} score. \emile{remove this sentence?}
\begin{mdframed}[style=highlightedBox]
\begin{proposition} \label{prop:train}
    The minimiser of
\begin{align}
f^*\! = \!\argmin_h\! \ \mathbb{E}_{\substack{ (\fX_0, \mY) \sim p(\vx_0, \mY) \\ t\sim \mathrm{U}(0,T), \fH_t \sim p_{t|0}(\vx_t|\vx_0)}  }\!\left[ ||h(t, \fH_t, \m) -\nabla_{\fH_t}\!\ln \fwd{p}_{t|0}(\fH_t|\fX_0)||^2  \!\right],%\nonumber
\end{align}
is given by the conditional score
% \begin{align}
$f^*_t(\vx, \m) = \nabla_{\vx} \log \fwd{p}_{t|0}(\vx \vert \mY = \m ).$
% \end{align}
\end{proposition}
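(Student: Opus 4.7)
The plan is to reduce the claim to the standard denoising score matching identity via a single $L^2$ minimiser argument followed by a log-derivative trick. Since $h$ is free to depend arbitrarily on $(t, \fH_t, \m)$, the unique minimiser of the quadratic objective is the conditional expectation of the regression target, so
\begin{align*}
f^*(t, \vx, \m) = \E\!\left[\nabla_{\fH_t} \ln \fwd{p}_{t|0}(\fH_t \mid \fX_0) \,\Big|\, \fH_t = \vx,\, \mY = \m \right].
\end{align*}
It then remains to show that this conditional expectation coincides with $\nabla_{\vx} \ln \fwd{p}_{t|0}(\vx \mid \mY = \m)$.

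The key structural input is the conditional independence $\fH_t \perp \mY \mid \fX_0$: the forward noising only sees the clean sample $\fX_0$, so given $\fX_0$ the noisy state $\fH_t$ is independent of the observation $\mY$. This yields the Bayes factorisation
\begin{align*}
p(\vx_0 \mid \fH_t = \vx,\, \mY = \m) = \frac{\fwd{p}_{t|0}(\vx \mid \vx_0)\, p(\vx_0 \mid \m)}{\fwd{p}_{t|0}(\vx \mid \m)},
\end{align*}
where $\fwd{p}_{t|0}(\vx \mid \m) = \int \fwd{p}_{t|0}(\vx \mid \vx_0)\, p(\vx_0 \mid \m)\, d\vx_0$. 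Substituting this posterior into the conditional expectation and invoking the identity $\nabla_\vx \fwd{p}_{t|0}(\vx\mid\vx_0) = \fwd{p}_{t|0}(\vx\mid\vx_0)\nabla_\vx \ln \fwd{p}_{t|0}(\vx\mid\vx_0)$, I would then differentiate $\ln \fwd{p}_{t|0}(\vx\mid\m)$ directly under the integral sign to obtain
\begin{align*}
\nabla_\vx \ln \fwd{p}_{t|0}(\vx \mid \m) = \int \nabla_\vx \ln \fwd{p}_{t|0}(\vx \mid \vx_0)\, p(\vx_0 \mid \vx, \m)\, d\vx_0,
\end{align*}
which is precisely the conditional expectation computed above. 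This closes the argument.

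The only technicality is the differentiation-under-the-integral step, which requires dominated convergence to exchange $\nabla_\vx$ with the integral defining $\fwd{p}_{t|0}(\vx \mid \m)$; for the smooth, rapidly decaying Gaussian transition kernel used throughout this paper, this is immediate. I do not anticipate any further obstacle: the proof is essentially the classical Vincent--Hyv\"arinen denoising score matching identity, transposed from $p_{\mathrm{data}}(\vx_0)$ to the joint $p(\vx_0, \m)$ and then read conditionally on the $\m$-slice.
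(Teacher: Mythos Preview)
Your proposal is correct and follows essentially the same approach as the paper: both identify the $L^2$ minimiser as the conditional expectation, invoke the conditional independence $\fH_t \perp \mY \mid \fX_0$ to rewrite the posterior over $\vx_0$, and then apply the log-derivative trick with differentiation under the integral to recover $\nabla_\vx \ln \fwd{p}_{t|0}(\vx \mid \m)$. If anything, your write-up is slightly more explicit than the paper's about the dominated convergence technicality.
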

\end{mdframed}
\vspace{-0.2cm}
\begin{proof}
% For the purpose of this sketch, we consider noisy measurement models of the form: 
% \begin{align}
%     \mY = \Pm (\fX_0) + \eta
% \end{align}
% Where $\eta$ is independently sampled noise.
Via the mean squared error property of the conditional expectation the minimiser is given by:
\begin{align}
    h^*_t(\vx, \m) &= \E \left[ \nabla_{\fH_t}\ln \fwd{p}_{t|0}(\fX_t|\fX_0) | \mY = \m, \fH_t =\vx  \right]
\end{align}
Then:
\begin{align}
    h&^*_t(\vx, \m) = \int \nabla_{\vx}\ln \fwd{p}_{t|0}(\vx|\vx_0 ) \bwd{p}_{0|t}( \vx_0 | \fH_t =\vx, \mY=\m) \d\vx_0 \nonumber \\
    &=  \int \frac{\nabla_{\vx} \fwd{p}_{t|0}(\vx| \vx_0 )}{ \fwd{p}_{t|0}(\vx| \vx_0 )}  \frac{\fwd{p}_{t|0}( \fH_t=\vx | \vx_0, \mY=\m) p(\vx_0| \mY=\m,)}{p(\fH_t =\vx | \mY=\m )} \d\vx_0\nonumber \\
    &= \frac{1}{p(\fH_t =\vx | \mY=\m )}\int \frac{\nabla_{\vx} \fwd{p}_{t|0}(\vx| \vx_0 )} {\fwd{p}_{t|0}(\vx| \vx_0 )} {\fwd{p}_{t|0}( \fH_t =\vx | \vx_0) p(\vx_0| \mY=\m)} \d \vx_0 \nonumber \\
    &= \frac{1}{p(\fH_t =\vx | \mY=\m)}\nabla_{\vx} \int {\fwd{p}_{t|0}(\vx| \vx_0 )}  { p(\vx_0| \mY=\m)} \d \fX_0 \  \nonumber \\
    &= \frac{1}{\fwd{p}( \fX_t =\vx | \mY = \m)}\nabla_{\vx}\fwd{p}( \fX_t =\vx | \mY = \m) \\
    &= \nabla_{\vx}\ln\fwd{p}( \fX_t =\vx | \mY = \m ) \nonumber,
\end{align}
% where we have used $\oplus$ loosely as a concatenation operator. 
where we use that $\fwd{p}_{t|0}(\fH_t = \vx | \vx_0, \mY = \m) = \fwd{p}_{t|0}(\fH_t = \vx| \vx_0)$ as $\fH_t$ is independent from $\mY$ given $\fX_0$.
\end{proof}

As with DEFT, for settings where $\Pm$ varies like in image completion we sample $\Pm$ randomly and amortise it over our learned $h$-transform, i.e.  estimating $h^*_t(\vx, \m, \mA)$.

% \begin{align}
% f^*\! = \!\argmin_f\! \ \mathbb{E}_{\mY \sim p_{|\Pm,\fX_0},\Pm \sim p, \fX_0 \sim p_{\mathrm{data}}}\!\left[\!\int_0^T \!\!\!\!\!||f(t, \fX_t, \mY, \Pm) -(\nabla_{\fX_t}\!\ln \fwd{p}_{t|0}(\fX_t|\fX_0)- s_{\vtheta^{*}}(\fX_t))||^2  \dd t\!\right]%\nonumber
% \end{align}
% This provides us with a novel objective for learning the conditional generative model.
We refer to this approach as \emph{amortised} learning for conditional sampling, since practically the neural network approximating the (conditional) score is amortised over $\Pm$ and $\m$, instead of learning a separate network for each condition.
% This approach has been taken in `classifier free guidance' in the setting of auxiliary variables ~\citep{ho2022classifier} and RFDiffusion~\citep{watson2023novo} to sample images or proteins given a subset of the pixels or atoms.
This approach is  also reminiscent of `classifier free guidance'~\citep{ho2022classifier} where the score network is amortised over some auxiliary variable (e.g.\ as in text-to-image models~\citep{pmlr-v139-ramesh21a}), or of RFDiffusion~\citep{watson2023novo} where proteins are designed given a specific subset motif, or similar to \citep{batzolis2021conditional}. See also Appendix~\ref{sec:related_work} for a discussion of related conditional training methodologies. 

Note that conditional amortised learning is different to `classifier free guidance' as $\Pm$ is assumed to be known (e.g.\ an inpainting mask). Also note that due to its formulation, classifier guidance would be unable to noise a subset of $\bX$ (the motif) as we do and would instead be more akin to RFDiffusion.

\subsection{Relationship to Conditional denoising estimator (CDE)}
Conditional denoising estimator (CDE) \citep{batzolis2021conditional} is the adaptation of  \citep{ho2022classifier,saharia2022image} to inverse problem-like settings, deriving a variation of classifier-free guidance to a measurement model styled scenario. Whilst they do not focus on the measurement model, they estimate a very similar quantity as our Proposition 2.5  
\begin{align}
   f^{\mathrm{CDE}} (\vx, \m)=  \nabla_{\vx} \log \fwd{p}_{t|0}(\vx \vert \mY = \m )
\end{align}
In contrast to to the amortised conditional training:
\begin{align}
   f^{\mathrm{amortised}} (\vx, \m, \A)=  \nabla_{\vx} \log \fwd{p}_{t|0}(\vx \vert \mY = \m , \Pm=\A)
\end{align}
when explicitly considering the distribution over the measurement model, one can see that the quantities are related to one another via marginalizing the measurement model $p_{\Pm}$. This introduces several practical and conceptual differences:
\begin{itemize}
    \item If we consider in/out painting as an example, the score network estimating $f^{\mathrm{CDE}}$ is not explicitly aware of where in the image the missing pixels are. As a result, it must perform inference over $\Pm$ (effectively marginalizing it) in order to know where to complete the image. This is clearly a much harder task for a single network to learn than conditioning on $\Pm$ where we provide this information.
    
    \item Viewed under the lens of the h-transform, $f^{\mathrm{CDE}}$ can be viewed as amortising the event $\Pm(\fX_0) = \m$ for random $\Pm$. It therefore falls under the soft constraint settings since $\Pm(\fX_0) | \fX_0$ is not a delta. Our quantity $f^{\mathrm{amortised}}$ is amortising over $\A(\fX_0) = \m$ for deterministic $\A$ and is therefore part of the more classical hard constraint domain of Doobs transform. We believe amortising over these simpler deterministic events can offer an advantage in making the problem easier to learn.
\end{itemize}

\subsection{Comparison to \texorpdfstring{$h$}{h}-transform fine-tuning}
\label{app:flowers_experiments}
We compare the amortised training framework against our $h$-transform fine-tuning on the {\scshape Flowers} dataset. The preprocessing procedure consisted of centrally cropping the image to size $64\times 64$, and rescaling to pixel values $[-1,1]$. The dataset is split into three parts containing $6149$, $1020$ and $1020$ images each. We use the first part to train the unconditional and amortised model. The second part is used for the $h$-transform fine-tuning. The third part is used for evaluation.

For this experiment, we choose both an inpainting and an outpainting task. For the inpainting task a random $18 \times 18$px patch from the image is removed. In difference, for outpainting only a random $18 \times 18$px patch remains and the rest of the image is removed. Thus, the outpainting tasks tests better the generational capabilities of our framework. For the unconditional and amortised model, we use a standard attention U-Net \citep{dhariwal2021diffusion} in the discrete DDPM framework. Both the unconditional and the amortised model have about $24$M parameters. There is a minor difference due the fact that the unconditional network has $3$ input channels and the amortised model has $7$ input channels, i.e., the noisy image, the observations and the mask. The $h$-transform is implemented according to the parametrisation in Section~\ref{sec:network_architecture} with an attention U-Net~\citep{dhariwal2021diffusion} for $\text{NN}_1^\phi$. In total, the $h$-transform has about $4$M parameters, i.e., about $18\%$ of the size of the amortised model. We evaluate three different settings for the amortised model:
\begin{itemize}
    \item {\scshape Amortised (20x, full data)}: trained on the full training dataset for $1200$ epochs, 
    \item {\scshape Amortised (2x)}: trained on the fine-tuning dataset for $700$ epochs, 
    \item {\scshape Amortised (1x)}: trained on the fine-tuning dataset with the same computational budget as DEFT ($300$ epochs).
\end{itemize}
We trained all models on a single GeForce RTX 3090. Training time for {\scshape Amortised (20x, full data)} and the unconditional model was about 50h. The training time for {\scshape Amortised (2x)} was $4.5$h, while the fine-tuning and {\scshape Amortised (1x)} took about $2.5$h.

Results are presented in Table~\ref{tab:flowers_compare}. With a same computational budget, DEFT outperforms the amortised model on all tasks. Training the amortised model with a larger computational budget, recovers a similar performance to DEFT. Finally, in the scenario of full access to the complete dataset and large computational budget the amortised model is able to outperform both DEFT and DPS.  

\begin{table}[]
\centering
\caption{Comparing the full amortised training with our conditional fine-tuning objective on the Flowers dataset for inpainting, outpainting and blur.}
\resizebox{\textwidth}{!}{%
\begin{tabular}{lcccccccccc}
\toprule
     & \multicolumn{3}{c}{\scshape Inpainting} & \multicolumn{3}{c}{\scshape Outpainting} & \multicolumn{3}{c}{\scshape Blur} \\
     & \textbf{PSNR} & \textbf{SSIM}  & \textbf{KID} & \textbf{PSNR}          & \textbf{SSIM} & \textbf{KID}  & \textbf{PSNR}          & \textbf{SSIM} & \textbf{KID}   \\ \midrule
 \scshape Amortised (20x, full data) & 27.81  & 0.936   & 0.000057  & 12.14 & 0.221  & 0.028 & 23.02 & 0.734 & 0.0196 \\
  \scshape Amortised (2x) & $25.22$ & $0.912$ & $0.0016$ & $10.84$ & $0.192$ & $0.0821$ & $22.66$ & $0.716$ & $0.0289$ \\
 \scshape Amortised (1x) & $16.28$ & $0.806$ & $0.006$ & $9.985$ & $0.173$ & $0.1$ & $21.75$ & $0.705$ & $0.0374$ \\
  \scshape DPS \cite{chung2022diffusion} & $26.29$ & $0.897$ & $0.0036$ & $11.88$ & $0.215$ & $0.0389$ & $22.51$ & $0.683$ & $0.0624$ \\ 
 \scshape DEFT & $26.18$ & $0.916$ & $0.0019$ & $11.18$ & $0.160$ & $0.11$ & $23.16$ & $0.709$ & $0.0529$  \\
\bottomrule 
\end{tabular}}
\label{tab:flowers_compare}
\end{table}

\begin{figure*}[t]
\centering
\begin{subfigure}[t]{.333\textwidth}
  \includegraphics[width=\linewidth]{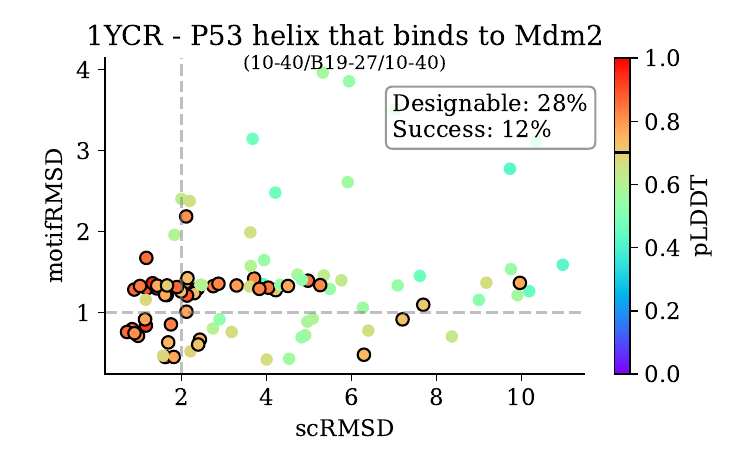}
\end{subfigure}%
\hfill
\begin{subfigure}[t]{.333\textwidth}
  \includegraphics[width=\linewidth]{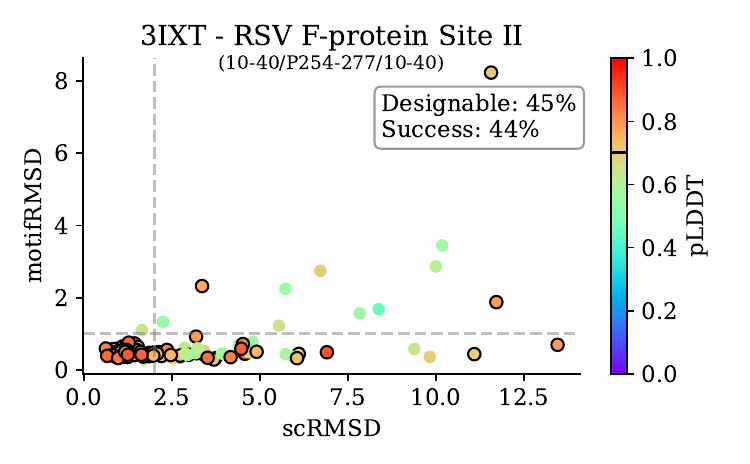}
\end{subfigure}%
\hfill
\begin{subfigure}[t]{.333\textwidth}
  \includegraphics[width=\linewidth]{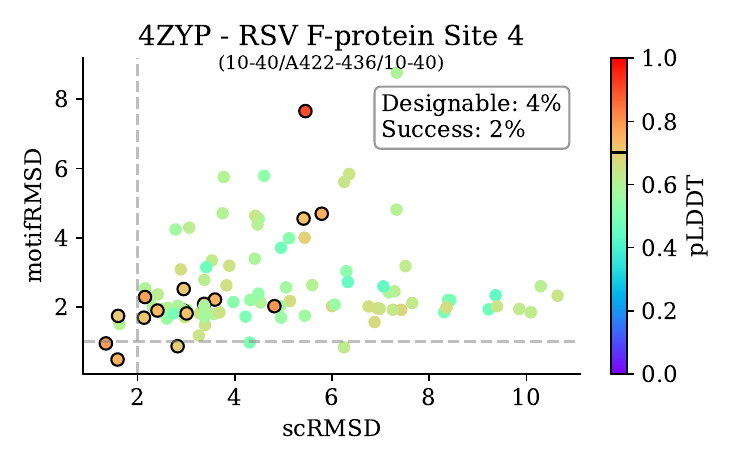}
\end{subfigure}%

\begin{subfigure}[t]{.333\textwidth}
  \includegraphics[width=\linewidth]{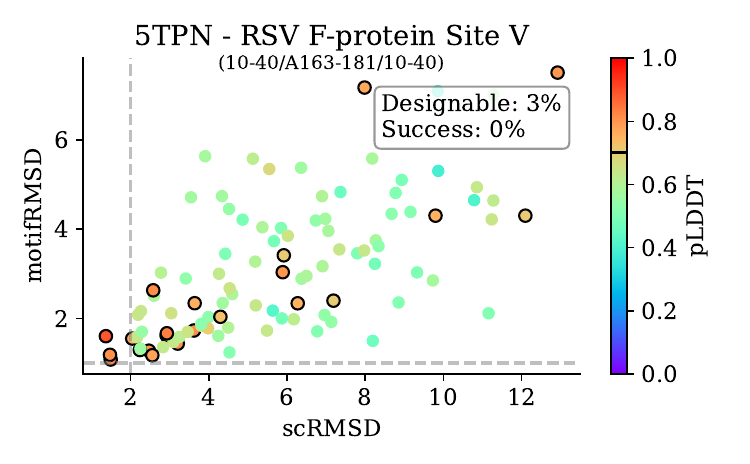}
\end{subfigure}%
\hfill
\begin{subfigure}[t]{.333\textwidth}
  \includegraphics[width=\linewidth]{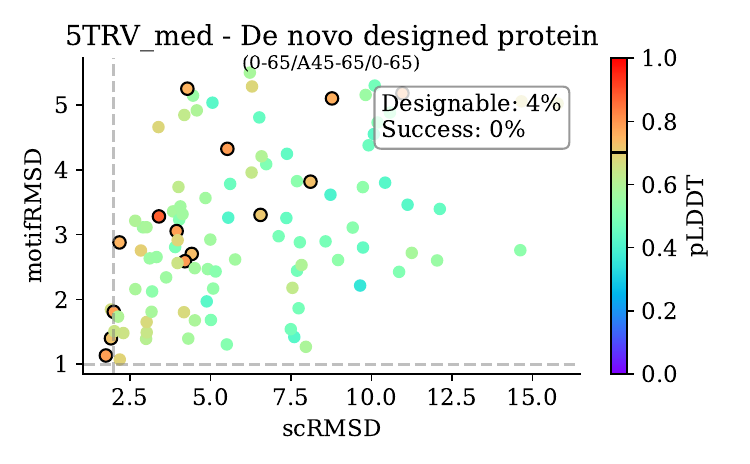}
\end{subfigure}%
\hfill
\begin{subfigure}[t]{.333\textwidth}
  \includegraphics[width=\linewidth]{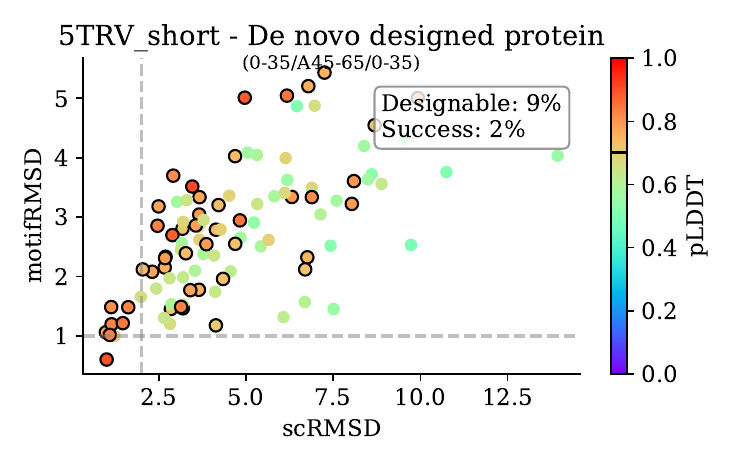}
\end{subfigure}%

\begin{subfigure}[t]{.333\textwidth}
  \includegraphics[width=\linewidth]{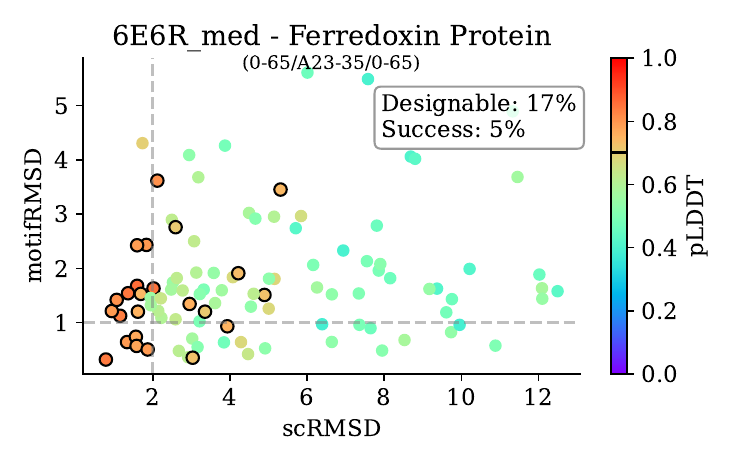}
\end{subfigure}%
\hfill
\begin{subfigure}[t]{.333\textwidth}
  \includegraphics[width=\linewidth]{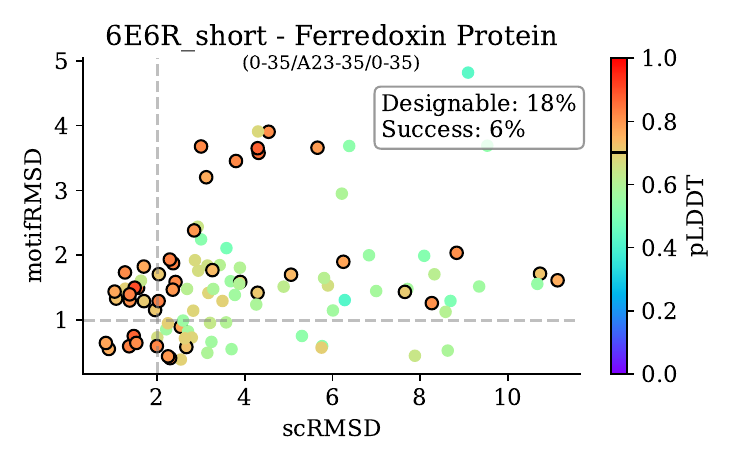}
\end{subfigure}%
\hfill
\begin{subfigure}[t]{.333\textwidth}
  \includegraphics[width=\linewidth]{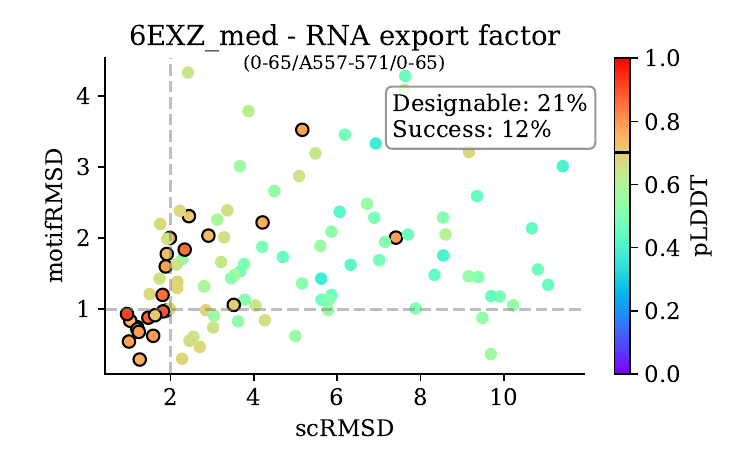}
\end{subfigure}%

\begin{subfigure}[t]{.333\textwidth}
  \includegraphics[width=\linewidth]{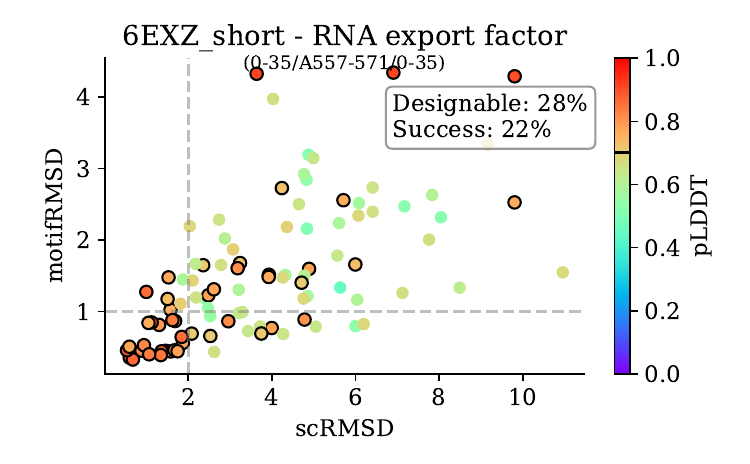}
  %\caption*{Ground truth}
\end{subfigure}%
\hfill
\begin{subfigure}[t]{.333\textwidth}
  \includegraphics[width=\linewidth]{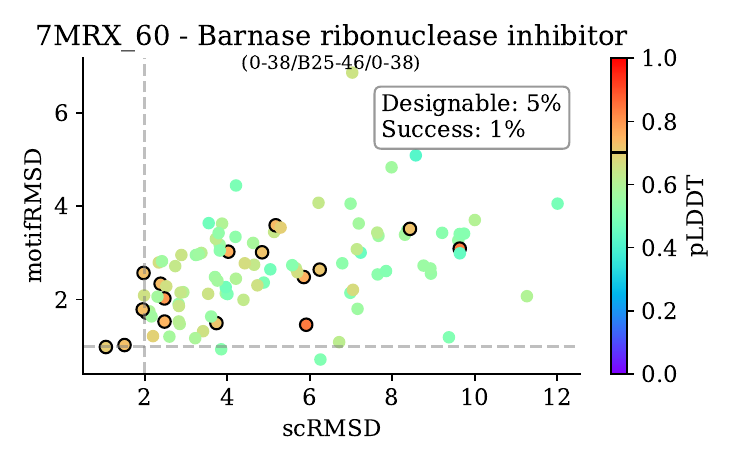}
  %\caption*{Measurements}
\end{subfigure}%
\hfill
\begin{subfigure}[t]{.333\textwidth}
  \includegraphics[width=\linewidth]{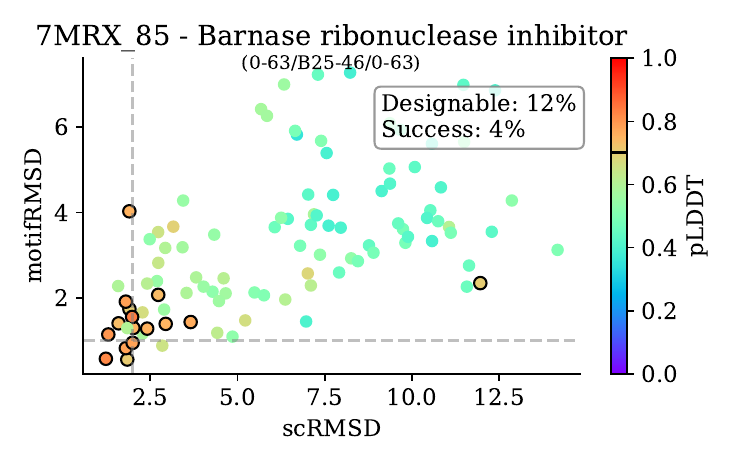}
  %\caption*{RED-diff}
\end{subfigure}%
\caption{Full results for DEFT (9\% model). For each task, we show the full scatter plot of scRMSD and motifRMSD for all $100$ samples. The colour indicates the pLDDT confidence score of the re-folded structure with ESMFold. Samples with pLDDT $\geq$ 0.7 are outlined.} \label{fig:deft_9}
\end{figure*}

\begin{figure*}[t]
\centering
\begin{subfigure}[t]{.333\textwidth}
  \includegraphics[width=\linewidth]{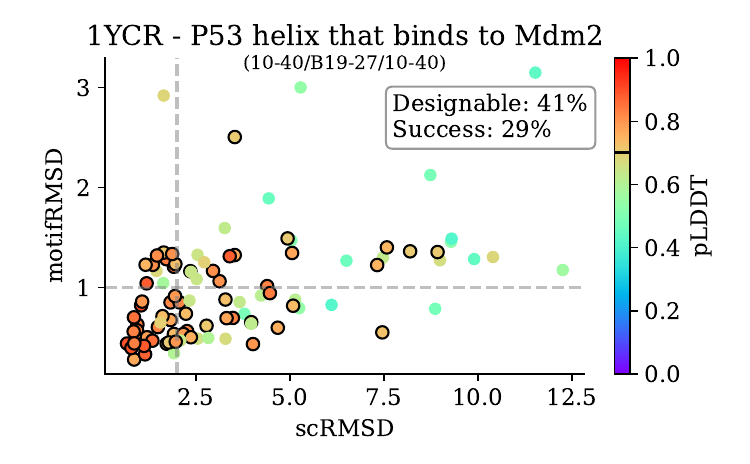}
  \caption*{Amortised}
\end{subfigure}%
\hfill
\begin{subfigure}[t]{.333\textwidth}
  \includegraphics[width=\linewidth]{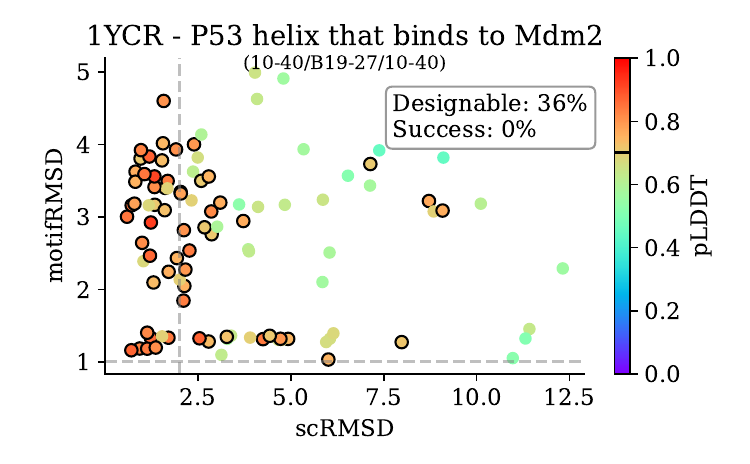}
  \caption*{DPS}
\end{subfigure}%
\hfill
\begin{subfigure}[t]{.333\textwidth}
  \includegraphics[width=\linewidth]{Figs/Protein/DEFTmedium/apricot-water-800ep_1YCR.pdf}
  \caption*{DEFT $9\%$}
\end{subfigure}%

\caption{Comparison of the amortised model, DPS and DEFT (9\%) on the task 1YCR. We see the general trend for DPS that for low guidance scales the samples have high designability but do not adhere to the motif constraint, while for higher guidance scales they adhere to the motif constraint but have low designability.} \label{fig:protein_compare}
\end{figure*}

\begin{figure*}[t]
\centering
\begin{subfigure}[t]{.333\textwidth}
  \includegraphics[width=\linewidth]{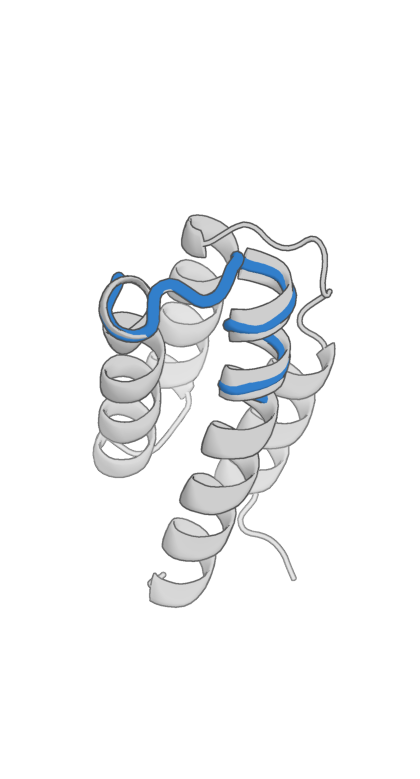}
  \caption*{Amortised}
\end{subfigure}%
\hfill
\begin{subfigure}[t]{.333\textwidth}
  \includegraphics[width=\linewidth]{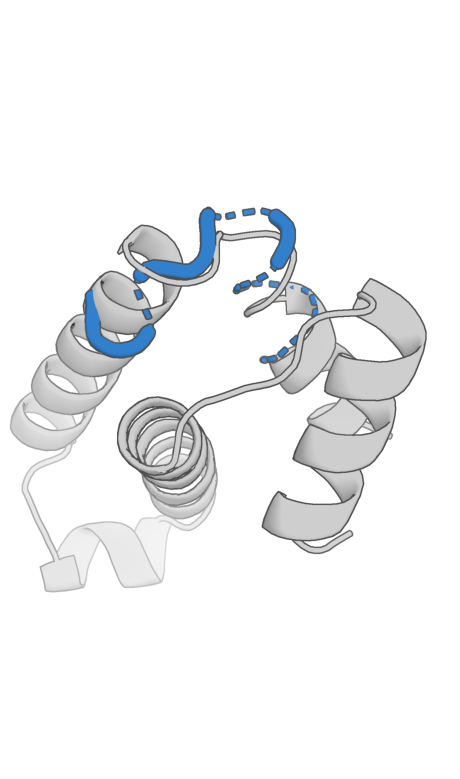}
  \caption*{DPS}
\end{subfigure}%
\hfill
\begin{subfigure}[t]{.333\textwidth}
  \includegraphics[width=\linewidth]{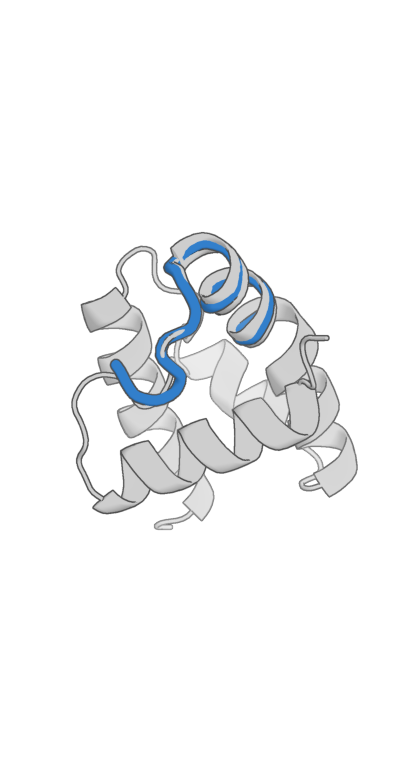}
  \caption*{DEFT $9\%$}
\end{subfigure}%

\caption{Comparison of samples from the amortised model, DPS and DEFT (9\%) on the tasks 6EXZ med. One can see that while the amortised and DEFT samples incorporate the motif into a realistic backbone, this is not the case for DPS. We generally observed that at small guidance scales DPS produced realistic backbones without the desired motif and at high guidance scales it placed the motif into an unrealistic backbone.} \label{fig:protein_compare_samples}
\end{figure*}

\clearpage

\section{Algorithms\label{app:algorithms}}
In this section, we reformulate multiple algorithms from the literature under our common framework as a reference for practitioners. In these algorithms, we use the following conventions: our dataset is drawn from the law $\distdata$, but we can only sample from the simpler law $\distsampling$ at inference time, which is often chosen as multivariate standard normal $\distsampling = \mathcal{N}(0,\mathbf{I})$. Therefore, we construct a forward noising process $\distdata \to \distsampling$ that is parametrised via the noise schedule $\beta_t = \beta(t), \bar{\alpha}_t = \bar{\alpha}(t)$ and try to learn the reverse denoising process $\distsampling \to \distdata$. Due to this notion of "forward", and to keep consistency with the literature on denoising diffusion models, we explicate the nomenclature $\distdata = \distO$ and $\distsampling = \distT$.

There is an additional law $\distnoise$ that is sometimes confused with $\distsampling$ since in practice both are often chosen as $\mathcal{N}(0,\mathbf{I})$, but they are two distinct laws that could in principle be different. $\distnoise$ is the law from which the noise added during the forward noising process as well as the during the reverse diffusion process is drawn from. 
% For reference, we first reiterate the unconditional DDPM training and sampling algorithms, followed by the various conditional methods. For each method, we highlight the differences to standard DDPM sampling or training in gray boxes for clarity.
%\subsection{Unconditional algorithms}
\begin{algorithm}[H]
\caption{$\vert$ Unconditional training of denoising diffusion models \citep{ho2020denoising}}
\label{algo:uncond_training}
\begin{algorithmic}[1]
\Require Dataset drawn from law $\distdata = \distO $ \Comment{Dataset law $\distdata$}
\Require Noise schedule $\beta_t = \beta(t), \bar{\alpha}_t = \bar{\alpha}(t)$, parametrising process $\distdata \to \distsampling$
\Require Untrained noise predictor function $\epsilon_t^\theta(\vx)$ with parameters $\theta$
\Repeat
    \State $\vx_0 \sim \distO = \distdata$
    \State $t \sim \text{Uniform}(\{1,...,T\})$
    \Statex
    \LComment{Forward noise sample, $\vx_t \sim \fwd{p}_{t \vert 0}(\vx_0)$}
    \State $\noise_t \sim \mathcal{P}_\text{noise}$
    \Comment{Often Brownian motion, $\distnoise = \mathcal{N}(0,\mathbf{I})$}
    \State $\vx_t \gets \sqrt{\bar\alpha_t} \vx_0 + \sqrt{1 - \bar\alpha_t} \noise_t$ 

    \Statex
    \LComment{Estimate noise of noised sample}
    \State $\l{\e{\noise}} \gets \epsilon_t^\theta(\vx_t)$ 

    \Statex
    \State \text{Take gradient descent step on} 
    \Statex $\quad \nabla_{\theta}L(\noise_t, \e{\noise}_\theta)$
    \Comment{Typically, loss $L(\vx_\text{true}, \vx_\text{pred}) = || \vx_\text{true} - \vx_\text{pred} ||^2$}
\Until converged or max epoch reached

\end{algorithmic}
\end{algorithm}
\vspace{-0.5cm}
\begin{algorithm}[H]
\caption{$\vert$ Unconditional sampling with denoising diffusion models \citep{ho2020denoising} }
\label{algo:uncond_sampling}
\begin{algorithmic}[1]
    \Require Unconditionally trained noise predictor $\epsilon^\theta_t(\vx_t)$ 
    \Require Noise schedule $\beta_t = \beta(t), \bar{\alpha}_t = \bar{\alpha}(t)$, parametrising process $\distdata \to \distsampling$
    \LComment{Sample a starting point $\vx_T$}
    \State $\vx_T \sim \distT = \distsampling$
    \Comment{Often $\mathcal{P}_T = \mathcal{N}(0,\mathbf{I})$}
    
    \Statex
    \LComment{Iteratively denoise for $T$ steps}
    \For{$t$ in $(T, T-1, \dots, 1)$}
        % Predict noise
        \LComment{Predict noise with learned network}
        \State $\l{\e{\noise}} \gets \epsilon^\theta_t(\vx_t)$

        % Peform reverse drift step
        \Statex
        \LComment{Denoise sample with learned reverse process $\vx_{t-1} \sim \bwd{p}_{t-1|t}(\vx_t)$}
        \LComment{Perform reverse drift}
        \State $\vx_{t-1} \gets \dfrac{1}{\sqrt{1-\beta_t}} \left(
        \vx_t - \dfrac{\beta_t}{\sqrt{1 - \bar\alpha_t}} \l{\e{\noise}} \right)$

        % Perform reverse diffusion step
        \Statex
        \Statex
        \LComment{Perform reverse diffusion, which is often Brownian motion in $\mathbb{R}^n$, i.e. $\distnoise = \mathcal{N}(0,\mathbf{I})$}
        \State $\noise_t \sim \distnoise$ if $t>1$ else $\noise_t \gets 0$
        \State $\vx_{t-1} \gets \vx_{t-1} + \sigma_t \noise_t$ 
        \Comment{A common choice is $\sigma_t = \beta(t)$}
    \EndFor
    \State \Return $\vx_0$
\end{algorithmic}
\end{algorithm}

%\subsection{Conditional training\label{app:cond_training_algos}}\vspace{-0.5cm}

%\begin{algorithm}[H]
%\caption{$\vert$ Classifier-free conditional training \citep{ho2022classifier} \Alex{Keep this? Its not referenced in the text}}
%\label{algo:classifier_free_conditional_training}
%\begin{algorithmic}[1]
%\Require Dataset drawn from $\distdata$ \Comment{Dataset law $\distdata$ over data and auxiliary variable}
%\Require Noise schedule $\beta_t = \beta(t), \bar{\alpha}_t = \bar{\alpha}(t)$, parametrising process $\distdata \to \distsampling$
%\Require Untrained conditional noise predictor function $\nn(\vx, t,\m)$ with parameters $\theta$
%\Repeat
%    \State $\vx_0, \my \sim \distO = \distdata$
%    \State $\noise_t \sim \mathcal{P}_\text{noise}$
%    \Comment{Often Brownian motion, $\distnoise = \mathcal{N}(0,\mathbf{I})$}
%    \State $t \sim \text{Uniform}(\{1,...,T\})$
%    \State $\vx_t = \sqrt{\bar\alpha_t \vx_0} + \sqrt{1 - \bar\alpha_t} \noise_t$
%    \State $\l{\e{\noise}} = \nn(\vx_t, t, \m)$
%    \State \text{Take gradient descent step on} 
%    \Statex $\quad \nabla_{\theta}L(\noise_t, \e{\noise}_\theta)$
%    \Comment{Typically, $L(x_\text{true}, x_\text{pred}) = || x_\text{true} - x_\text{pred} ||^2$}
%\Until converged or max epoch reached
%\end{algorithmic}
%\end{algorithm}\vspace{-0.7cm}

\begin{algorithm}[H]
\caption{$\vert$ RFDiffusion conditional training \citep{watson2023novo}}
\label{algo:rfdiff}
\begin{algorithmic}[1]
\Require Dataset drawn from $\distdata$ \Comment{Dataset law $\distdata$}
\Require Noise schedule $\beta_t = \beta(t), \bar{\alpha}_t = \bar{\alpha}(t)$, parametrising process $\distdata \to \distsampling$
\Require Untrained conditional noise predictor function $\nn(\vx$,t\hl{$, M)$} with parameters $\theta$
\Repeat
    \State $\vx_0 \sim \distO = \distdata$
    \State $t \sim \text{Uniform}(\{1,...,T\})$

    % Randomly sample motif
    \BeginBox[fill=shadecolor]
    % \Statex
    \State $\vx_0^{[M]} \cup \vx_0^{[\backslash M]} \gets \vx_0$  \Comment{Randomly partition data point into motif and rest}

    % Noise rest
    \Statex
    \LComment{Forward noise the non-motif rest via sampling from $\fwd{p}_{0 \vert t}(\vx_0)$}
    \State $\noise_t \sim \mathcal{P}_\text{noise}$
    \State $\vx_t^{[\backslash M]} \gets \sqrt{\bar\alpha_t} \vx_0^{[\backslash M]} + \sqrt{1 - \bar\alpha_t} \noise_t^{[\backslash M]}$ 

    % Combine unnoised motif with noised rest
    \Statex
    \LComment{Combine unnoised motif with noised rest and set timestep of motif part to 0}
    \State $\vx_t \gets \vx_0^{[M]} \cup \vx_t^{[\backslash M]}$
    \State $t^{[M]} \gets 0$
    \EndBox
    \State $\l{\e{\noise}} \gets \nn(\vx_t,t$\hl{$, M)$} \Comment{Estimate noise of sample with noised rest}
    \State \text{Take gradient descent step on} 
    \Statex $\quad \nabla_{\theta}L(\noise, \e{\noise}_\theta)$
    \Comment{Typically, $L(x_\text{true}, x_\text{pred}) = || x_\text{true} - x_\text{pred} ||^2$}
\Until converged or max epoch reached

\end{algorithmic}

\end{algorithm}\vspace{-0.6cm}
\begin{algorithm}[H]
\caption{$\vert$ Amortised training -- i.e. Doob's $h$-transform conditional training for motif-scaffolding}
\label{algo:cond_dobsh}
\begin{algorithmic}[1]
\Require Dataset drawn from $\distdata$ \Comment{Dataset law $\distdata$}
\Require Noise schedule $\beta_t = \beta(t), \bar{\alpha}_t = \bar{\alpha}(t)$, parametrising process $\distdata \to \distsampling$
\Require Untrained amortised noise predictor function $\nn(\mathbf{x}, t$\hl{$,\mathbf{x}^{[M]}, M)$} with parameters $\theta$
\Repeat
    \State $\mathbf{x}_0 \sim \distO = \distdata$
    \State $t \sim \text{Uniform}(\{1,...,T\})$

    %Partition into motif
    \BeginBox[fill=shadecolor]
    \State $\mathbf{x}_0^{[M]} \cup \mathbf{x}_0^{[\backslash M]} \gets \mathbf{x}_0$  \Comment{Randomly partition data point into motif and rest}
    \EndBox
    
    % Forward noise
    \LComment{Forward noise full sample via sampling from $\fwd{p}_{0 \vert t}(\mathbf{x}_0)$}
    \State $\noise_t \sim \mathcal{P}_\text{noise}$
    \State $\mathbf{x}_t \gets \sqrt{\bar\alpha_t} \mathbf{x}_0 + \sqrt{1 - \bar\alpha_t} \noise_t$ 

    % Estimate noise
    \Statex
    \LComment{Estimate noise of sample with original motif as additional input}
    \State $\l{\e{\noise}} \gets \nn(\mathbf{x}_t, t$\hl{$,\mathbf{x}_0^{[M]}, M)$}
    \State \text{Take gradient descent step on} 
    \Statex $\quad \nabla_{\theta}L(\noise, \e{\noise}_\theta)$
    \Comment{Typically, $L(x_\text{true}, x_\text{pred}) = || x_\text{true} - x_\text{pred} ||^2$}
\Until converged or max epoch reached
\end{algorithmic}
\end{algorithm}

\begin{algorithm}[H]
\caption{$\vert$ $h$-transform fine-tuning \new }
\label{algo:finetune_dobsh_training}
\begin{algorithmic}[1]
\Require Dataset drawn from $\distdata$ \Comment{Dataset law $\distdata$}
\Require Noise schedule $\beta_t = \beta(t), \bar{\alpha}_t = \bar{\alpha}(t)$, parametrising process $\distdata \to \distsampling$
\Require Trained noise predictor function $\epsilon^\theta_t(\vx)$ with parameters $\theta$
\Require Untrained $h$-transform $h^\phi_t(\vx, \hat{\vx}_0, \m)$ with parameters $\phi$
\Repeat
    \State $\vx_0 \sim \distO = \distdata$
    \State $t \sim \text{Uniform}(\{1,...,T\})$

    % Sample measurements
    \State $\m \sim p(\m|\vx_0)$  \Comment{Simulate observations}
    % Forward noise
    \LComment{Forward noise full sample via sampling from $\fwd{p}_{0 \vert t}(\vx_0)$}
    \State $\noise_t \sim \mathcal{P}_\text{noise}$
    \State $\vx_t \gets \sqrt{\bar\alpha_t} \vx_0 + \sqrt{1 - \bar\alpha_t} \noise_t$ 
    % Estimate noise
    \State $\l{\e{\noise}} \gets \epsilon^\theta_t(\vx_t)$ \Comment{Estimate noise of sample with pretrained model}
    \State $\hat{\vx}_0 \gets (\vx_t - \sqrt{1 - \bar\alpha_t} \l{\e{\noise}})/\sqrt{\bar\alpha_t}$
    \State $\hat{\epsilon}_\phi \gets h^\phi_t(\vx_t, \hat{\vx}_0, \m)$ \Comment{Estimate noise of sample with $h$-transform}
    \State \text{Take gradient descent step w.r.t. $\phi$ on} 
    \Statex $\quad \nabla_{\theta}L(\noise, \e{\noise}_\theta + \e{\noise}_\phi)$
    \Comment{Typically, $L(\vx_\text{true}, \vx_\text{pred}) = || \vx_\text{true} - \vx_\text{pred} ||^2$}
\Until converged or max epoch reached

\end{algorithmic}
\end{algorithm}
%\subsection{Conditional Finetuning\label{app:cond_finetuning_algos}}
%\subsection{Conditional sampling\label{app:cond_sampling_algos}}
\begin{algorithm}[H]
\caption{$\vert$ $h$-transform DDIM sampling \new }
\label{algo:finetune_dobsh_sampling}
\begin{algorithmic}[1]
\Require Trained $h$-transform $h^\phi_t(\vx, \hat{\vx}_0, \m)$ with parameters $\phi$
\Require Unconditionally trained noise predictor $\epsilon^\theta_t(\vx_t)$ 
\Require Noise schedule $\beta_t = \beta(t), \bar{\alpha}_t = \bar{\alpha}(t)$, parametrising process $\distdata \to \distsampling$
\Require Schedule $\sigma_t = \sigma(t)$
\Require Observation $\m$
\LComment{Sample a starting point $\vx_T$}
\State $\vx_T \sim \distT = \distsampling$\Comment{Often $\mathcal{P}_T = \mathcal{N}(0,\mathbf{I})$}
    
\Statex
\LComment{Iteratively denoise for $T$ steps}
\For{$t$ in $(T, T-1, \dots, 1)$}
        % Predict noise
        \LComment{Predict unconditional noise with learned network}
        \State $\l{\e{\noise}} \gets \epsilon^\theta_t(\vx_t)$

        % Estimate x0hat 
        \State $\hat{\vx}_0 \gets \dfrac{\vx_t -\sqrt{1 - \bar\alpha_t} \l{\e{\noise}}}{\sqrt{\bar\alpha_t}} $ 

        % Calculate h-transform 
        \State $\hat{\epsilon}_\phi \gets h^\phi_t(\vx_t, \hat{\vx}_0, \m)$

        \LComment{Estimate posterior noise}
        \State $\hat{\epsilon} \gets \l{\e{\noise}} + \hat{\epsilon}_\phi$

        \State $\noise_t \sim \distnoise$ if $t>1$ else $\noise_t \gets 0$
        % Peform DDIM step
        \State $\vx_{t-1} \gets \sqrt{\bar\alpha_{t-1}} \left(\dfrac{\vx_t -\sqrt{1 - \bar\alpha_t} \hat{\epsilon}}{\sqrt{\bar\alpha_t}} \right) + \sqrt{1 - \bar\alpha_{t-1} - \sigma_t^2} \hat{\epsilon} + \sigma_t \noise_t$
    \EndFor
    \State \Return $\vx_0$
\end{algorithmic}
\end{algorithm}

\begin{algorithm}[H]
\caption{$\vert$ RFDiffusion conditional sampling \citep{watson2023novo}}
\label{algo:rf_diffusion}
\begin{algorithmic}[1]
    \Require \hl{Conditionally trained} noise predictor $\nn(\mathbf{x}, t$\hl{$, M)$}
    \Require Target motif/context $\mathbf{x}_0^{[M]}$
    \Require Noise schedule $\beta_t = \beta(t), \bar{\alpha}_t = \bar{\alpha}(t)$, parametrising process $\distdata \to \distsampling$
    \LComment{Sample a starting point $\mathbf{x}_T$}
    \State $\mathbf{x}_T \sim \distT = \distsampling$
    
    \Statex
    \LComment{Iteratively denoise for $T$ steps}
    \Comment{Often $\mathcal{P}_T = \mathcal{N}(0,\mathbf{I})$}
    \For{$t$ in $(T, T-1, \dots, 1)$}
        \BeginBox[fill=shadecolor]
        % Overwrite  motif
        \LComment{Overwrite motif variables with target motif and reset their time parameter}
        \LComment{Note: Original RFDiffusion zero-centers $\mathbf{x}_t$ and $\mathbf{x}_0^{[M]}$ individually for equivariance.}
        \State $\mathbf{x}_{t}^{[M]} \gets \mathbf{x}_0^{[M]}$ \Comment{Set noisy motif to unnoised motif}
        \State $t^{[M]} \gets 0$ \Comment{Set timesteps for motif to 0}
        \EndBox
        % Predict noise
        \State $\l{\e{\noise}} = \nn(\mathbf{x}_t, t$\hl{$, M)$} \Comment{Predict noise with learned network}

        % Peform reverse drift step
        \Statex
        \LComment{Denoise sample with learned reverse process $\mathbf{x}_{t-1} \sim \bwd{p}_{t-1|t}(\mathbf{x}_t)$}
        \LComment{Perform reverse drift}
        \State $\mathbf{x}_{t-1} \gets \dfrac{1}{\sqrt{1-\beta_t}} \left(
        \mathbf{x}_t - \dfrac{\beta_t}{\sqrt{1 - \bar\alpha_t}} \l{\e{\noise}} \right)$
        % Perform reverse diffusion step
        \Statex
        \Statex
        \LComment{Perform reverse diffusion, which is often Brownian motion in $\mathbb{R}^n$, i.e. $\distnoise = \mathcal{N}(0,\mathbf{I})$}
        \State $\noise_t \sim \distnoise$ if $t>1$ else $\noise_t \gets 0$
        \State $\mathbf{x}_{t-1} \gets \mathbf{x}_{t-1} + \sigma_t \noise_t$ 
        \Comment{A common choice is $\sigma_t = \beta(t)$}
    \EndFor
    \State \Return $\mathbf{x}_0$
\end{algorithmic}
\end{algorithm}\vspace{-0.6cm}
\begin{algorithm}[H]
\caption{$\vert$ Replacement conditional sampling for motif-scaffolding}
\label{algo:replace}
\begin{algorithmic}[1]
    \Require Unconditionally trained noise predictor $\eps_t^\theta(\vx_t)$ 
    \Require Noise schedule $\beta_t = \beta(t), \bar{\alpha}_t = \bar{\alpha}(t)$, parametrising process $\distdata \to \distsampling$
    \Require Target motif $\vx_0^{[M]}$
    \LComment{Sample a starting point $\vx_T$}
    \State $\vx_T \sim \distT = \distsampling$
    
    \Statex
    \LComment{Iteratively denoise for $T$ steps}
    \Comment{Often $\mathcal{P}_T = \mathcal{N}(0,\mathbf{I})$}
    \For{$t$ in $(T, T-1, \dots, 1)$}         
        %\Statex
        \LComment{Predict noise with learned network}
        \State $\l{\e{\noise}} \gets \epsilon_t^\theta(\vx_t)$

        \Statex
        \LComment{Denoise sample with learned reverse process $\vx_{t-1} \sim \bwd{p}_{t-1|t}(\vx_t)$}
        \LComment{Perform reverse drift}
        \State $\vx_{t-1} \gets \dfrac{1}{\sqrt{1-\beta_t}} \left(
        \vx_t - \dfrac{\beta_t}{\sqrt{1 - \bar\alpha_t}} \l{\e{\noise}} \right)$

        \Statex
        \Statex
        \LComment{Perform reverse diffusion, which is often Brownian motion in $\mathbb{R}^n$, i.e. $\distnoise = \mathcal{N}(0,\mathbf{I})$}
        \State $\noise_t \sim \distnoise$ if $t>1$ else $\noise_t \gets 0$
        \State $\vx_{t-1} \gets \vx_{t-1} + \sigma_t \noise_t$ 
        \Comment{A common choice is $\sigma_t = \beta(t)$}

        \BeginBox[fill=shadecolor]
        % Forward noise motif
        \Statex
        \LComment{Forward noise the target motif $\vx_{t-1}^{[M]} \sim \fwd{p}_{0 \vert {t-1}}(\vx_0^{[M]})$}
        \State $\bm\eta_{t-1} \sim \distnoise$ if $t > 1$ else $\bm\eta_{t-1} \gets 0$
        \State $\vx_{t-1}^{[M]} \gets \sqrt{\bar\alpha_{t-1}} \vx_0^{[M]} + \sqrt{1 - \bar\alpha_{t-1}} \bm\eta_{t-1}$ 

        \State $\vx_{t-1} \gets \vx_{t-1}^{[\backslash M]} \cup \vx_{t-1}^{[M]} $  \Comment{Insert noised motif into current sample}
        \EndBox
    \EndFor
    \State \Return $\mathbf{x}_0$
\end{algorithmic}
\end{algorithm}

\begin{algorithm}[H]
\caption{$\vert$ Reconstruction Guidance (i.e. Moment Matching (MM) Approximation to $h$-transform, DPS \cite{chung2022diffusion}) for general inverse problems $\m \sim \text{noise}(\Pm(\vx))$}
\label{algo:reco_guidance_y}
\begin{algorithmic}[1]
    \Require Unconditionally trained noise predictor $\epsilon^\theta_t(\vx_t)$ , observation $\m$.
    \Require Noise schedule $\beta_t = \beta(t), \bar{\alpha}_t = \bar{\alpha}(t)$, parameterising process $\distdata \to \distsampling$
    \BeginBox[fill=shadecolor]
    \Require Guidance scale (schedule) $\gamma_t = \gamma(t)$
    \Require Conditioning loss $l(\m_\text{pred}, \m)$. e.g,  Gaussian MM $l(\m_\text{pred}, \m) = ||\m_\text{pred} - \m||^2$
    \EndBox
    \LComment{Sample a starting point $\vx_T$}
    \State $\vx_T \sim \distT = \distsampling$
    \Comment{Often $\mathcal{P}_T = \mathcal{N}(0,\mathbf{I})$}
    
    \Statex
    \LComment{Iteratively denoise and condition for $T$ steps}
    \For{$t$ in $(T, T-1, \dots, 1)$}
        % Predict noise
        \State $\l{\e{\noise}} \gets \epsilon^\theta_t(\vx_t)$ \Comment{Predict noise with learned network}

        \BeginBox[fill=shadecolor]
        % Estimate denoised sample via Tweedie's formula
        \Statex
        \LComment{Estimate current denoised estimate via Tweedie's formula}
        % c.f. eq 15 in Ho et al. 2020
        \State $\e{\vx}_0(\vx_t, \l{\e{\noise}}) \gets \frac{1}{\sqrt{\bar{\alpha}_t}}(\vx_t - \sqrt{1-\bar{\alpha}_t} \l{\e{\noise}}) $
        \Comment{c.f. also eq.~15 in \cite{ho2020denoising}}
        % Perform gradient descent step towards condition
        \Statex
        \LComment{Perform gradient descent step towards data consistency}
        \State $\vx_t \gets \vx_t - \gamma_t \nabla_x l(\Pm(\hat{\vx}_0), \m)$
        \Comment{Requires backprop through $\epsilon^\theta_t$ via e.g. $L_2$ loss}
        \EndBox

        % Peform reverse drift step
        \Statex
        \LComment{Denoise sample with learned reverse process $\vx_{t-1} \sim \bwd{p}_{t-1|t}(\vx_t)$}
        \State $\vx_{t-1} \gets (1-\beta_t)^{-1/2} \left(
        \vx_t - {\beta_t}{({1 - \bar\alpha_t})^{-1/2}} \l{\e{\noise}} \right)$ \Comment{Perform reverse drift}
        % Perform reverse diffusion step

        \LComment{Perform reverse diffusion, which is often Brownian motion in $\mathbb{R}^n$, i.e. $\distnoise = \mathcal{N}(0,\mathbf{I})$}
        \State $\noise_t \sim \distnoise$ if $t>1$ else $\noise_t \gets 0$
        \State $\vx_{t-1} \gets \vx_{t-1} + \sigma_t \noise_t$ 
        \Comment{A common choice is $\sigma_t = \beta(t)$}
    \EndFor
    \State \Return $\vx_0$
\end{algorithmic}
\end{algorithm}

\begin{algorithm}[H]
\caption{$\vert$ Reconstruction Guidance (i.e. Moment Matching (MM) Approximation to $h$-transform, DPS \cite{chung2022diffusion}) for motif scaffolding}
\label{algo:reco_guidance_motif}
\begin{algorithmic}[1]
    \Require Unconditionally trained noise predictor $\epsilon^\theta_t(\vx_t)$ ,  target motif/context $\vx_0^{[M]}$.
    \Require Noise schedule $\beta_t = \beta(t), \bar{\alpha}_t = \bar{\alpha}(t)$, parameterising process $\distdata \to \distsampling$
    \BeginBox[fill=shadecolor]
    \Require Guidance scale (schedule) $\gamma_t = \gamma(t)$
    \Require Conditioning loss $l(\vx_\text{true}, \vx_\text{pred})$. e.g,  Gaussian MM $l(\vx_\text{true}, \vx_\text{pred}) = ||\vx_\text{true} - \vx_\text{pred}||^2$
    \EndBox
    \LComment{Sample a starting point $\vx_T$}
    \State $\vx_T \sim \distT = \distsampling$
    \Comment{Often $\mathcal{P}_T = \mathcal{N}(0,\mathbf{I})$}
    
    \Statex
    \LComment{Iteratively denoise and condition for $T$ steps}
    \For{$t$ in $(T, T-1, \dots, 1)$}
        % Predict noise
        \State $\l{\e{\noise}} \gets \epsilon^\theta_t(\vx_t)$ \Comment{Predict noise with learned network}

        \BeginBox[fill=shadecolor]
        % Estimate denoised sample via Tweedie's formula
        \Statex
        \LComment{Estimate current denoised estimate via Tweedie's formula}
        % c.f. eq 15 in Ho et al. 2020
        \State $\e{\vx}_0(\vx_t, \l{\e{\noise}}) \gets \frac{1}{\sqrt{\bar{\alpha}_t}}(\vx_t - \sqrt{1-\bar{\alpha}_t} \l{\e{\noise}}) $
        \Comment{c.f. also eq.~15 in \cite{ho2020denoising}}
        % Perform gradient descent step towards condition
        \Statex
        \LComment{Perform gradient descent step towards condition on motif dimensions $M$}
        \State $\vx_t \gets \vx_t - \gamma_t \nabla_x l(\vx_0^{[M]}, \e{\vx}_0^{[M]}(\vx_t, \l{\e{\noise}}))$
        \Comment{Requires backprop through $\epsilon_t^\theta$ via e.g. $L_2$ loss}
        \EndBox

        % Peform reverse drift step
        \Statex
        \LComment{Denoise sample with learned reverse process $\vx_{t-1} \sim \bwd{p}_{t-1|t}(\vx_t)$}
        \State $\vx_{t-1} \gets (1-\beta_t)^{-1/2} \left(
        \vx_t - {\beta_t}{({1 - \bar\alpha_t})^{-1/2}} \l{\e{\noise}} \right)$ \Comment{Perform reverse drift}
        % Perform reverse diffusion step

        \LComment{Perform reverse diffusion, which is often Brownian motion in $\mathbb{R}^n$, i.e. $\distnoise = \mathcal{N}(0,\mathbf{I})$}
        \State $\noise_t \sim \distnoise$ if $t>1$ else $\noise_t \gets 0$
        \State $\vx_{t-1} \gets \vx_{t-1} + \sigma_t \noise_t$ 
        \Comment{A common choice is $\sigma_t = \beta(t)$}
    \EndFor
    \State \Return $\vx_0$
\end{algorithmic}
\end{algorithm}

\iffine
\begin{algorithm}[h!]
\caption{H-transform finetuning (offline)}
\begin{algorithmic}[1]
\Require Dataset drawn from $\mathcal{P}_\text{data}$ 
\Require Noise schedule $\beta_t = \beta(t), \Bar{\alpha}_t = \Bar{\alpha}(t)$, parametrising process $\mathcal{P}_\text{data} \to \mathcal{P}_\text{sampling}$
\Require Trained noise predictor function $f_\theta(\mathbf{x}_t, t)$ with frozen parameters $\theta$
\Require Untrained h-transform network $h_\phi(\mathbf{x}_t,t, \mathbf{x}^{[M]}, M)$ with parameters $\phi$
\Repeat
\State $\mathbf{x}_0 \sim \mathcal{P}_0 = \mathcal{P}_\text{data}$
\State $t \sim \text{Uniform}(\{1, \dots, T\})$
\State $\mathbf{x}_0^{[M]} \cup \mathbf{x}_0^{[\backslash M]} \gets \mathbf{x}_0$ \Comment{Randomly partition data point into motif and rest}
\LComment{Forward noise full sample via sampling from $p_{0|t}(\mathbf{x}_0)$}
\State $\epsilon_t \sim \mathcal{P}_\text{noise}$
\State $\mathbf{x}_t \gets \sqrt{\Bar{\alpha}_t} \mathbf{x}_0 + \sqrt{1 - \Bar{\alpha}_t} \epsilon_t$
\LComment{estimate noise of sample with trained noise function $f_\theta$ (in \texttt{torch.no\_grad()} mode)}
\State $\hat{\epsilon}_\theta \gets f_\theta(\mathbf{x}_t, t)$
\BeginBox[fill=shadecolor]
\LComment{estimate noise of sample with h-transform network}
\State $\hat{\epsilon}_\phi \gets h_\phi(\mathbf{x}_t,t, \mathbf{x}^{[M]}, M)$
\EndBox
\State Take gradient descent step on \\ \quad \quad \quad $\nabla_\phi L(\epsilon, \hat{\epsilon}_\theta, \hat{\epsilon}_\phi)$ \Comment{Typically $L(\epsilon, \hat{\epsilon}_\theta, \hat{\epsilon}_\phi) = \| (\hat{\epsilon}_\theta + \hat{\epsilon}_\phi) - \epsilon \|^2$}
\Until{converged or max epoch reached}
\end{algorithmic}
\end{algorithm}
\fi

\newpage
\section*{NeurIPS Paper Checklist}

\begin{enumerate}

\item {\bf Claims}
    \item[] Question: Do the main claims made in the abstract and introduction accurately reflect the paper's contributions and scope?
    \item[] Answer: \answerYes{} % Replace by \answerYes{}, \answerNo{}, or \answerNA{}.
    \item[] Justification: We propose a new framework for conditional sampling from diffusion models, referred to as \textit{DEFT} in the paper. This is stated both in the abstract and introduction, see \ref{sec:introduction}.
    \item[] Guidelines:
    \begin{itemize}
        \item The answer NA means that the abstract and introduction do not include the claims made in the paper.
        \item The abstract and/or introduction should clearly state the claims made, including the contributions made in the paper and important assumptions and limitations. A No or NA answer to this question will not be perceived well by the reviewers. 
        \item The claims made should match theoretical and experimental results, and reflect how much the results can be expected to generalize to other settings. 
        \item It is fine to include aspirational goals as motivation as long as it is clear that these goals are not attained by the paper. 
    \end{itemize}

\item {\bf Limitations}
    \item[] Question: Does the paper discuss the limitations of the work performed by the authors?
    \item[] Answer: \answerYes{} % Replace by \answerYes{}, \answerNo{}, or \answerNA{}.
    \item[] Justification: Limitation of \textit{DEFT} are discussed in a special paragraph in Section~\ref{sec:conclusion}.
    \item[] Guidelines:
    \begin{itemize}
        \item The answer NA means that the paper has no limitation while the answer No means that the paper has limitations, but those are not discussed in the paper. 
        \item The authors are encouraged to create a separate "Limitations" section in their paper.
        \item The paper should point out any strong assumptions and how robust the results are to violations of these assumptions (e.g., independence assumptions, noiseless settings, model well-specification, asymptotic approximations only holding locally). The authors should reflect on how these assumptions might be violated in practice and what the implications would be.
        \item The authors should reflect on the scope of the claims made, e.g., if the approach was only tested on a few datasets or with a few runs. In general, empirical results often depend on implicit assumptions, which should be articulated.
        \item The authors should reflect on the factors that influence the performance of the approach. For example, a facial recognition algorithm may perform poorly when image resolution is low or images are taken in low lighting. Or a speech-to-text system might not be used reliably to provide closed captions for online lectures because it fails to handle technical jargon.
        \item The authors should discuss the computational efficiency of the proposed algorithms and how they scale with dataset size.
        \item If applicable, the authors should discuss possible limitations of their approach to address problems of privacy and fairness.
        \item While the authors might fear that complete honesty about limitations might be used by reviewers as grounds for rejection, a worse outcome might be that reviewers discover limitations that aren't acknowledged in the paper. The authors should use their best judgment and recognize that individual actions in favor of transparency play an important role in developing norms that preserve the integrity of the community. Reviewers will be specifically instructed to not penalize honesty concerning limitations.
    \end{itemize}

\item {\bf Theory Assumptions and Proofs}
    \item[] Question: For each theoretical result, does the paper provide the full set of assumptions and a complete (and correct) proof?
    \item[] Answer: \answerYes{} % Replace by \answerYes{}, \answerNo{}, or \answerNA{}.
    \item[] Justification: The assumptions for the proposition are always stated in the corresponding block. Proofs for statements are provided in Appendix~\ref{sec:genh}, with an exception for the optimal control loss. The derivation for this loss function is given in Appendix~\ref{app:stoch_control}.
    \item[] Guidelines:
    \begin{itemize}
        \item The answer NA means that the paper does not include theoretical results. 
        \item All the theorems, formulas, and proofs in the paper should be numbered and cross-referenced.
        \item All assumptions should be clearly stated or referenced in the statement of any theorems.
        \item The proofs can either appear in the main paper or the supplemental material, but if they appear in the supplemental material, the authors are encouraged to provide a short proof sketch to provide intuition. 
        \item Inversely, any informal proof provided in the core of the paper should be complemented by formal proofs provided in appendix or supplemental material.
        \item Theorems and Lemmas that the proof relies upon should be properly referenced. 
    \end{itemize}

    \item {\bf Experimental Result Reproducibility}
    \item[] Question: Does the paper fully disclose all the information needed to reproduce the main experimental results of the paper to the extent that it affects the main claims and/or conclusions of the paper (regardless of whether the code and data are provided or not)?
    \item[] Answer: \answerYes{} % Replace by \answerYes{}, \answerNo{}, or \answerNA{}.
    \item[] Justification: We clearly explain the settings and choices of our experiments either directly in the main paper, see Section~\ref{sec:exp}, or in Appendix~\ref{sec:experimental_details}.
    \item[] Guidelines:
    \begin{itemize}
        \item The answer NA means that the paper does not include experiments.
        \item If the paper includes experiments, a No answer to this question will not be perceived well by the reviewers: Making the paper reproducible is important, regardless of whether the code and data are provided or not.
        \item If the contribution is a dataset and/or model, the authors should describe the steps taken to make their results reproducible or verifiable. 
        \item Depending on the contribution, reproducibility can be accomplished in various ways. For example, if the contribution is a novel architecture, describing the architecture fully might suffice, or if the contribution is a specific model and empirical evaluation, it may be necessary to either make it possible for others to replicate the model with the same dataset, or provide access to the model. In general. releasing code and data is often one good way to accomplish this, but reproducibility can also be provided via detailed instructions for how to replicate the results, access to a hosted model (e.g., in the case of a large language model), releasing of a model checkpoint, or other means that are appropriate to the research performed.
        \item While NeurIPS does not require releasing code, the conference does require all submissions to provide some reasonable avenue for reproducibility, which may depend on the nature of the contribution. For example
        \begin{enumerate}
            \item If the contribution is primarily a new algorithm, the paper should make it clear how to reproduce that algorithm.
            \item If the contribution is primarily a new model architecture, the paper should describe the architecture clearly and fully.
            \item If the contribution is a new model (e.g., a large language model), then there should either be a way to access this model for reproducing the results or a way to reproduce the model (e.g., with an open-source dataset or instructions for how to construct the dataset).
            \item We recognize that reproducibility may be tricky in some cases, in which case authors are welcome to describe the particular way they provide for reproducibility. In the case of closed-source models, it may be that access to the model is limited in some way (e.g., to registered users), but it should be possible for other researchers to have some path to reproducing or verifying the results.
        \end{enumerate}
    \end{itemize}

\item {\bf Open access to data and code}
    \item[] Question: Does the paper provide open access to the data and code, with sufficient instructions to faithfully reproduce the main experimental results, as described in supplemental material?
    \item[] Answer: \answerYes{} % Replace by \answerYes{}, \answerNo{}, or \answerNA{}.
    \item[] Justification: We provide anonymised code for our experiments \href{https://anonymous.4open.science/r/adapt-diffusions-EEC4}{here}, as well as details on how to use the datasets we refer to. All datasets we use are publicly available and open-source.
    \item[] Guidelines:
    \begin{itemize}
        \item The answer NA means that paper does not include experiments requiring code.
        \item Please see the NeurIPS code and data submission guidelines (\url{https://nips.cc/public/guides/CodeSubmissionPolicy}) for more details.
        \item While we encourage the release of code and data, we understand that this might not be possible, so “No” is an acceptable answer. Papers cannot be rejected simply for not including code, unless this is central to the contribution (e.g., for a new open-source benchmark).
        \item The instructions should contain the exact command and environment needed to run to reproduce the results. See the NeurIPS code and data submission guidelines (\url{https://nips.cc/public/guides/CodeSubmissionPolicy}) for more details.
        \item The authors should provide instructions on data access and preparation, including how to access the raw data, preprocessed data, intermediate data, and generated data, etc.
        \item The authors should provide scripts to reproduce all experimental results for the new proposed method and baselines. If only a subset of experiments are reproducible, they should state which ones are omitted from the script and why.
        \item At submission time, to preserve anonymity, the authors should release anonymized versions (if applicable).
        \item Providing as much information as possible in supplemental material (appended to the paper) is recommended, but including URLs to data and code is permitted.
    \end{itemize}

\item {\bf Experimental Setting/Details}
    \item[] Question: Does the paper specify all the training and test details (e.g., data splits, hyperparameters, how they were chosen, type of optimizer, etc.) necessary to understand the results?
    \item[] Answer: \answerYes{} % Replace by \answerYes{}, \answerNo{}, or \answerNA{}.
    \item[] Justification: Full training details are provided in the released code base. Further, the setup is explained in both the experimental section, see Section~\ref{sec:exp}, or in Appendix~\ref{sec:experimental_details}.
    \item[] Guidelines:
    \begin{itemize}
        \item The answer NA means that the paper does not include experiments.
        \item The experimental setting should be presented in the core of the paper to a level of detail that is necessary to appreciate the results and make sense of them.
        \item The full details can be provided either with the code, in appendix, or as supplemental material.
    \end{itemize}

\item {\bf Experiment Statistical Significance}
    \item[] Question: Does the paper report error bars suitably and correctly defined or other appropriate information about the statistical significance of the experiments?
    \item[] Answer: \answerNo{} % Replace by \answerYes{}, \answerNo{}, or \answerNA{}.
    \item[] Justification: Due to the computational expense needed for sampling, providing error bars for conditional sampling results is not standard. 
    \item[] Guidelines:
    \begin{itemize}
        \item The answer NA means that the paper does not include experiments.
        \item The authors should answer "Yes" if the results are accompanied by error bars, confidence intervals, or statistical significance tests, at least for the experiments that support the main claims of the paper.
        \item The factors of variability that the error bars are capturing should be clearly stated (for example, train/test split, initialization, random drawing of some parameter, or overall run with given experimental conditions).
        \item The method for calculating the error bars should be explained (closed form formula, call to a library function, bootstrap, etc.)
        \item The assumptions made should be given (e.g., Normally distributed errors).
        \item It should be clear whether the error bar is the standard deviation or the standard error of the mean.
        \item It is OK to report 1-sigma error bars, but one should state it. The authors should preferably report a 2-sigma error bar than state that they have a 96\% CI, if the hypothesis of Normality of errors is not verified.
        \item For asymmetric distributions, the authors should be careful not to show in tables or figures symmetric error bars that would yield results that are out of range (e.g. negative error rates).
        \item If error bars are reported in tables or plots, The authors should explain in the text how they were calculated and reference the corresponding figures or tables in the text.
    \end{itemize}

\item {\bf Experiments Compute Resources}
    \item[] Question: For each experiment, does the paper provide sufficient information on the computer resources (type of compute workers, memory, time of execution) needed to reproduce the experiments?
    \item[] Answer: \answerYes{} % Replace by \answerYes{}, \answerNo{}, or \answerNA{}.
    \item[] Justification: We provide training times and experimental setup in the relevant sections.
    \item[] Guidelines:
    \begin{itemize}
        \item The answer NA means that the paper does not include experiments.
        \item The paper should indicate the type of compute workers CPU or GPU, internal cluster, or cloud provider, including relevant memory and storage.
        \item The paper should provide the amount of compute required for each of the individual experimental runs as well as estimate the total compute. 
        \item The paper should disclose whether the full research project required more compute than the experiments reported in the paper (e.g., preliminary or failed experiments that didn't make it into the paper). 
    \end{itemize}
    
\item {\bf Code Of Ethics}
    \item[] Question: Does the research conducted in the paper conform, in every respect, with the NeurIPS Code of Ethics \url{https://neurips.cc/public/EthicsGuidelines}?
    \item[] Answer: \answerYes{} % Replace by \answerYes{}, \answerNo{}, or \answerNA{}.
    \item[] Justification: The authors have reviewed the NeurIPS ethics guidelines and conducted the research according to these guidelines.
    \item[] Guidelines:
    \begin{itemize}
        \item The answer NA means that the authors have not reviewed the NeurIPS Code of Ethics.
        \item If the authors answer No, they should explain the special circumstances that require a deviation from the Code of Ethics.
        \item The authors should make sure to preserve anonymity (e.g., if there is a special consideration due to laws or regulations in their jurisdiction).
    \end{itemize}

\item {\bf Broader Impacts}
    \item[] Question: Does the paper discuss both potential positive societal impacts and negative societal impacts of the work performed?
    \item[] Answer: \answerYes{} % Replace by \answerYes{}, \answerNo{}, or \answerNA{}.
    \item[] Justification: As DEFT is a generative model, it suffers from the sample issues as other generative approaches. The model can easily reproduce biases inherent in the training data. Further, due to the ability for conditional sampling, i.e., drawing samples according to specific constraints, the method can be used for ``deep fakes'' or misinformation. However, these impacts do not only apply to DEFT, but to other conditional sampling method as well. We added a sentence to the conclusion. 
    \item[] Guidelines:
    \begin{itemize}
        \item The answer NA means that there is no societal impact of the work performed.
        \item If the authors answer NA or No, they should explain why their work has no societal impact or why the paper does not address societal impact.
        \item Examples of negative societal impacts include potential malicious or unintended uses (e.g., disinformation, generating fake profiles, surveillance), fairness considerations (e.g., deployment of technologies that could make decisions that unfairly impact specific groups), privacy considerations, and security considerations.
        \item The conference expects that many papers will be foundational research and not tied to particular applications, let alone deployments. However, if there is a direct path to any negative applications, the authors should point it out. For example, it is legitimate to point out that an improvement in the quality of generative models could be used to generate deepfakes for disinformation. On the other hand, it is not needed to point out that a generic algorithm for optimizing neural networks could enable people to train models that generate Deepfakes faster.
        \item The authors should consider possible harms that could arise when the technology is being used as intended and functioning correctly, harms that could arise when the technology is being used as intended but gives incorrect results, and harms following from (intentional or unintentional) misuse of the technology.
        \item If there are negative societal impacts, the authors could also discuss possible mitigation strategies (e.g., gated release of models, providing defenses in addition to attacks, mechanisms for monitoring misuse, mechanisms to monitor how a system learns from feedback over time, improving the efficiency and accessibility of ML).
    \end{itemize}
    
\item {\bf Safeguards}
    \item[] Question: Does the paper describe safeguards that have been put in place for responsible release of data or models that have a high risk for misuse (e.g., pretrained language models, image generators, or scraped datasets)?
    \item[] Answer: \answerNA{} % Replace by \answerYes{}, \answerNo{}, or \answerNA{}.
    \item[] Justification: We only use pre-trained diffusion models, which are already publicly available in open-source code bases or publicly available datasets.
    \item[] Guidelines:
    \begin{itemize}
        \item The answer NA means that the paper poses no such risks.
        \item Released models that have a high risk for misuse or dual-use should be released with necessary safeguards to allow for controlled use of the model, for example by requiring that users adhere to usage guidelines or restrictions to access the model or implementing safety filters. 
        \item Datasets that have been scraped from the Internet could pose safety risks. The authors should describe how they avoided releasing unsafe images.
        \item We recognize that providing effective safeguards is challenging, and many papers do not require this, but we encourage authors to take this into account and make a best faith effort.
    \end{itemize}

\item {\bf Licenses for existing assets}
    \item[] Question: Are the creators or original owners of assets (e.g., code, data, models), used in the paper, properly credited and are the license and terms of use explicitly mentioned and properly respected?
    \item[] Answer: \answerYes{} % Replace by \answerYes{}, \answerNo{}, or \answerNA{}.
    \item[] Justification: Where applicable, we provide references to the code bases, datasets and implementation used. 
    \item[] Guidelines:
    \begin{itemize}
        \item The answer NA means that the paper does not use existing assets.
        \item The authors should cite the original paper that produced the code package or dataset.
        \item The authors should state which version of the asset is used and, if possible, include a URL.
        \item The name of the license (e.g., CC-BY 4.0) should be included for each asset.
        \item For scraped data from a particular source (e.g., website), the copyright and terms of service of that source should be provided.
        \item If assets are released, the license, copyright information, and terms of use in the package should be provided. For popular datasets, \url{paperswithcode.com/datasets} has curated licenses for some datasets. Their licensing guide can help determine the license of a dataset.
        \item For existing datasets that are re-packaged, both the original license and the license of the derived asset (if it has changed) should be provided.
        \item If this information is not available online, the authors are encouraged to reach out to the asset's creators.
    \end{itemize}

\item {\bf New Assets}
    \item[] Question: Are new assets introduced in the paper well documented and is the documentation provided alongside the assets?
    \item[] Answer: \answerYes{} % Replace by \answerYes{}, \answerNo{}, or \answerNA{}.
    \item[] Justification: We provide a framework for conditional sampling from diffusion models. The codebase is provided as an anonymized Github repository. We re-use datasets and models with open licenses. 
    \item[] Guidelines:
    \begin{itemize}
        \item The answer NA means that the paper does not release new assets.
        \item Researchers should communicate the details of the dataset/code/model as part of their submissions via structured templates. This includes details about training, license, limitations, etc. 
        \item The paper should discuss whether and how consent was obtained from people whose asset is used.
        \item At submission time, remember to anonymize your assets (if applicable). You can either create an anonymized URL or include an anonymized zip file.
    \end{itemize}

\item {\bf Crowdsourcing and Research with Human Subjects}
    \item[] Question: For crowdsourcing experiments and research with human subjects, does the paper include the full text of instructions given to participants and screenshots, if applicable, as well as details about compensation (if any)? 
    \item[] Answer: \answerNA{} % Replace by \answerYes{}, \answerNo{}, or \answerNA{}.
    \item[] Justification: The paper does not involve crowdsourcing or research with human subjects.
    \item[] Guidelines:
    \begin{itemize}
        \item The answer NA means that the paper does not involve crowdsourcing nor research with human subjects.
        \item Including this information in the supplemental material is fine, but if the main contribution of the paper involves human subjects, then as much detail as possible should be included in the main paper. 
        \item According to the NeurIPS Code of Ethics, workers involved in data collection, curation, or other labor should be paid at least the minimum wage in the country of the data collector. 
    \end{itemize}

\item {\bf Institutional Review Board (IRB) Approvals or Equivalent for Research with Human Subjects}
    \item[] Question: Does the paper describe potential risks incurred by study participants, whether such risks were disclosed to the subjects, and whether Institutional Review Board (IRB) approvals (or an equivalent approval/review based on the requirements of your country or institution) were obtained?
    \item[] Answer: \answerNA{} % Replace by \answerYes{}, \answerNo{}, or \answerNA{}.
    \item[] Justification: The paper does not involve crowdsourcing or research with human subjects.
    \item[] Guidelines:
    \begin{itemize}
        \item The answer NA means that the paper does not involve crowdsourcing nor research with human subjects.
        \item Depending on the country in which research is conducted, IRB approval (or equivalent) may be required for any human subjects research. If you obtained IRB approval, you should clearly state this in the paper. 
        \item We recognize that the procedures for this may vary significantly between institutions and locations, and we expect authors to adhere to the NeurIPS Code of Ethics and the guidelines for their institution. 
        \item For initial submissions, do not include any information that would break anonymity (if applicable), such as the institution conducting the review.
    \end{itemize}

\end{enumerate}

\end{document}